\documentclass{article}

 \usepackage[preprint]{neurips_2026}

\usepackage[utf8]{inputenc} 
\usepackage[T1]{fontenc}    
\usepackage{hyperref}       
\usepackage{url}            
\usepackage{booktabs}       
\usepackage{amsfonts}       
\usepackage{nicefrac}       
\usepackage{microtype}      
\usepackage{xcolor}         
\usepackage{amsmath,amssymb}
\usepackage{mathtools}
\usepackage{amsthm}
\usepackage{graphicx}
\usepackage{tabularx}
\usepackage{caption}
\usepackage{subcaption}
\newcommand\variables{\ensuremath{\mathbf{X}}}
\newcommand\noises{\ensuremath{\mathbf{U}}}
\newcommand\stable{\ensuremath{\mathbf{S}}}
\newcommand\weightmatrix{\ensuremath{\mathbf{W}}}
\newcommand\covariatematrix{\ensuremath{\mathbf{D}}}
\newcommand\pa{\ensuremath{Pa}}
\newcommand\ch{\ensuremath{Ch}}
\newcommand\mb{\textrm{BL}}
\newcommand\bd{\textrm{BD}}
\def\gG{{\mathcal{G}}} 

\newcommand\ourmethod{StableRCA}
\DeclareMathOperator{\EX}{\mathbb{E}}

\theoremstyle{plain}
\newtheorem{theorem}{Theorem}[section]

\newtheorem{lemma}[theorem]{Lemma}
\newtheorem{corollary}[theorem]{Corollary}
\theoremstyle{definition}
\newtheorem{definition}[theorem]{Definition}
\newtheorem{assumption}[theorem]{Assumption}
\theoremstyle{remark}

\title{\ourmethod: Robust Graph-Agnostic Mechanism-Level Root Cause Analysis}

\author{%
  Xiaoyu Lin\thanks{Equal Contribution} \\
  Department of Computer Science\\
  Tsinghua University\\
  \texttt{xiaoyulin@mail.tsinghua.edu.cn} \\
  \And
  Nicholas Tagliapietra\footnotemark[1] \\
  Bosch Center for Artificial Intelligence\\ Renningen, Germany \\
  Computer Science Department\\
  TU Darmstadt, Germany\\
  \texttt{nicholas.tagliapietra@de.bosch.com}\\
  \And
  Kehan Li\footnotemark[1] \\
  Department of Computer Science\\
  Tsinghua University\\
  \texttt{lkh20@mails.tsinghua.edu.cn} \\
  \And
  Lavdim Halilaj \\
  Bosch Center for Artificial Intelligence\\
  Renningen, Germany \\
  \texttt{Lavdim.Halilaj@de.bosch.com} \\
  \And
  Juergen Luettin \\
  Bosch Center for Artificial Intelligence\\
  Renningen, Germany \\
  \texttt{Juergen.Luettin@de.bosch.com} \\
}

\begin{document}

\maketitle

\begin{abstract}
  Root-Cause Analysis (RCA) seeks to identify the variables responsible for abnormal system behavior in complex domains such as manufacturing, cloud computing, and healthcare. Existing approaches face a critical bottleneck: graph-based causal methods can identify intervention targets but typically require a known or accurately estimated causal graph, while graph-free statistical methods either localize marginal anomalies rather than structural causes, or rely on restrictive assumptions about graph structure or functional form. We propose \ourmethod, a local mechanism-level RCA framework that avoids global graph discovery by estimating local Markov boundaries and detecting conditional distribution shifts within them. Leveraging the Independent Causal Mechanism principle, we show that intervention targets can be identified with probability converging exponentially in sample size under faithful Markov boundary recovery and non-degenerate mechanism shifts. Experiments on synthetic benchmarks and five real-world datasets demonstrate that \ourmethod\ is robust to graph misspecification, effective under multiple intervention targets, scalable to large systems, and reliable across diverse application domains. Code is available at: \url{https://anonymous.4open.science/r/StableRCA-E362}

\end{abstract}

\section{Introduction}\label{sec:introduction}

Root-Cause Analysis (RCA) aims to identify the underlying factors responsible for abnormal system behavior. It is a central task in many high-stakes and large-scale domains, including manufacturing \citep{10.1007/s10845-022-01914-3, Papageorgiou2022RCAReview}, IT services and cloud systems \citep{solé2017surveymodelstechniquesrootcause, 10.1145/3501297}, healthcare \citep{Kellogg2017RCAPatientSafety}, and medicine \citep{Wu2008RCAJAMA}. In such systems, anomalies often propagate through complex dependencies: many variables may appear abnormal, while only a small subset corresponds to the causal mechanisms that actually changed. The central challenge is therefore to distinguish \emph{true root causes} from downstream effects.

Graph-based causal RCA methods address this problem by leveraging a causal graph and tracing anomaly propagation along graph structures. When the graph is accurate, these methods can provide interpretable causal diagnoses. In practice, however, reliable global causal graphs are rarely available. Learning them from observational data is statistically fragile and computationally expensive in high-dimensional systems, especially when variables are heterogeneous and dependencies are nonlinear. Consequently, graph-based RCA can degrade severely under graph misspecification. In contrast, graph-free statistical methods avoid causal graph learning, but often rank variables by marginal distribution shifts, anomaly scores, or statistical discrepancies. Such signals are insufficient for mechanism-level RCA because downstream variables can exhibit strong marginal anomalies even when their own causal mechanisms remain unchanged. Existing methods that relax full-graph requirements usually rely on localized conditional-independence testing, partial causal structures, or restrictive assumptions on graph topology, intervention type, or functional form.

In this work, we focuses on \textit{population-level} RCA, where the goal is to identify variables whose mechanisms drive systematic distributional changes between normal and abnormal regimes. This setting differs from \textit{sample-level} RCA, which explains individual anomalous instances and may be sensitive to noise or instance-specific fluctuations. Under this setting, we propose \ourmethod, a graph-agnostic framework for mechanism-level RCA. The key insight is that root causes and downstream affected variables behave differently under local conditioning. An intervention induces marginal shifts not only on the intervened variable but also on its descendants. However, under the Independent Causal Mechanisms principle, downstream variables preserve their conditional mechanisms once conditioned on their local Markov boundaries (MB), whereas the true intervention target exhibits a conditional distribution shift. Based on this observation, \ourmethod\ first detects variables with marginal shifts, then estimates their local MBs, and finally ranks candidates by the strength of their conditional distribution shifts. 

This design offers three practical advantages. First, it avoids global structure discovery by relying on local conditioning sets rather than a complete system topology. Second, it separates true mechanism changes from propagated marginal anomalies, reducing false positives among downstream variables. Third, it accommodates high-dimensional and heterogeneous data by combining stable local causal variable selection with predictive risk-based conditional shift detection.

Our contributions are summarized as follows:
\begin{itemize}
    \item We introduce \ourmethod, a graph-agnostic framework for mechanism-level RCA. It identifies root causes by combining marginal shift screening, local MB estimation, and conditional distribution shift detection, without requiring a known or learned global causal graph.
    
    \item We establish theoretical conditions under which intervention targets are identifiable from downstream affected variables via conditional distribution shifts relative to their MBs. We further establish finite-sample identification guarantees, showing that under appropriate non-degenerate mechanism-shift assumptions, the probability of correct recovery converges exponentially with sample size.
    
    \item We conduct extensive experiments on synthetic and real-world benchmarks, covering graph misspecification, multiple intervention targets, large-scale graphs, and diverse application domains. The results show that \ourmethod\ achieves strong accuracy and robustness, and provides a favorable accuracy-efficiency trade-off compared with other RCA baselines.
\end{itemize}

\begin{figure*}[tb]

   \begin{subfigure}[b]{0.49\textwidth}
        \centering
        \includegraphics[height=4cm]{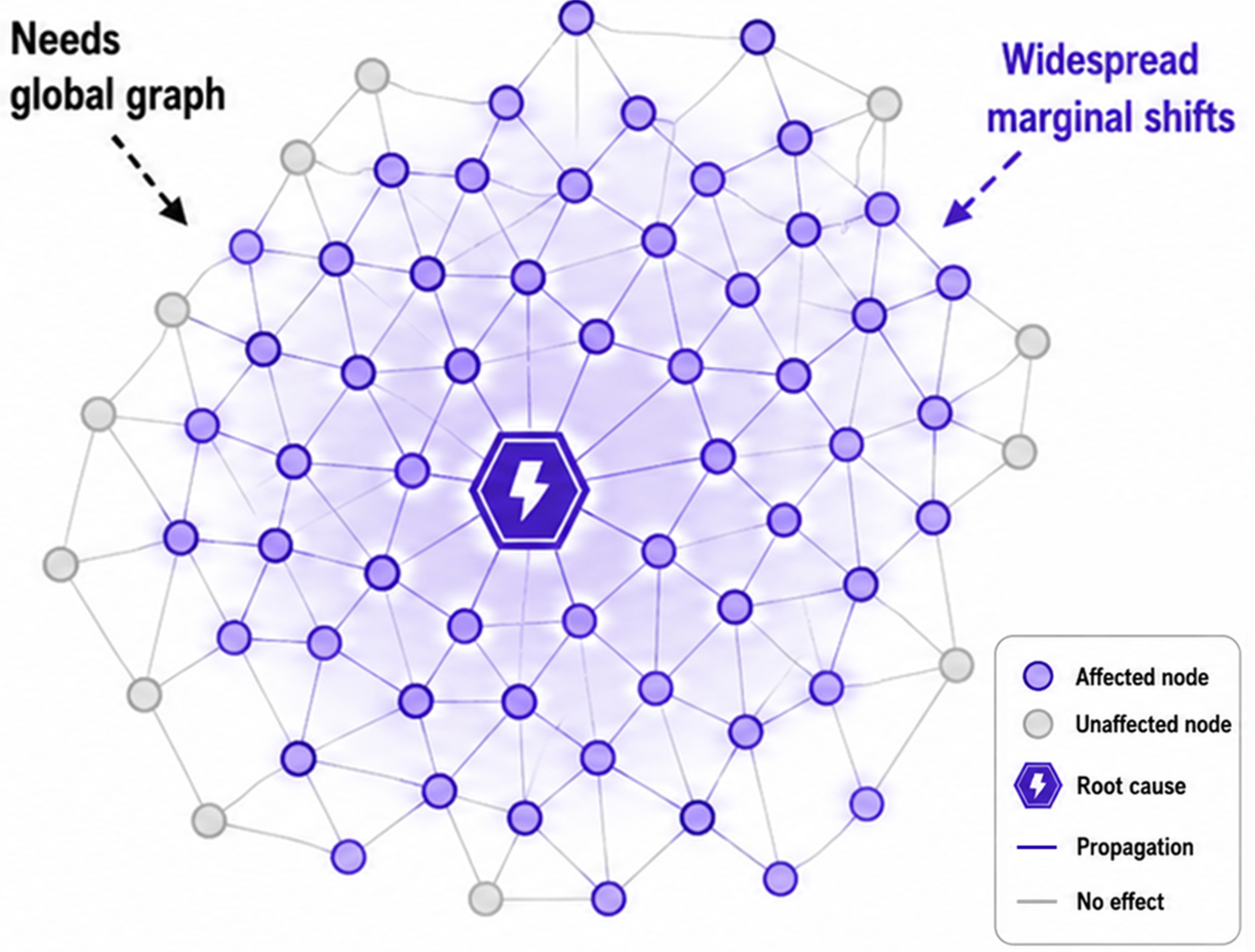}
        \caption{\small Traditional Graph-Dependent RCA}
        \label{fig:example_1}
    \end{subfigure}
    \hfill
    \begin{subfigure}[b]{0.49\textwidth}
        \centering
        \includegraphics[height=4cm]{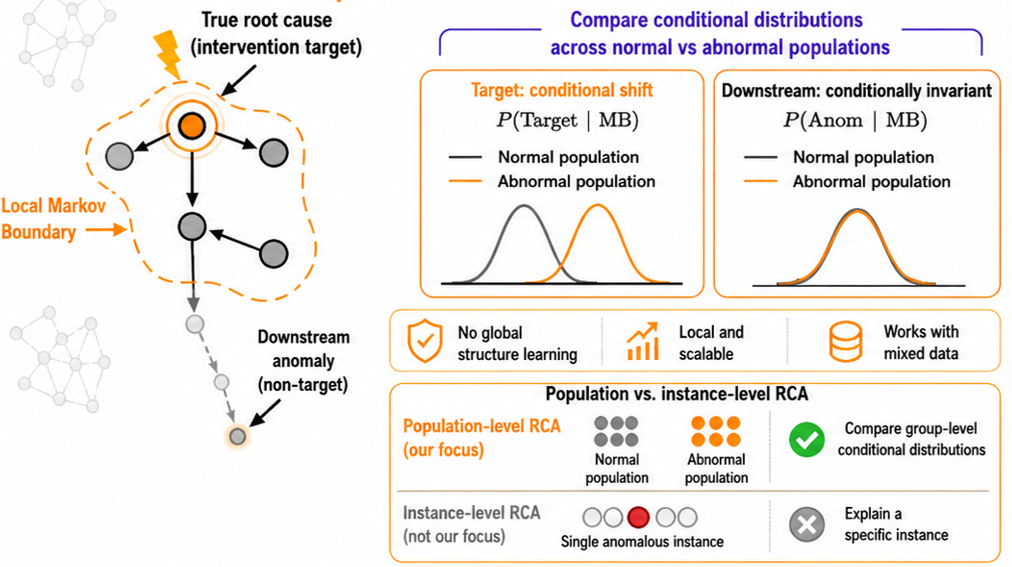}
        \caption{\small \ourmethod\ Population-Level Mechanism RCA}
        \label{fig:example_2}
    \end{subfigure}

    \captionsetup{width=\textwidth}
    \caption{\textbf{Motivation and Contribution}. : a) Traditional Graph-Dependent RCA, and b) StableRCA Population-Level Mechanism RCA.}
    \label{fig:motivation_picture}

\end{figure*}



\section{Related Work}\label{sec:related_work}

\textbf{Graph-Based RCA.}
Many RCA methods localize root causes using a known, constructed, or learned global graph. Early approaches, especially in microservice systems, build dependency, causal, or impact graphs from domain knowledge, system topology, or observational data, and then rank candidate causes using graph traversal or scoring procedures such as random walks \citep{wang2018cloudranger, 10.1145/3366423.3380111, 10.1145/3580305.3599392, 10.1145/3589334.3645442}, PageRank \citep{10.1016/j.jss.2023.111724, 10.1609/aaai.v38i1.27772}, or depth-first search \citep{6848128, 10.1007/978-3-030-03596-9_1}. However, such topological heuristics can conflate correlation or graph proximity with causal influence. More recent methods formulate RCA as intervention-target identification under Structural Causal Models (SCMs). For example, CIRCA identifies root causes by measuring changes in conditional distributions given parent variables on a causal Bayesian network \citep{10.1145/3534678.3539041}. Under the assumption that the SCM is known, \citet{budhathoki2022causal} define the root causes of outliers as interventions on exogenous noise variables, and identify them by simulating counterfactual distributions via noise randomization. Despite the differences in modeling and inference, such methods rely on a reasonably accurate global graph, which is often difficult to obtain in high-dimensional real-world settings.

\textbf{RCA Without Full Structural Knowledge.}
Several methods relax the need for a fully known causal graph. RCD introduces an auxiliary intervention indicator and performs hierarchical localized conditional-independence tests to identify root-cause candidates \citep{10.5555/3600270.3602529}. RCG instead uses an offline-learned partial graph, such as a CPDAG or mixed graph, and ranks variables by conditional dependence with the intervention indicator given possible parents \citep{ikram2025partialrca}. Under a single-root-cause polytree graph setting, Score Ordering ranks variables by marginal anomaly scores without a graph; the same work also proposes Smooth Traversal for the graph-known setting, identifying the root cause through parent-child score differences \citep{orchard2025root}. Likewise, \citet{10.1093/jrsssb/qkaf066} establishes identifiability in linear acyclic SCMs with a single mean-shift intervention via an invariance property induced by variable permutations and Cholesky decomposition. Domain-specific approaches such as PRISM further reduce structural requirements by exploiting component-level knowledge in microservice systems \citep{pham2026graphfree}. Overall, existing RCA methods without full structural knowledge typically rely on localized CI testing, partial causal graphs, or restrictive assumptions on graph structure, intervention type, or application domain. In contrast, our method combines data-driven local MB estimation with conditional distribution shift detection for mechanism-level RCA, without requiring either a known global graph or an offline-learned partial causal structure.

\textbf{Non-Causal RCA Methods.}
Another line of work approaches RCA through anomaly detection or statistical testing without explicit causal modeling. For example, $\epsilon$-Diagnosis identifies root causes using two-sample testing and $\epsilon$-statistics \citep{10.1145/3308558.3313653}, while BARO combines multivariate Bayesian online change-point detection with nonparametric hypothesis testing for root-cause localization \citep{pham2024baro}. These methods can be efficient and practically useful, but they do not explicitly reason about interventions or causal propagation, making them vulnerable to spurious associations and downstream anomaly effects.

\begin{figure*}[t]
    \centering
    \includegraphics[height=4.5cm, width=0.95\linewidth]{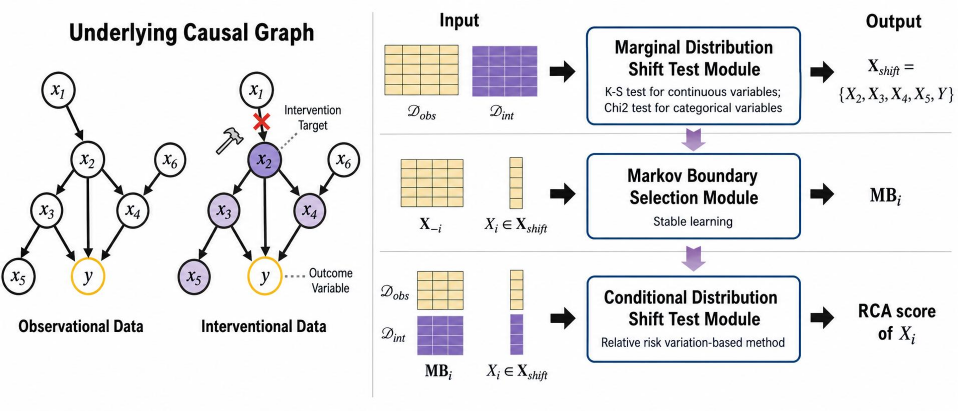}
    \caption{Illustration of the \ourmethod\ framework. It comprises three main phases: 1) Marginal Distribution Shift Detection; 2) MB Identification; and 3) Conditional Distribution Shift Detection.}
    \label{fig:model}
\end{figure*}

\textbf{Invariant Prediction under Distribution Shift.} 
Invariant prediction is closely related but aims at a different objective. It uses distributional changes across environments to identify stable predictive relationships and remove spurious associations for robust generalization, whereas RCA uses distributional changes to localize the intervention targets responsible for abnormal behavior. \citet{Peters2016InvariantPrediction} connect cross-environment invariance with causal inference by identifying predictors whose conditional relationship with the target remains invariant. Building on this perspective, Stable Learning identifies stable predictors under covariate shift through independence-driven importance weighting \citep{Kuang_Xiong_Cui_Athey_Li_2020, shen2019stablelearningsamplereweighting, Zhang_2021_CVPR, pmlr-v162-xu22o}. In our framework, we borrow this idea as a building block for robust local MB estimation, while root-cause identification is achieved through subsequent conditional distribution shift detection.

\section{Preliminaries}\label{sec:background}

\paragraph{Notation} Let $\mathbf{X}=(X_1, ..., X_d)^T \in \mathbb{R}^d$ denote the system variables, where $X_i$ represents the $i$-th variable. We denote by $\mathcal{D}_{\textrm{obs}}=\{\mathbf{X}^{(n)}_{\textrm{obs}}\}_{n=1}^{N_{\textrm{obs}}}$ the observational dataset collected under normal operation, and by $\mathcal{D}_{\textrm{int}}=\{\mathbf{X}^{(n)}_{\textrm{int}}\}_{n=1}^{N_{\textrm{int}}}$ the interventional dataset collected under anomalous regime.

\subsection{SCMs, Interventions, and Markov boundaries}
We assume that the system variables $\variables=\{X_1,\ldots,X_D\}$ are generated by a causally sufficient and faithful SCM whose causal graph $\gG$ is a DAG \citep{Pearl2009,Spirtes2000CausationPredictionSearch}. The observational distribution under normal operating conditions is denoted by $P(\variables)$. For each variable $X_i$, we write $\pa_i$, $\ch_i$, and $\mathrm{Sp}_i$ for its parents, children, and spouses in $\gG$, respectively. Its Markov boundary is
\[
    \mathrm{MB}(X_i)=\pa_i \cup \ch_i \cup \mathrm{Sp}_i,
\]
which is the minimal local conditioning set that renders $X_i$ independent of the remaining variables under faithfulness.

As in \citep{10.5555/3600270.3602529}, we model population-level systematic anomalous behavior (the root-cause) as an intervention on a certain target variable $X_i$. Specifically, if $X_i$ is generated by the structural equation $X_i=f_i(\pa_i,U_i)$, where $U_i$ denotes exogenous noise, an intervention replaces the original causal mechanism $f_i$ of $X_i$ with another mechanism $\tilde f_i$, which induces an interventional SCM $\mathcal{M}^I$ and its associated marginal distribution $P^I(\variables)$ incorporating the abnormality. We assume that interventions are local, i.e.\, that each intervention only affects the causal mechanism of its respective target target variable.

Our analysis relies on the Independent Causal Mechanisms, or modularity, assumption \citep{Pearl2009,scholkopf2021causalrepresentationlearning}: causal mechanisms are autonomous, so an intervention on $X_i$ changes the mechanism of $X_i$ but leaves non-target mechanisms unchanged. Consequently, downstream variables may exhibit marginal distribution shifts due to anomaly propagation, but their conditional mechanisms remain invariant when conditioned on the appropriate local Markov boundary. This observation motivates identifying root causes through conditional distribution shifts of the form
\[
    P(X_i \mid \mathrm{MB}(X_i)) 
    \neq 
    P^I(X_i \mid \mathrm{MB}(X_i)).
\]

\subsection{Stable Learning Background}

Stable Learning seeks predictors that remain reliable under distribution shift by relying on variables whose predictive relationship with the target is stable, rather than on spurious correlations that may vary across environments \citep{Kuang_Xiong_Cui_Athey_Li_2020,shen2019stablelearningsamplereweighting}. In this work, we use this idea as a tool for estimating local Markov boundaries.

\begin{definition}[Stable Set]
A stable variable set of $X_i$ under distribution $P$, denoted by $\mathrm{Stable}(X_i)$, is any subset $\stable_i \subseteq \variables\setminus\{X_i\}$ such that
\begin{equation}
    \mathbb{E}_{P}[X_i \mid \variables]
    =
    \mathbb{E}_{P}[X_i \mid \stable_i].
    \label{eq:stable_set_1}
\end{equation}
A minimal stable set is an inclusion-minimal subset satisfying Eq.~\eqref{eq:stable_set_1}.
\end{definition}

Intuitively, a minimal stable set preserves the conditional-mean predictive information about $X_i$, while discarding variables whose contribution is redundant or unstable.

\begin{assumption}[Additive Noise Model]\label{assm:additive_noise_model}
Each variable $X_i\in\variables$ follows
\[
    X_i = f_i(\pa_i) + U_i, 
    \qquad \mathbb{E}[U_i]=0,
\]
where the exogenous noise is conditionally independent of non-boundary variables:
\[
    U_i \perp\!\!\!\perp 
    \variables \setminus \bigl(\{X_i\}\cup \mathrm{MB}(X_i)\bigr)
    \mid \mathrm{MB}(X_i).
\]
\end{assumption}

Under standard positivity and causal regularity assumptions, Stable Learning can identify the minimal stable set for a target variable \citep{pmlr-v162-xu22o}. Moreover, under Assumption~\ref{assm:additive_noise_model}, the minimal stable set for predicting $X_i$ coincides with its Markov boundary $\mathrm{MB}(X_i)$. This connection allows us to use Stable Learning as a local MB estimation tool.

We instantiate Stable Learning with the Sample Reweighted Decorrelation Operator (SRDO) \citep{shen2019stablelearningsamplereweighting}. SRDO learns sample weights that reduce spurious dependence among covariates, encouraging the predictor to rely on stable variables rather than unstable correlations. In StableRCA, SRDO is used only to obtain robust local estimates of $\mathrm{MB}(X_i)$; root causes are subsequently identified by conditional distribution shift tests within these estimated Markov boundaries.

\section{Method}\label{sec:method}

\ourmethod\ identifies intervention targets from observational and anomalous data through a three-phase procedure, without requiring a known global causal graph. It first screens variables that exhibit marginal distribution shifts, then estimates a local MB for each candidate, and finally ranks candidates according to the magnitude of their conditional distribution shift given the estimated MB. Figure~\ref{fig:model} illustrates the overall framework and its three phases, which are described below. Implementation details are provided in Appendix~\ref{appendix:stablerca_implementation}.

\textbf{Phase 1: Marginal Distribution Shift Detection.} In the first phase, we identify variables whose marginal distributions are affected by the anomaly. Specifically, for each variable, we test for distributional differences between $\mathcal{D}_{\textrm{obs}}$ and $\mathcal{D}_{\textrm{int}}$. We apply the Kolmogorov–Smirnov test to continuous variables \citep{conover1999practical} and the $\chi^2$ test to discrete variables \citep{pearsonchisquared}. This phase yields a subset of variables with significant marginal shifts, denoted by $\mathbf{X}_{\textrm{shift}}$, defined as those for which the null hypothesis is rejected at level $\alpha$. Under our causal assumptions, this set is expected to include the intervened variable together with its descendants.

\textbf{Phase 2: Markov Boundary Identification.} In the second phase, for each variable $X_i \in \mathbf{X}_{\textrm{shift}}$, we estimate its Markov boundary $\textrm{MB}_i$ by applying the SRDO method introduced in Section~\ref{sec:background} to the observational dataset $\mathcal{D}_\textrm{obs}$. Concretely, we treat $X_i$ as the target variable and use the remaining variables, $\mathbf{X}_{-i} = \mathbf{X} \setminus X_i$ as predictors. SRDO first estimates sample weights via an importance-driven weighting procedure to reduce spurious associations among predictors. Using these weights, we then fit a weighted regression model for continuous targets or a weighted classification model for discrete targets, and use the resulting feature importances to obtain a Markov boundary estimate for $X_i$. The specific model class and feature-selection procedure are described in Appendix~\ref{appendix:stablerca_implementation}.

\textbf{Phase 3: Conditional Distribution Shift Detection.}
In the third phase, we identify variables within $\mathbf{X}_{\textrm{shift}}$ whose conditional distributions given their Markov boundaries, $P(X_i \mid \textrm{MB}_i)$, exhibit significant shifts between the observational dataset $\mathcal{D}_{\textrm{obs}}$ and the interventional dataset $\mathcal{D}_{\textrm{int}}$. Here, $\textrm{MB}_i$ denotes the Markov boundary estimate obtained in Phase 2. To quantify such conditional shifts, we train a predictive model $\hat{f}_i(\textrm{MB}_i)$ for each variable $X_i$ using a training split of $\mathcal{D}_{\textrm{obs}}$, and evaluate it on both a held-out split of $\mathcal{D}_{\textrm{obs}}$ and on $\mathcal{D}_{\textrm{int}}$. The resulting degradation in predictive performance serves as an indicator of the conditional distributional changes. Formally, we define the risk difference for variable $X_i$ as
\begin{align}
    \Delta R_i = R_i^{P^I} - R_i^{P} 
    = \mathbb{E}_{(\textrm{MB}_i, X_i) \sim P^{I}} [l(\hat{f}_i(\textrm{MB}_i), X_i)] 
    - \mathbb{E}_{(\textrm{MB}_i, X_i) \sim P} [l(\hat{f}_i(\textrm{MB}_i), X_i)],
\end{align}
where $P^I$ and $P$ denote the interventional and observational distributions, respectively, and $l(\cdot, \cdot)$ is a task-dependent loss function. In our experiments, we use the mean squared error for regression tasks and cross-entropy loss for classification tasks. To disentangle conditional distribution shift from covariate shift induced by changes in the marginal distribution $P(\textrm{MB}_i)$, we estimate the expected risk under $P^I$ using importance weighting \citep{Shimodaira2000}. Specifically, samples from $\mathcal{D}_{\textrm{int}}$ are reweighted so that the marginal distribution of $\textrm{MB}_i$ matches that under $P$. We then define the root-cause score for $X_i$ as the \emph{Relative Risk Variation} (RRV):
\begin{equation}
    S(X_i) = \frac{R_{i, \textrm{weighted}}^{P^I} - R_{i}^{P}}{R_{i}^{P}}.\label{eq:rca_score}
\end{equation}
Variables with larger RRV scores are ranked as more likely root causes.

\section{Theoretical Analysis}\label{sec:theory}
In this section, we provide the main end-to-end theoretical result on how \ourmethod\ discovers the true interventional target.
In Appendix \ref{appendix:theory}, we provide the complete theoretical framework along with proofs, both in the ideal (large-sample limit and perfect stable set learned) and non-ideal case (finite-data, and possibly inaccurate stable set).

We prove that intervened nodes have a larger drop in performance on the interventional data (used at test time) with respect to the observational data (used for training). Such drop results in intervened nodes to be scored higher than non-intervened ones. The intuitive reason for the drop is that the learned stable set enables accurate predictions in scenarios sharing the same causal structure as the observational domain. After intervention, however, the stable set of the intervened nodes will change, therefore making predictions using the "old" stable set learned in the observational setting will lead to poor predictions.

In Appendix \ref{appendix:theory_ideal} we develop the theory in the ideal case, i.e.\, in the population limit of $n\to +\infty$. In the real-world, however, data is finite, therefore we developed in Appendix \ref{appendix:theory_non_ideal} robustness guarantees on the probability of identifying the correct interventional target. Those guarantees are tailored to SRDO \citep{shen2019stablelearningsamplereweighting}, which is the instantiation of stable learning that we adopted in this work. 
\begin{theorem}[End-to-End Finite-Sample Identification]\label{th:end_to_end_identification}
Let $X_t$ be the true intervention target with marginal shift magnitude $\delta_{shift} > 0$ and structural intervention strength $\nu > 0$. Let $N$ be the sample size. Suppose that the true intervention signal dominates the stable set approximation error, i.e.\, that the following \textit{Identifiability Margin Condition}:
\begin{equation}
    \nu > C \cdot \epsilon_{dec} + \eta,
\end{equation}
where $\nu$ is the structural intervention strength in Def~\ref{def:structural_intervention_strength}, $\epsilon_{\mathrm{dec}}$ is the \emph{decorrelation error} in Def.~\ref{def:decorrelation_error}, $C>0$ is a universal constant, and $\eta>0$ is a separation margin. Then, with probability at least:
\begin{align}
    P(\text{Success}) \ge 1 - \underbrace{\Lambda(N, \delta_{shift}, K)}_{\text{Phase 1 Failure}} - \underbrace{\exp\left( - N \eta^2 / 8 \right)}_{\text{Phase 2 Failure}} \nonumber \\
    - \underbrace{8 (|\hat{\mathcal{K}}| - 1) e^{- \frac{N \eta^2}{8 M^2}}}_{\text{Phase 3 Failure}},
\end{align}
\ourmethod\ correctly identifies $X_t$ as the unique target. Here, $M$ denotes a uniform bound on the empirical conditional-shift statistic and $|\hat{\mathcal{K}}|$ denotes the number of Phase~1 candidates.
$\Lambda(\cdot)$ is the bound on failing to detect the marginal shift in Phase~1 (Type II error), defined based on the variable type of $X_t$:
\begin{equation}
\Lambda =
\begin{cases}
2 \exp\!\left( -N c_{\mathrm{cont}} \right),
& \text{if } X_t \text{ is continuous}, \\[1mm]
2(2^K - 2) \exp\!\left( -N c_{\mathrm{disc}} \right),
& \text{if } X_t \text{ is discrete},
\end{cases}
\end{equation}
with
\[
c_{\mathrm{cont}} := 2(\delta_{\mathrm{shift}} - \tau_{\mathrm{cont}})^2,
\qquad
c_{\mathrm{disc}} := \tfrac{1}{2}(\delta_{\mathrm{shift}} - \tau_{\mathrm{disc}})^2.
\]
Here, $K$ is the number of categories and $\tau_{\mathrm{cont}}, \tau_{\mathrm{disc}}$ denote the corresponding test thresholds.
\end{theorem}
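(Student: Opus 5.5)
The proof decomposes the success event into three per-phase events and bounds the probability of each failure separately. Let $E_1$ be the event that Phase~1 places $X_t$ in $\hat{\mathcal{K}}=\mathbf{X}_{\mathrm{shift}}$. Let $E_2$ be the event that the SRDO estimate of the stable set of $X_t$ is accurate enough that the population risk gap of $X_t$ still exceeds that of every non-target by at least $\eta$. Let $E_3$ be the event that the empirical RRV scores preserve this ordering. Then $\{\text{Success}\}\supseteq E_1\cap E_2\cap E_3$. A union bound gives $P(\text{Success})\ge 1-P(E_1^c)-P(E_2^c)-P(E_3^c)$, which is exactly the structure of the displayed bound. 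Each term is then controlled by a concentration inequality matched to the statistic used in that phase.

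\textbf{Phase 1.} For continuous $X_t$, the population KS distance between $P(X_t)$ and $P^I(X_t)$ is at least $\delta_{\mathrm{shift}}$. Apply the Dvoretzky--Kiefer--Wolfowitz inequality to both empirical CDFs and use the triangle inequality. The empirical KS statistic then falls below the threshold $\tau_{\mathrm{cont}}$ only if some empirical CDF deviates by at least $\delta_{\mathrm{shift}}-\tau_{\mathrm{cont}}$, which gives a failure probability of order $2\exp(-2N(\delta_{\mathrm{shift}}-\tau_{\mathrm{cont}})^2)$. The constants may need to be absorbed when the two sample sizes are treated as comparable to $N$.

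For discrete $X_t$ with $K$ categories, the same argument uses total variation. Write the TV distance as a maximum over the $2^K-2$ nontrivial subsets of categories. Apply Hoeffding's inequality to the empirical mass of each subset and take a union bound over the subsets. This gives $2(2^K-2)\exp(-N(\delta_{\mathrm{shift}}-\tau_{\mathrm{disc}})^2/2)$, provided the $\chi^2$ threshold is expressed as a threshold on a TV-type statistic.

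\textbf{Phase 2.} This phase rests on the ideal-case results of Appendix~\ref{appendix:theory_ideal}. There, with the exact stable set (the Markov boundary, under Assumption~\ref{assm:additive_noise_model}), non-targets have zero importance-weighted conditional risk gap by ICM, while the target has gap at least $\nu$ by Def.~\ref{def:structural_intervention_strength}.

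The next step is a perturbation bound. If SRDO's weights leave residual covariate dependence $\epsilon_{\mathrm{dec}}$ (Def.~\ref{def:decorrelation_error}), then every population risk gap moves by at most $C\epsilon_{\mathrm{dec}}$. I would prove this by a Lipschitz argument on the loss and a bound on the change in the regression function. Combined with the margin condition $\nu> C\epsilon_{\mathrm{dec}}+\eta$, the target keeps a population margin of more than $\eta$ over all non-targets. The finite-sample part of Phase~2 is the event that the estimated decorrelation error exceeds its population value by more than $\eta$. I would bound it with a Hoeffding/McDiarmid inequality on the weighted empirical moment, which yields $\exp(-N\eta^2/8)$.

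\textbf{Phase 3.} This step is the main obstacle. The RRV scores are ratios of importance-weighted empirical risks, so I would first use the uniform bound $M$ on the per-sample conditional-shift statistic, which in particular bounds the importance weights and the loss. For each non-target $j\in\hat{\mathcal{K}}$, the ordering of $X_t$ above $X_j$ can only fail if the empirical difference $\hat S(X_t)-\hat S(X_j)$ deviates from its population value, which exceeds $\eta$, by at least $\eta$. Split this deviation into four empirical-mean deviations, for the target and non-target risks under $P$ and $P^I$, each of size at least $\eta/4$. Each is controlled by a two-sided Hoeffding bound of the form $2e^{-N\eta^2/(8M^2)}$ once the constants are tracked, giving $8e^{-N\eta^2/(8M^2)}$ per pair. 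A union bound over the $|\hat{\mathcal{K}}|-1$ non-targets then gives the stated Phase~3 term.

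The delicate points are the ratio normalization by $R_i^P$, which I would handle by folding it into the definition of the bounded statistic $M$, and the fact that $\hat{\mathcal{K}}$ is random. For the latter I would condition on the outcome of Phase~1 and use $|\hat{\mathcal{K}}|\le d$ where uniformity is needed.
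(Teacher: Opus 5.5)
Your skeleton matches the paper's: three events, a union bound, DKW/Weissman in Phase~1, concentration of the decorrelation error in Phase~2, and Hoeffding plus a union over the $|\hat{\mathcal K}|-1$ non-targets in Phase~3. The genuine problem is that you spend the whole margin $\eta$ twice. On $E_2$ you let the estimated decorrelation error deviate from its population value by up to $\eta$. In Phase~3 you then start from a population score gap larger than $\eta$.

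\begin{itemize}
\item \emph{If $E_2$ is actually used}, the margin condition is read with the estimated decorrelation error. This is the paper's reading: its Phase~2 controls $\sup_{\mathbf W}|\epsilon_{\mathrm{dec}}(\mathbf W)-\hat\epsilon_{\mathrm{dec}}(\mathbf W)|$. On $E_2$ the leakage $C\epsilon_{\mathrm{dec}}$ can then grow by $C\eta$ and absorb the entire margin, so Phase~3 cannot assume a gap above $\eta$.
\item \emph{The paper splits the budget instead.} It gives $\eta/2$ to Phase~2: $\eta/4$ for the Rademacher term once $N$ is large and $\eta/4$ for sampling noise, which yields $e^{-N\eta^2/8}$. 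Strictly, that allotment should also be divided by $C$, which the paper glosses over. The other $\eta/2$ goes to Phase~3.
\item \emph{With that split}, your four-way decomposition gives deviations of $\eta/8$, hence $8e^{-N\eta^2/(32M^2)}$ per pair. This is exactly where the paper's proof ends. The $8M^2$ in the statement is therefore not delivered by the paper's argument, and your route reaches it only through the double count.
\item \emph{The only consistent way to keep the full $\eta$ in Phase~3} is to read $\epsilon_{\mathrm{dec}}$ as a population quantity that enters only through your perturbation bound. Then $E_2$ is never used and the Phase~2 term is pure slack.
\end{itemize}
You need to commit to one of these readings.

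Several other steps fail as written.
\begin{itemize}
\item \emph{Phase~1, continuous case.} The triangle inequality only forces the \emph{sum} of the two DKW deviations to exceed $\delta_{\mathrm{shift}}-\tau_{\mathrm{cont}}$. So one of them must exceed half of it, not the full gap. DKW--Massart then gives $4e^{-N(\delta_{\mathrm{shift}}-\tau_{\mathrm{cont}})^2/2}$, which is what the body of Lemma~\ref{lemma:shift_detection_continuous} actually derives. The stated $2e^{-2N(\cdot)^2}$ is the one-sample DKW rate. Absorbing constants cannot recover it for a two-sample statistic, and the paper merely asserts it. Your discrete argument is essentially the proof of Weissman's inequality and is fine.
\item \emph{Phase~2, data dependence.} The SRDO weights are fitted on the same sample, so Hoeffding/McDiarmid at the fitted weights is not a bound for a fixed function. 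You need a uniform deviation bound over the weight class, i.e.\ the paper's Rademacher bound, which is also why the paper needs $N$ large.
\item \emph{Phase~2, your Lipschitz lemma.} The claim that every gap moves by at most $C\epsilon_{\mathrm{dec}}$ is not in the paper, which builds it into the margin condition. It is also doubtful in general. The estimated boundary comes from thresholding CatBoost importances, which is a discontinuous map, and $\epsilon_{\mathrm{dec}}$ controls only second moments. If a child of $X_t$ loses $X_t$ from its estimated boundary, its conditional-shift gap is not controlled by $\epsilon_{\mathrm{dec}}$ at all. Take the lemma as a hypothesis rather than trying to prove it.
\item \emph{Phase~3, the RRV ratio.} The paper proves the bound for the unnormalized gap $\hat\Delta_k$, to which Hoeffding applies directly, and treats RRV as a normalization outside the theorem. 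Folding the random denominator $\hat R^P_i$ into $M$ does not produce a sum of independent bounded terms. Handling it would require a lower bound on $R^P_i$ and an extra concentration term.
\end{itemize}
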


In essence, even in the finite-sample regime, the failure probability of \ourmethod\ decays exponentially in the sample size $N$; equivalently, the probability of correctly identifying the true intervention target converges to 1 at an exponential rate. 
Theorem \ref{th:end_to_end_identification} guarantees identification provided that the target's signal dominates the noise. 
In practice, relying solely on the raw predictability drop $\Delta R_i$ favors variables with naturally higher marginal variance or larger unit scales (e.g., continuous variables with large MSE). 
To address this, we employ the RRV as defined in Eq. \eqref{eq:rca_score}, which acts as a variance-stabilizing transformation. 
This normalization renders the score dimensionless, enabling comparisons between mixed data types (continuous and discrete) and ensuring the ranking is driven by the \textit{relative magnitude} of the structural change rather than the intrinsic scale of the noise.

\section{Experiments}\label{sec:experiments}
\subsection{Experimental settings}
We evaluate \ourmethod\ on both synthetic and real-world datasets.
Below, we briefly describe the experimental settings and corresponding datasets; further details are provided in Appendix~\ref{appendix:dataset_description}.

\textbf{Synthetic data.} We generate data from SCMs defined on random DAGs sampled using the Erdős–Rényi (ER) model \citep{erdos59a}. Each non-root variable follows an additive-noise structural equation, where the functional form is independently sampled as either a linear function or a nonlinear MLP-based function. Noise terms are independently drawn from Gaussian, Gumbel, Uniform, or Exponential distributions.
Anomalies are introduced through mechanistic interventions on one or more root-cause variables. Specifically, for an intervened variable $X_i$ with observational structural equation
$X_i = f_i(\mathrm{Pa}_i) + U_i$, abnormal samples are generated from a perturbed mechanism
$X'_i = f'_i(\mathrm{Pa}_i) + U_i$. The remaining variables are then generated according to the original SCM, allowing the intervention effects to propagate through the causal graph. 

We evaluate three synthetic settings: causal-graph quality, multiple interventions, and scalability. For \textit{causal-graph quality}, we use graphs with $50$ nodes and $100$ edges, comparing the ground-truth graph, an XGES-estimated graph learned from normal data \citep{nazaret2021extremely}, and corrupted DAGs with $30\%$, $50\%$, or $70\%$ edge deletion, addition, and orientation errors. For \textit{multiple interventions}, we inject five root causes and vary graph size over $40$, $60$, and $80$ nodes with $2d$ edges. For \textit{scalability}, we use linear SCMs with five interventions and $100$, $200$, $400$, or $800$ nodes. Each setting uses 20 SCMs, with 2,000 normal and 200 abnormal samples per SCM, and is repeated over three runs. We report the mean and uncertainty across these run repetitions.

\textbf{Real-world data.} We evaluate on five datasets across retail services, microservices, manufacturing, and physical systems: \textit{ProRCA} \citep{dawoud2025prorcacausalpythonpackage}, \textit{Sock-Shop} \citep{sockshop2022}, \textit{RCAEval} \citep{pham2025rcaeval}, \textit{CausalMan} \citep{tagliapietra2025causalmanphysicsbasedsimulatorlargescale}, and \textit{CausalChambers} \citep{gamella2025chamber}. For graph-based baselines, we use ground-truth causal graphs for \textit{ProRCA}, \textit{CausalMan}, and \textit{CausalChambers}; an edge-reversed service-call graph as a proxy causal graph for \textit{Sock-Shop}; and an XGES-estimated graph from normal data for \textit{RCAEval}, where no ground-truth graph is available.

\textbf{Baselines and metrics}\quad We compare \ourmethod\ against a diverse set of representative RCA baselines, including \textbf{graph-based methods}: Traversal \citep{6848128}, CIRCA \citep{10.1145/3534678.3539041}, Smooth Traversal \citep{orchard2025root}, and Counterfactual Attribution \citep{budhathoki2022causal}; \textbf{graph-free methods}: RCD \citep{10.5555/3600270.3602529},  Cholesky \citep{10.1093/jrsssb/qkaf066},  Score Ordering \citep{orchard2025root}, and RCG-0 \citep{ikram2025partialrca}; and \textbf{non-causal methods}: $\epsilon$-diagnosis \citep{10.1145/3308558.3313653}, and BARO \citep{pham2024baro}. Baseline details are provided in Appendix~\ref{appendix:baseline_models}. For sample-level RCA methods, RCA scores are averaged across all abnormal samples. We evaluate using top-$k$ ranking metrics: Top-1 accuracy for single-root-cause cases and Top-$k$ precision/recall for multi-root-cause cases, with $k$ set to the number of true root causes and precision and recall coincide.

\subsection{Results on synthetic data}\label{sec:res_syn}

\newcommand{\rcatablefont}{\footnotesize}
\newcommand{\rcatablesetup}{%
\setlength{\tabcolsep}{4pt}
\renewcommand{\arraystretch}{0.92}
\setlength{\aboverulesep}{1pt}
\setlength{\belowrulesep}{1pt}
\setlength{\abovetopsep}{0pt}
\setlength{\belowcaptionskip}{2pt}
}

\begin{table*}[t]
\centering
\rcatablesetup
\caption{Top-1 accuracy on a 50-node ER graph. Comparison across the ground-truth (GT), the XGES-estimated graph, and GT with varying levels of corruption ($30\%, 50\%, \text{ and } 70\%$).}
\label{tab:precision_repeated_runs}
{\rcatablefont
\begin{tabular}{lccccc}
\toprule
\textbf{Method} 
& \textbf{GT} 
& \textbf{XGES} 
& \textbf{Corr. 30\%} 
& \textbf{Corr. 50\%} 
& \textbf{Corr. 70\%} \\
\midrule
Score Ordering & $0.68{\pm}0.13$ & $0.65{\pm}0.09$ & $0.70{\pm}0.13$ & $0.65{\pm}0.09$ & $0.50{\pm}0.05$ \\
Smooth Traversal & $0.58{\pm}0.10$ & $0.55{\pm}0.05$ & $0.48{\pm}0.06$ & $0.32{\pm}0.03$ & $0.27{\pm}0.06$ \\
Traversal & $0.68{\pm}0.03$ & $0.58{\pm}0.06$ & $0.48{\pm}0.08$ & $0.33{\pm}0.13$ & $0.30{\pm}0.13$ \\
Cholesky & $0.52{\pm}0.08$ & $0.48{\pm}0.08$ & $0.40{\pm}0.05$ & $0.47{\pm}0.06$ & $0.38{\pm}0.10$ \\
CIRCA & $0.18{\pm}0.10$ & $0.22{\pm}0.06$ & $0.23{\pm}0.08$ & $0.27{\pm}0.08$ & $0.32{\pm}0.03$ \\
Count. Attr. & $0.30{\pm}0.13$ & $0.27{\pm}0.10$ & $0.18{\pm}0.08$ & $0.10{\pm}0.05$ & $0.17{\pm}0.12$ \\
RCD & $0.83{\pm}0.03$ & $0.78{\pm}0.08$ & $0.87{\pm}0.08$ & $\mathbf{0.87{\pm}0.08}$ & $0.80{\pm}0.05$ \\
$\epsilon$-Diagnosis & $0.00{\pm}0.00$ & $0.07{\pm}0.12$ & $0.05{\pm}0.00$ & $0.00{\pm}0.00$ & $0.00{\pm}0.00$ \\
BARO & $0.17{\pm}0.06$ & $0.10{\pm}0.05$ & $0.12{\pm}0.06$ & $0.17{\pm}0.13$ & $0.15{\pm}0.05$ \\
RCG-0 & $0.25{\pm}0.09$ & $0.20{\pm}0.10$ & $0.08{\pm}0.03$ & $0.10{\pm}0.09$ & $0.10{\pm}0.05$ \\
\midrule
\textbf{\ourmethod} 
& $\mathbf{0.88{\pm}0.08}$ 
& $\mathbf{0.87{\pm}0.08}$ 
& $\mathbf{0.88{\pm}0.06}$ 
& $0.82{\pm}0.06$ 
& $\mathbf{0.87{\pm}0.03}$ \\
\bottomrule
\end{tabular}
}
\end{table*}


\begin{table*}[t]
\centering

\begin{minipage}[t]{0.45\textwidth}
\centering
\rcatablesetup
\caption{Top-5 precision/recall with 5 intervention targets. Precision and recall coincide with $k=5$.}
\label{tab:multiple_targets_graph_size}
{\rcatablefont
\resizebox{\linewidth}{!}{
\begin{tabular}{lccc}
\toprule
\textbf{Method} & \textbf{$n=40$} & \textbf{$n=60$} & \textbf{$n=80$} \\
\midrule
Score Ordering       & $0.57{\pm}0.04$ & $\mathbf{0.58{\pm}0.02}$ & $0.54{\pm}0.02$ \\
Smooth Traversal     & $0.00{\pm}0.00$ & $0.00{\pm}0.00$ & $0.00{\pm}0.00$ \\
Traversal            & $0.00{\pm}0.00$ & $0.00{\pm}0.00$ & $0.00{\pm}0.00$ \\
Cholesky             & $0.49{\pm}0.01$ & $0.47{\pm}0.04$ & $0.43{\pm}0.03$ \\
CIRCA                & $0.29{\pm}0.02$ & $0.30{\pm}0.04$ & $0.27{\pm}0.04$ \\
Count. Attr.         & $0.00{\pm}0.00$ & $0.00{\pm}0.00$ & $0.00{\pm}0.00$ \\
RCD                  & $0.08{\pm}0.04$ & $0.20{\pm}0.05$ & $0.08{\pm}0.05$ \\
$\epsilon$-Diag.     & $0.00{\pm}0.00$ & $0.00{\pm}0.00$ & $0.00{\pm}0.00$ \\
BARO                 & $0.30{\pm}0.02$ & $0.31{\pm}0.02$ & $0.27{\pm}0.02$ \\
RCG-0                & $0.18{\pm}0.03$ & $0.17{\pm}0.02$ & $0.20{\pm}0.02$ \\
\midrule
\textbf{\ourmethod}  & $\mathbf{0.61{\pm}0.02}$ & $\mathbf{0.58{\pm}0.03}$ & $\mathbf{0.58{\pm}0.02}$ \\
\bottomrule
\end{tabular}
}
}
\end{minipage}
\hfill
\begin{minipage}[t]{0.53\textwidth}
\centering
\captionof{figure}{Top-1 accuracy across 5 different real-world datasets.}
\label{fig:spider_five_benchmarks}
\includegraphics[width=\linewidth]{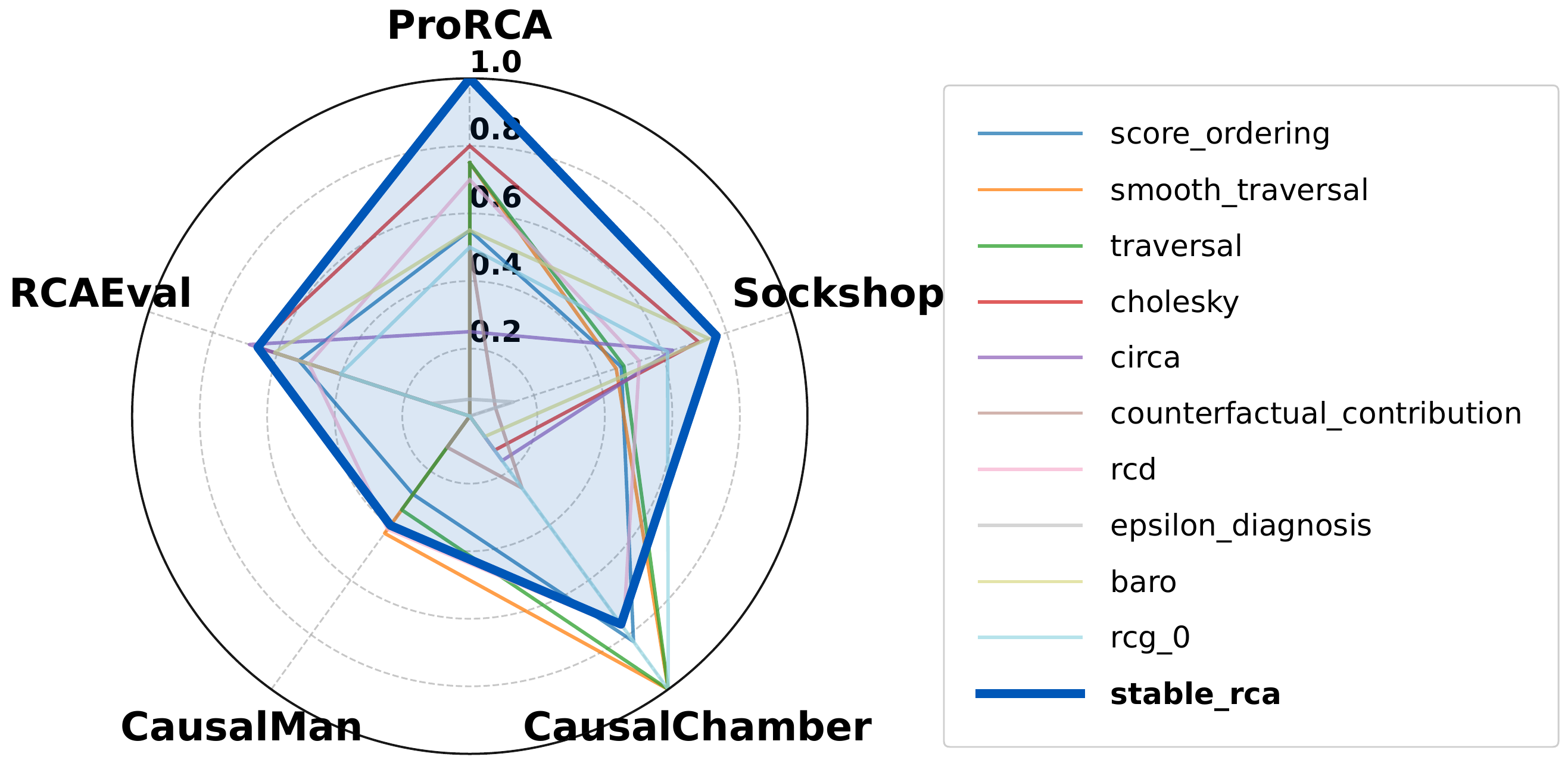}
\end{minipage}

\end{table*}

\begin{table*}[t]
\centering
\rcatablesetup
\setlength{\tabcolsep}{3pt}
\caption{Top-5 precision/recall and runtime under increasing graph sizes. Precision and recall coincide with $k=5$. Runtime is reported in seconds.}
\label{tab:runtime_top5_scaling}
{\rcatablefont
\begin{tabular*}{\textwidth}{@{\extracolsep{\fill}}lcccccccc@{}}
\toprule
\textbf{Method}
& \multicolumn{2}{c}{$n=100$}
& \multicolumn{2}{c}{$n=200$}
& \multicolumn{2}{c}{$n=400$}
& \multicolumn{2}{c}{$n=800$} \\
\cmidrule(lr){2-3}
\cmidrule(lr){4-5}
\cmidrule(lr){6-7}
\cmidrule(lr){8-9}
& \textbf{P/R@5} & \textbf{Time}
& \textbf{P/R@5} & \textbf{Time}
& \textbf{P/R@5} & \textbf{Time}
& \textbf{P/R@5} & \textbf{Time} \\
\midrule
Score Ordering       & 0.93 & 537.60 & 0.95 & 1017.00 & 0.97 & 2216.00 & \textbf{1.00} & 4024.80 \\
Smooth Traversal     & 0.00 & 1.19   & 0.00 & 1.39    & 0.00 & 3.02    & 0.00 & 7.38 \\
Traversal            & 0.00 & 1.16   & 0.00 & 1.33    & 0.00 & 3.15    & 0.00 & 6.54 \\
Cholesky             & 0.90 & 230.00 & 0.83 & 756.20  & 0.75 & 2951.00 & 0.81 & 14994.00 \\
CIRCA                & 0.19 & 1.97   & 0.19 & 3.53    & 0.18 & 7.87    & 0.14 & 13.98 \\
Count. Attr.         & 0.00 & 239.40 & 0.00 & 480.70  & 0.00 & 1031.00 & 0.00 & 1928.80 \\
RCD                  & 0.44 & 16.17  & 0.58 & 24.80   & 0.75 & 21.97   & 0.45 & 29.86 \\
$\epsilon$-Diagnosis & 0.00 & 5.32   & 0.00 & 10.63   & 0.00 & 21.00   & 0.00 & 37.15 \\
BARO                 & 0.22 & 0.11   & 0.19 & 0.20    & 0.13 & 0.44    & 0.08 & 0.92 \\
RCG-0                & 0.30 & 13.11  & 0.25 & 53.49   & 0.34 & 237.07  & 0.22 & 1189.00 \\
\midrule
\textbf{\ourmethod}  & \textbf{0.97} & 144.70
                      & \textbf{0.99} & 240.50
                      & \textbf{1.00} & 570.20
                      & 0.99 & 860.90 \\
\bottomrule
\end{tabular*}
}
\end{table*}

\begin{table}[!t]
\centering
\rcatablesetup
\setlength{\tabcolsep}{3pt}
\caption{Ablation studies for StableRCA. Left: MB ablation, where ``Add'' denotes injecting irrelevant variables into the MB, ``Remove'' denotes deleting true MB variables, and ``Add/Remove'' uses the same ratio for both operations. Right: effect of abnormal sample size.}
\label{tab:stable_rca_ablation_summary}
{\rcatablefont
\begin{tabular*}{\columnwidth}{@{\extracolsep{\fill}}lc@{\hspace{0.6em}}cc@{}}
\toprule
\multicolumn{2}{c}{\textbf{MB Ablation}} 
& \multicolumn{2}{c}{\textbf{Abnormal Sample Size}} \\
\cmidrule(lr){1-2}
\cmidrule(lr){3-4}
\textbf{MB setting} 
& \textbf{Top-1 Acc.}
& \textbf{\# Abnormal Samples} 
& \textbf{Top-1 Acc.} \\
\midrule
SRDO-estimated             & \textbf{0.90} & 5    & 0.65 \\
Oracle                     & \textbf{0.90} & 50   & 0.80 \\
Oracle + Add 20\%          & 0.75          & 100  & 0.85 \\
Oracle + Add 40\%          & 0.80          & 200  & 0.80 \\
Oracle + Remove 20\%       & 0.85          & 500  & 0.85 \\
Oracle + Remove 40\%       & 0.60          & 1000 & 0.85 \\
Oracle + Add/Remove 20\%   & 0.80          & 2000 & 0.75 \\
Oracle + Add/Remove 40\%   & 0.65          & --   & --   \\
\bottomrule
\end{tabular*}
}
\end{table}

\textbf{Effect of Causal-Graph Quality.}
Table~\ref{tab:precision_repeated_runs} reports Top-1 accuracy under varying graph-quality regimes. Overall, \ourmethod\ achieves the best performance in four out of five settings and remains competitive under 50\% graph corruption, showing strong robustness to graph misspecification. In contrast, graph-based methods are more sensitive to graph quality: Smooth Traversal, Traversal, and Counterfactual Attribution perform reasonably with the true graph but degrade as corruption increases. For example, Traversal drops from 0.68 with the true graph to 0.30 under 70\% corruption, while Smooth Traversal decreases from 0.58 to 0.27, suggesting that incorrect edges can mislead propagation-based RCA. Graph-independent or weakly graph-dependent methods are less affected by corruption, but their accuracy varies substantially. Score Ordering and Cholesky remain stable yet consistently underperform \ourmethod, while RCD performs strongly and slightly surpasses \ourmethod\ under 50\% corruption. In contrast, $\epsilon$-Diagnosis, BARO, CIRCA, and RCG-0 perform poorly overall. Finally, the XGES-estimated graph yields results close to the true-graph setting for most methods, suggesting that XGES provides a reasonably accurate graph estimate in this regime.

\textbf{Effect of Multiple Intervention Targets.}
Table~\ref{tab:multiple_targets_graph_size} reports Top-5 recovery under five simultaneous interventions. \ourmethod\ achieves the best performance across graph sizes, obtaining $0.61$, $0.58$, and $0.58$ for $n=40,60,80$, respectively. Since $k$ equals the number of true root causes, these scores directly indicate the fraction of intervention targets recovered in the top five predictions. Score Ordering is competitive but drops at $n=80$, while \ourmethod\ remains stable. Cholesky performs moderately but degrades with graph size, probably due to its linear-SEM assumptions. CIRCA, BARO, RCD, and RCG-0 recover fewer true causes, whereas traversal-based methods and counterfactual attribution fail under multiple interventions. These results suggest that \ourmethod\ remains robust when both graph size and the number of intervention targets increase.

\textbf{Scalability to Large Graphs.}
Table~\ref{tab:runtime_top5_scaling} reports Top-5 precision/recall and runtime on graphs with 100--800 nodes under linear-SCM, evaluated on a server with two Intel Xeon Gold 5320 CPUs at 2.20\,GHz. \ourmethod\ achieves the best accuracy--efficiency trade-off, maintaining near-perfect precision/recall from 0.97 to 1.00 while remaining substantially faster than other high-accuracy methods. At $n=800$, Cholesky and Score Ordering take 14994.00 and 4024.80 seconds, respectively, whereas \ourmethod\ achieves 0.99 precision/recall in 860.90 seconds. In contrast, traversal-based methods, $\epsilon$-Diagnosis, and Counterfactual Attribution fail to recover root causes reliably; BARO is fastest but inaccurate; and RCD is efficient but less stable at large graph sizes. These results show that \ourmethod\ offers a practical balance between accuracy and scalability for large-scale RCA.

\subsection{Results on real-world scenarios}
Figure~\ref{fig:spider_five_benchmarks} summarizes the real-world precision results across five benchmarks. \ourmethod\ forms a relatively large and balanced polygon, indicating consistently strong performance across heterogeneous real-world scenarios. In contrast, several baselines exhibit more irregular profiles: some perform competitively on particular benchmarks but degrade substantially on others. This suggests that their effectiveness is more sensitive to dataset-specific properties, graph quality, or intervention characteristics. By relying on local Markov boundary estimation and conditional distribution shift detection, \ourmethod\ achieves a more stable precision pattern across benchmarks, supporting its robustness in practical RCA settings without requiring a fully reliable global causal graph.

\subsection{Ablation studies}
\textbf{Robustness to cascading errors.}
As \ourmethod\ consists of three phases, errors in MB identification may propagate to Phase 3 and affect conditional distribution shift detection. We therefore conduct an ablation study by comparing the SRDO-estimated MBs with oracle MBs and perturbed oracle MBs with randomly removed true MB variables and/or added irrelevant variables. Table~\ref{tab:stable_rca_ablation_summary}-left shows that the SRDO-estimated MB achieves the same Top-1 accuracy as the oracle MB, both reaching 0.90. This suggests that Phase 2 provides sufficiently accurate MB estimates in this setting. Perturbing oracle MBs generally degrades performance, with removing true MB variables being more harmful than adding irrelevant ones: accuracy drops to 0.60 after removing 40\% of true MB variables, but remains 0.80 after adding 40\% irrelevant variables. When both errors are introduced, accuracy decreases to 0.65 under 40\% removal and addition. Overall, \ourmethod\ is robust to moderate MB errors, but retaining true MB variables is more critical than excluding all irrelevant variables.

\textbf{Sensitivity to abnormal sample size.}
Table~\ref{tab:stable_rca_ablation_summary}-right evaluates the sensitivity of \ourmethod\ to the number of abnormal samples. Performance improves as the abnormal sample size increases from 5 to 100, with Top-1 accuracy rising from 0.65 to 0.85, suggesting that very small abnormal sets provide insufficient evidence for reliable marginal and conditional shift detection. Beyond 100 abnormal samples, accuracy remains relatively stable, mostly between 0.80 and 0.85, indicating that \ourmethod\ does not require a large abnormal set to perform well. The drop at 2000 samples is likely due to finite-run variability or changes in the detected shifted-variable set. Overall, \ourmethod\ remains robust once a moderate number of abnormal samples is available.

\section{Conclusions}\label{sec:conclusions}

In this work, we introduced \ourmethod\ for population-level RCA, showing that mechanism-level root causes can be identified through local Markov boundary information rather than full causal graph recovery. Our theoretical analysis characterizes when intervention targets can be separated from downstream effects under both idealized and non-ideal regimes. Empirically, \ourmethod\ achieves strong effectiveness, robustness, and scalability across three synthetic configurations and five real-world datasets, positioning it as a practical middle ground between graph-based causal RCA and graph-free statistical methods that may confuse root causes with downstream symptoms.

\textbf{Limitations and Future Work.}
\ourmethod\ still relies on sufficiently accurate Markov boundary estimation and standard causal assumptions such as causal sufficiency, faithfulness, and detectable intervention-induced shifts. Severe boundary estimation errors, hidden confounding, feedback dynamics, or weak interventions may therefore affect its reliability. Future work will extend \ourmethod\ to partially observed and cyclic systems, develop uncertainty-aware or ensemble Markov boundary estimators, and combine population-level mechanism diagnosis with instance-level explanations. Improving the efficiency of Markov boundary estimation and conditional shift testing is also important for deploying StableRCA in larger, heterogeneous, and streaming environments. 

\bibliography{bibliography}

\appendix

\section{Theoretical Analysis}\label{appendix:theory}

In this section, we provide a theoretical analysis of \ourmethod\ . First, we start from an analysis in the ideal case where exact stable sets are available. Next, we proceed with an analysis of its robustness and potential failure modes.

\begin{definition}\label{def:scm}[Structural Causal Model (SCM) \citep{Pearl2009}]
A Structural Causal Model (SCM) is a 4-tuple
\(
\mathcal{M} = \langle \mathbf{X}, \mathbf{U}, \mathbf{F}, P(\mathbf{U}) \rangle
\),
where:
\begin{itemize}
    \item \variables\ denotes the set of endogenous variables.
    \item \noises\ denotes the set of exogenous noise variables, representing external factors not explained within the model.
    \item $\mathbf{F} = \{f_1, \dots, f_D\}$ is a collection of structural equations, where each structural equation specifies an endogenous variable $X_i \in \mathbf{X}$ as
    \(
    X_i = f_i(\mathrm{Pa}_i, U_i),
    \)
    with $\mathrm{Pa}_i \subseteq \mathbf{X} \setminus \{X_i\}$ denoting the set of parent variables of $X_i$ in the associated causal graph, and $U_i \in \mathbf{U}$ the corresponding exogenous noise variable.
    \item $P(\mathbf{U})$ is the joint probability distribution over the exogenous variables.
\end{itemize}
\end{definition}

Additionally, an SCM induces an observational data distribution $P(\variables)$. We define an intervention $I$ on a target variable $X_t \in \variables$ as the replacement of its structural equation $f_t$ with a new mechanism $\tilde{f}_t$ (or a fixed value $a \in \mathbb{R}$). An intervention results in a new SCM $\mathcal{M}^I$ where the structural equation for $X_t$ is substituted. We refer to the new data distribution entailed by $\mathcal{M}^I$ as $P^I$ (\variables). 

We start by reporting the 3 core assumptions that are necessary for our framework: \textit{Causal Sufficiency} \citep{Spirtes2000CausationPredictionSearch}, \textit{Causal Faithfulness} \citep{Spirtes2000CausationPredictionSearch} and \textit{Independent Causal Mechanisms} \citep{scholkopf2021causalrepresentationlearning}.
\begin{assumption}[Causal Sufficiency]\label{assm:causal_sufficiency}
    All variables \variables\ in the underlying data generating process, described by the SCM $\mathcal{M}$, are observed.
\end{assumption}

\begin{assumption}[Causal Faithfulness]
    Let $X_i, X_j \subset \variables$, and let $P(\variables)$ be the observational data distribution induced by an SCM $\mathcal{M}$. Then, if $X_i$ and $ X_j$ are independent given $X_k \in \variables$ in $P(\variables)$, then they are also d-separated in the causal graph induced by $\mathcal{M}$. 
\end{assumption}

Given the interventional setting of this work, we leverage the Independent Causal Mechanisms assumption \citep{scholkopf2021causalrepresentationlearning}, often connected to the Modularity assumption \citep{Pearl2009}.
\begin{assumption}[Independent Causal Mechanisms \citep{scholkopf2021causalrepresentationlearning}]\label{assm:icm}
    The causal data generating process is composed by autonomous modules that do not inform or influence each other. In essence, interventions on a variable $X_j$ do not change the other causal mechanisms.
\end{assumption}

Lastly, we invoke the \textit{Strict Positivity} assumption to facilitate the Markov Boundary identification with stable learning.
\begin{assumption}[Strict Positivity \citep{pmlr-v162-xu22o}]
    The joint probability distribution $P(\variables)$ is strictly positive. That is, for every possible assignment of values $\mathbf{x}$ in the domain of $\textbf{X}$, it holds that $P(\mathbf{x}) > 0$.
\end{assumption}
The strict positivity assumption is reasonable in practice, as the presence of measurement or exogenous noise with unbounded support (e.g., Gaussian noise) ensures that the induced joint distribution assigns positive probability (or density) to all admissible covariate configurations. 

\subsection{Markov Blanket, Markov Boundary, Stable Set, and Minimal Stable Set}\label{appendix:stable_learning_introduction}
Our framework builds on stable learning, which, under appropriate assumptions and data heterogeneity, enables identification of the minimal stable variable set. Under the SCM framework considered in this work, this set coincides with the Markov Boundary (and hence a Markov blanket) of the target variable. Accordingly, we formally define these notions below and clarify their relationships in the context of stable learning. For notational convenience, we use $\textrm{BL}(X_i)$ to denote the Markov Blanket $X_i$, $\textrm{BD}(X_i)$ to denote the Markov Boundary of $X_i$, $\textrm{Stable}(X_i)$ to denote the stable set of $X_i$, and $\textrm{MinStable}(X_i)$ to denote the minimal stable set of $X_i$. This notation differs from that used in the main text, where $\textrm{MB}(X_i)$ denotes the Markov Boundary of $X_i$.
\begin{definition}[Markov Blanket \citep{pearl1988probabilistic}]\label{def:markov_blanket_boundary}
    A Markov blanket of $X_i$ under distribution $P$, denoted as $\mb (X_i)$, is any subset $\mb (X_i) \subseteq \textbf{X}\setminus X_i$ for which the following holds
    \begin{equation}
        X_i \perp (\textbf{X} \setminus \mb(X_i)) | \mb(X_i).
    \end{equation}
\end{definition}
Further, we define below the Markov Boundary as the minimal Markov blanket.
\begin{definition}[Markov Boundary \citep{pearl1988probabilistic}]
    A Markov Boundary of $X_i$, denoted as $\bd (X_i)$, is a minimal subset of the Markov blanket of $X_i$. In other words, no subset of the Markov Boundary satisfies the Markov blanket property.
\end{definition}
A Markov blanket of a variable $X_i$ is any set of variables that renders $X_i$ conditionally independent of all remaining variables in the system. The Markov Boundary, in contrast, is a Markov blanket that is minimal with respect to set inclusion, and thus contains no redundant variables. In a causal DAG, under the causal faithfulness assumption, the Markov Boundary of $X_i$ is given by the union of its parents, its children, and the coparents of its children (often referred to as spouses). While, in principle, multiple Markov boundaries may exist, classical results by \citet{pearl1988probabilistic} and \citet{statnikov13markovboundary} show that under mild regularity conditions satisfied in most practical settings, the Markov Boundary is unique.

\begin{definition}[Stable Set]\label{def:ideal_stable_learning}
   A stable variable set of $X_i$ under distribution $P$, denoted as $\textrm{Stable}(X_i)$, is any subset $\stable_i$ of $\{\textbf{X} \backslash {X_i}\}$ for which the following identity holds
   \begin{equation}
       \EX_{P}[X_i | \textbf{X}] = \EX_{P}[X_i | \stable_i].\label{eq:stable_set}
   \end{equation}
\end{definition}

\begin{definition}[Minimal Stable Set]\label{def:minimal_stable_set}
   A minimal stable variable set of $X_i$, denoted as $\textrm{MinStable}(X_i)$, is a minimal set in $\textrm{Stable}(X_i)$, i.e., none of its proper subsets satisfies Equation~\eqref{eq:stable_set}.
\end{definition}

The central intuition underlying stable learning is that the spurious correlations, that arise from associations and do not correspond to invariant causal dependencies across environments, should be removed in order to obtain predictors whose performance remains stable under distributional shifts.
In \citep{pmlr-v162-xu22o},the authors further establish a formal connection between stable sets and Markov blankets, as well as between the minimal stable set and the Markov Boundary. In particular, Theorem A.2 of \citep{pmlr-v162-xu22o} shows that, under idealized assumptions, any stable set constitutes a Markov blanket of the target variable, and the minimal stable set is a subset of the Markov Boundary.

\begin{assumption}[Strict Positivity \citep{pmlr-v162-xu22o}]\label{assm:strict_positivity}
  The joint probability distribution $P(\variables)$ is strictly positive. That is, for every possible assignment of values $\mathbf{x}$ in the domain of $\textbf{X}$, it holds that $P(\mathbf{x}) > 0$.
\end{assumption}

\begin{theorem}[Any Stable set is a Markov Blanket \citep{pmlr-v162-xu22o}]
    Under Assumption \ref{assm:causal_sufficiency} and \ref{assm:strict_positivity}, it holds that a Stable Set is also a Markov Blanket, and the minimal stable variable set is a subset of the Markov Boundary. Formally,
    \begin{equation}
        \mb (X_i) \subseteq \textrm{Stable}(X_i), \qquad \bd (X_i) \subseteq \textrm{MinStable}(X_i).
    \end{equation}
\end{theorem}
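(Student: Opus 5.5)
We read the displayed inclusions as statements about families of sets. The first says that every Markov blanket of $X_i$ satisfies Eq.~\eqref{eq:stable_set}, so $\mb(X_i)\in\textrm{Stable}(X_i)$. The second says that the Markov Boundary $\bd(X_i)$ is itself a minimal stable set. We prove the two inclusions separately. The first is a direct consequence of the definitions; the second needs more.

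\emph{Step 1 (blanket $\Rightarrow$ stable).} Fix any Markov blanket $B=\mb(X_i)$. By Definition~\ref{def:markov_blanket_boundary}, $X_i \perp (\variables\setminus(B\cup\{X_i\}))\mid B$. Hence the conditional law of $X_i$ given all of $\variables\setminus\{X_i\}$ equals its conditional law given $B$. Integrating $X_i$ against these two equal kernels gives $\EX_P[X_i\mid\variables\setminus\{X_i\}]=\EX_P[X_i\mid B]$ almost surely, which is Eq.~\eqref{eq:stable_set}.

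Strict positivity (Assumption~\ref{assm:strict_positivity}) is used here only so that conditional densities, and hence conditional expectations, are defined pointwise on the whole domain rather than merely almost surely. It also licenses the intersection property of conditional independence, which we use in Step 2. Causal sufficiency (Assumption~\ref{assm:causal_sufficiency}) guarantees that $\variables$ contains every variable of the SCM, so "the rest of the system" is observed. We also use it to identify $\bd(X_i)$ with $\pa_i\cup\ch_i\cup\mathrm{Sp}_i$ via faithfulness.

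\emph{Step 2 ($\bd(X_i)$ is a minimal stable set).} By Step 1, $\bd(X_i)$ is stable, so it remains to show that no proper subset is stable. Suppose for contradiction that $S\subsetneq\bd(X_i)$ is stable, and pick $Z\in\bd(X_i)\setminus S$. Stability of both $\bd(X_i)$ and $S$ gives $\EX[X_i\mid\bd(X_i)]=\EX[X_i\mid S]$, so the regression function of $X_i$ on its boundary does not vary with $Z$. We then split by the graphical role of $Z$, working under the additive-noise structure of Assumption~\ref{assm:additive_noise_model}:
\begin{itemize}
  \item If $Z\in\pa_i$, then $f_i$ depends non-trivially on $Z$ by faithfulness. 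Therefore $\EX[X_i\mid\bd(X_i)]$ inherits a $Z$-dependence that cannot be cancelled, because $U_i$ has mean zero given the boundary.
  \item If $Z\in\ch_i$ or $Z\in\mathrm{Sp}_i$, the dependence arises through Bayes' rule from the child's equation $X_j=f_j(\pa_j)+U_j$. Here we argue that the posterior mean of $X_i$ must move with $Z$ unless $f_j$ is constant in $X_i$ (respectively in the spouse), and the latter contradicts faithfulness.
\end{itemize}
Combining the cases, $S$ would be a strictly smaller Markov blanket, which contradicts the minimality of $\bd(X_i)$.

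\emph{Main obstacle.} The main obstacle is the child/spouse case of Step 2. In general, conditional-mean invariance is weaker than conditional independence: a variable may affect only the conditional variance of $X_i$ and not its mean. Ruling this out requires the additive-noise and faithfulness structure, not positivity alone. Our first attempt is to reproduce the argument of Theorem~A.2 in \citep{pmlr-v162-xu22o}. Concretely, we would show that mean-independence, combined with Assumption~\ref{assm:additive_noise_model}, upgrades to the independence $X_i\perp Z\mid S$, and then invoke the intersection property that positivity guarantees. If this upgrade fails in full generality, we would state the minimality claim under that additional model restriction.
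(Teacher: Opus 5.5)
Your Step 1 is correct. Your family reading of the first inclusion is also the right one; it is the half of Theorem A.2 of Xu et al.\ that the paper's one-line proof cites. The converse suggested by the theorem's title is false, as the example below shows.

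The gap is in Step 2. The claim that $\bd(X_i)$ is itself a minimal stable set is false under the theorem's hypotheses. It stays false after you add the additive-noise and faithfulness assumptions, so no refinement of your case analysis can close it. Take a root $X_i=U_i\sim\mathcal{N}(0,1)$ with a single child $X_j=X_i^2+U_j$, where $U_j\sim\mathcal{N}(0,1)$ is independent. Causal sufficiency, strict positivity, additive noise and faithfulness all hold, and $\bd(X_i)=\{X_j\}$. However, $p(x_i\mid x_j)\propto \varphi(x_i)\,\varphi(x_j-x_i^2)$ is symmetric in $x_i$, so $\EX[X_i\mid X_j]=0=\EX[X_i]$. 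The empty set is therefore stable (though not a blanket), and $\bd(X_i)$ is not minimal.

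This example refutes your child-case assertion that the posterior mean must move with $Z$ unless $f_j$ is constant. It also shows that the upgrade from mean-independence to $X_i\perp Z\mid S$ is unavailable, and your closing ``strictly smaller Markov blanket'' step depends on that upgrade. Your parent case is mis-justified as well. $\EX[U_i\mid\bd(X_i)]=0$ fails as soon as $X_i$ has a child, since for example $X_j=X_i+U_j$ carries information about $U_i$. The additive-noise assumption gives conditional independence of $U_i$ from non-boundary variables, not a zero mean given the boundary.

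What the cited result actually delivers is the reverse inclusion $\textrm{MinStable}(X_i)\subseteq\bd(X_i)$. This is also what the statement's own sentence says (``the minimal stable variable set is a subset of the Markov Boundary''). The displayed $\bd(X_i)\subseteq\textrm{MinStable}(X_i)$, read as you read it, is not provable.

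The correct inclusion follows from your Step 1 plus one lemma, which is where strict positivity does its real work: stable sets are closed under intersection. Suppose $S_1=A\cup B$ and $S_2=A\cup C$ are stable, with $A=S_1\cap S_2$. Then $\EX[X_i\mid\variables\setminus\{X_i\}]=g_1(A,B)=g_2(A,C)$ almost surely. Positivity of the joint density on a product domain forces $g_1(a,\cdot)$ to be constant for almost every $a$, so the common value is some $h(A)$. The tower property then gives $\EX[X_i\mid A]=h(A)$, so $A$ is stable. Hence the minimal stable set is unique, equals the intersection of all stable sets, and lies inside the stable set $\bd(X_i)$.

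So positivity enters through an intersection property for conditional means, not the conditional-independence intersection property you invoke. No additive-noise or faithfulness assumption is needed. Equality with $\bd(X_i)$ would require extra assumptions that exclude variables influencing only higher moments of $X_i$, as in the example.
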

\begin{proof}
    See \citep{pmlr-v162-xu22o}, Theorem A.2.
\end{proof}

Under the SCM framework defined in Definition~\ref{def:scm}, each variable $X_i$ is generated according to a structural equation of the form $X_i = f_i(\pa_i, U_i)$, where $f_i$ is deterministic and all stochasticity arises solely from the exogenous noise variable $U_i$. Consequently, the conditional mean sufficiency in Equation~\eqref{eq:stable_set} holds. Therefore, the minimal stable set $\textrm{MinStable}(X_i)$ coincides with the Markov Boundary $\bd(X_i)$.

\subsection{Ideal Conditions}\label{appendix:theory_ideal}
We first perform a theoretical analysis under ideal conditions, i.e.\, namely assuming asymptotically infinite data, and \ourmethod\ recovers the exact Markov Boundary of the target variable.

\begin{assumption}[Single Non-trivial Interventional Target]\label{assm:single_intervention}
    The distribution shift in the outcome $Y$ is originated by a unique intervention on a variable $X_i \in V$. 
\end{assumption}

First of all, our goal is to discover the intervention target $X_t$. Under Assm.~\ref{assm:single_intervention}, the intervened node must be an ancestor of all variables that exhibit a marginal distribution shift.
\begin{corollary}\label{corollary:intervention_ancestor}
    Let $I$ be a non-trivial intervention satisfying Assm.\ref{assm:single_intervention}, inducing a marginal distribution shift on the nodes in $\mathcal{K} = \{X_i \in \variables \mid P_{obs}(X_i) \neq P^I(X_i)\} \subseteq \variables$. Under the causal sufficiency assumption(Assm.\ref{assm:causal_sufficiency}), the target of the intervention $I$ is an ancestor of all the nodes in $\mathcal{K}$.
\end{corollary}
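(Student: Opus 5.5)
The plan is to prove the contrapositive. If a variable $X_j$ is not a descendant of the target $X_t$ (where, following the usual convention, $X_t$ counts as its own ancestor), then its marginal is unchanged, i.e.\ $P^I(X_j)=P_{obs}(X_j)$. Hence every $X_j\in\mathcal{K}$ must lie in $\{X_t\}\cup\mathrm{De}(X_t)$, which is the claim.

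First I use the structure of the intervened SCM. By Assm.~\ref{assm:single_intervention} and locality of interventions (together with Assm.~\ref{assm:icm}), $\mathcal{M}^I$ differs from $\mathcal{M}$ only in the structural equation of $X_t$. All other $f_i$ and the joint noise distribution $P(\mathbf{U})$ are identical, and the noises are jointly independent by causal sufficiency (Assm.~\ref{assm:causal_sufficiency}). Let $\mathbf{A}=\variables\setminus(\{X_t\}\cup\mathrm{De}(X_t))$ be the set of non-descendants of $X_t$ other than $X_t$ itself. I will show that the joint law of $\mathbf{A}$ is the same under both models.

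Because $\gG$ is a DAG, $\mathbf{A}$ is ancestrally closed. If $X_k\in\mathbf{A}$ and $X_l\in\pa_k$, then $X_l\neq X_t$ and $X_l\notin\mathrm{De}(X_t)$; otherwise $X_k$ would be a descendant of $X_t$. Ordering $\mathbf{A}$ topologically, I argue by induction that in both models each $X_k\in\mathbf{A}$ is the same measurable function of $\{U_l : X_l\in\mathrm{An}(X_k)\}\subseteq\{U_l: X_l\in\mathbf{A}\}$. The base case is a root $X_k=f_k(U_k)$. In the inductive step, $X_k=f_k(\pa_k,U_k)$ with $f_k$ unchanged and $\pa_k\subseteq\mathbf{A}$. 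Consequently $\mathbf{A}=g(\mathbf{U}_{\mathbf{A}})$ for a single map $g$ common to $\mathcal{M}$ and $\mathcal{M}^I$, and $\mathbf{U}_{\mathbf{A}}$ has the same law in both models, so $P^I(\mathbf{A})=P_{obs}(\mathbf{A})$. In particular every marginal $P^I(X_j)$ with $X_j\in\mathbf{A}$ is unchanged, so $X_j\notin\mathcal{K}$. Therefore $\mathcal{K}\subseteq\{X_t\}\cup\mathrm{De}(X_t)$, i.e.\ $X_t$ is an ancestor of every node in $\mathcal{K}$.

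The only delicate point is the role of causal sufficiency. Without it, a latent confounder's noise could be shared between $X_t$'s mechanism and a non-descendant, but under the ICM assumption that shared noise is not altered by the intervention either. The real need is the independence of the noises and the unchanged $P(\mathbf{U})$, which ensure that $\mathbf{U}_{\mathbf{A}}$ has the same law; I would state this explicitly. I also note that the ``ancestor'' convention must include $X_t$ itself, since $X_t\in\mathcal{K}$ whenever the intervention is non-trivial on its marginal.
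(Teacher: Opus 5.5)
Your proof is correct. It rests on the same basic fact the paper uses: an intervention on $X_t$ cannot propagate to variables whose structural equations and noise are untouched. The routes differ in scope, however.

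The paper argues only that an intervention on a descendant cannot shift the marginal of an ancestor, and then concludes ``by exclusion'' that $X_t$ is an ancestor of every node in $\mathcal{K}$. As written, this rules out only $X_t$ being a proper descendant of some node in $\mathcal{K}$. It leaves implicit, at best, the nodes of $\mathcal{K}$ that are incomparable with $X_t$, i.e.\ neither its ancestors nor its descendants.

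Your contrapositive handles all non-descendants at once. You show that $\mathbf{A}=\variables\setminus(\{X_t\}\cup\mathrm{De}(X_t))$ is ancestrally closed, and that $\mathbf{A}$ is the same measurable map of $\mathbf{U}_{\mathbf{A}}$ in $\mathcal{M}$ and $\mathcal{M}^I$. This yields invariance of the whole joint law $P(\mathbf{A})$, which covers the incomparable case explicitly. You also make the reflexive convention for ``ancestor'' explicit, which is needed since $X_t$ itself typically lies in $\mathcal{K}$.

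Your remark on causal sufficiency is sharper than the paper's use of it. The argument only needs the law of $\mathbf{U}_{\mathbf{A}}$ to be unchanged by the intervention, and that already follows from $P(\noises)$ being unchanged. Joint independence of the noises is just a convenient sufficient condition if the intervention is also allowed to replace the noise of $X_t$, so your sentence calling it ``the real need'' slightly overstates it.

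The paper's version buys brevity. Yours buys a complete argument for the claim as stated, under weaker assumptions.
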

\begin{proof}
    This corollary follows from a simple observation: Under the assumptions that the underlying causal graph is a DAG, and under causal sufficiency, an intervention on a descendant cannot cause a marginal distribution shift on an ancestor node. In the underlying SCM, the value of $X_i$ is determined by its structural equation $f_i$. Therefore, by exclusion, the intervention target can only be an ancestor of all other nodes in $\mathcal{K}$.
\end{proof}
Now, we proceed by showing how the detected marginal distribution shifts do not imply an intervention, aside for the intervention target node.
\begin{theorem}[Invariance of $P(X_i | \bd(X_i))$ for non-Intervened nodes]\label{th:non_intervention_stability}
    Let $X_i \in \variables$ be a target variable and $\bd(X_i)$ its associated Markov Boundary. Under Assm.\ref{assm:single_intervention}, for any variable $X_i$ that is not the interventional target, the conditional distribution $P(X_i | \bd(X_i))$ is invariant with respect to any intervention $I$ targeting other variables $X_t \in \variables$. Mathematically,
    \begin{equation}
        P^I (X_i | \bd(X_i)) =  P(X_i | \bd(X_i)) \quad \forall i \neq t,
    \end{equation}
    where $P^I (\variables)$ is the data distribution induced by $I$.
\end{theorem}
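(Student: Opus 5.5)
The plan is to reduce the conditional of $X_i$ given its Markov Boundary to a product of causal mechanisms, using the Markov factorization of the SCM. Each factor is then either unchanged by the intervention, by Assumption~\ref{assm:icm}, or cancels. I read the statement as the paper intends it: $P(X_i \mid \bd(X_i))$ computed with $\bd(X_i)$ fixed as the observational Markov Boundary, namely $\pa_i \cup \ch_i \cup \mathrm{Sp}_i$. I also assume that $\mathcal{M}^I$ has the same graph $\gG$, or a subgraph of it in which only edges into $X_t$ change. Causal sufficiency (Assumption~\ref{assm:causal_sufficiency}) guarantees that both $P$ and $P^I$ factorize over $\gG$ as $\prod_j p_j(x_j \mid \pa_j)$. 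The only difference is that the factor $p_t$ is replaced by $\tilde p_t$.

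The first step is the standard local computation. Write $p(x_i \mid \text{rest}) = p(\mathbf{x})/\int p(\mathbf{x})\,dx_i$ and cancel every factor that does not contain $x_i$. This gives
\[
p(x_i \mid \mathbf{x}_{-i}) = \frac{p_i(x_i \mid \pa_i)\prod_{c\in \ch_i} p_c(x_c \mid \pa_c)}{\int p_i(x_i' \mid \pa_i)\prod_{c\in \ch_i} p_c(x_c \mid \pa_c)\big|_{x_i=x_i'}\,dx_i'}.
\]
This expression depends only on $\bd(X_i)$, so it equals $p(x_i \mid \bd(X_i))$. Strict positivity (Assumption~\ref{assm:strict_positivity}) makes every denominator nonzero. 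The identical computation under $P^I$ produces the same formula, with $\tilde p_t$ in place of $p_t$ wherever $p_t$ appears.

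The second step is a case split on the position of $X_t$ relative to $X_i$. If $X_t \notin \{X_i\}\cup \ch_i$, then no factor in the displayed ratio is the mechanism of $X_t$. Assumption~\ref{assm:icm} then says all factors coincide under $P$ and $P^I$, so the conditionals are equal. This case covers descendants, non-adjacent nodes, and spouses. Since $i\neq t$ by hypothesis, the remaining case is $X_t \in \ch_i$, that is, $X_i$ is a parent of the target. There the factor $p_t(x_t \mid \pa_t)$ enters the ratio and is replaced by $\tilde p_t$.

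This parent case is the main obstacle. My first attempt would use the additive-noise structure of Assumption~\ref{assm:additive_noise_model} together with the locality of the intervention assumed in Section~\ref{sec:background}, in order to argue that the $x_i$-dependence of the target factor is unchanged. Concretely, I would require the intervention to modify $f_t$ only through components not involving $X_i$, for example a shift that is constant in $x_i$. Then $\tilde p_t(x_t \mid \pa_t)$ and $p_t(x_t\mid\pa_t)$ differ by a factor independent of $x_i$, and that factor cancels between numerator and denominator. If this fails for general $\tilde f_t$, I would state explicitly the condition needed for the theorem, namely that the intervention acts as an $x_i$-independent reweighting of $p_t$. I would then note that under this condition the ratio in the first step is invariant. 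That completes the argument for all $i\neq t$.
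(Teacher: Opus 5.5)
Your local-factorization argument for the case $X_t\notin\{X_i\}\cup\ch_i$ is sound. It is in fact more careful than the paper's proof: the paper's Case~1 appeals to the observational shielding $X_i\perp X_t\mid\bd(X_i)$, which is a statement about $P$ alone, and its Case~2 treats only $X_t\in\pa_i$.

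The gap is the remaining case $X_t\in\ch_i$. The route you propose there fails. A modification of $f_t$ that is constant in $x_i$ does not make $\tilde p_t/p_t$ independent of $x_i$, because $x_i$ enters the noise density through the residual $x_t-f_t(\pa_t)$. Take $X_i\sim\mathcal{N}(0,1)$ and $X_t=aX_i+U_t$ with $U_t\sim\mathcal{N}(0,\sigma^2)$. Apply the mean shift $\tilde f_t=f_t+c$, which is an instance of the noise-preserving soft interventions used in the synthetic experiments. Then $\tilde p_t/p_t=\exp\bigl((2c(x_t-ax_i)-c^2)/(2\sigma^2)\bigr)$. Moreover, $\EX[X_i\mid X_t=x_t]=ax_t/(a^2+\sigma^2)$ becomes $a(x_t-c)/(a^2+\sigma^2)$ under $P^I$. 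So for a parent of the target the statement, read literally, is false, and no argument can close this case.

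Your fallback condition does not rescue it either. The $x_i$-independence of the ratio is also necessary for invariance. Together with the normalization of $\tilde p_t$, under Gaussian additive noise it forces $\tilde p_t=p_t$ whenever $f_t$ depends continuously and non-trivially on $x_i$, by completeness of the Gaussian location family. So the condition rules out essentially every non-trivial intervention.

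The missing idea is to restrict which $X_i$ the claim is about, not which interventions are allowed. The paper's proof does this through Corollary~\ref{corollary:intervention_ancestor}: the only non-target variables that matter are those with a marginal shift, and these are descendants of $X_t$. A parent of $X_t$ keeps its observational marginal, since no mechanism among its ancestors changes, so it is discarded in Phase~1. For a descendant $X_i$ of $X_t$, the relation $X_t\in\ch_i$ would close a directed cycle. Hence your problematic case never arises, and your first case already finishes the proof.

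Stated for $X_i\in\mathcal{K}\setminus\{X_t\}$, or more generally for any $X_i$ with $X_t\notin\ch_i$, your argument is complete. It also covers the configuration where $X_t$ is an indirect ancestor that is a spouse of $X_i$. The paper's split into ``outside $\bd(X_i)$'' versus ``parent'' does not explicitly handle that configuration.
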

\begin{proof}
By the ICM assumption \ref{assm:icm}, when an intervention is applied on $X_t$, it exclusively changes the causal mechanism $f_t$ along with its associated conditional distribution $P(X_t | \pa_t)$. All other causal mechanisms are invariant. 

Now, we distinguish two cases on whether the intervention is done outside of (1) or within (2) the Markov Boundary:
\begin{enumerate}
    \item $X_t$ \textbf{is outside of the Markov Boundary}: This follows from the shielding property of $\bd(X_i)$
\begin{equation}
    X_i \perp (\textbf{X} \setminus \bd(X_i)) | \bd(X_i) \implies X_i \perp X_t | \bd(X_i).
\end{equation}
Therefore, $\bd(X_i)$ shields $\textbf{X}_i$ from all other variables, including the intervened variable $X_t$. 
    \item  $X_t$ \textbf{is inside of the stable set}: In this case, the shielding property of Markov blankets cannot be used. For example, if $X_t$ is a parent of $X_i$, then $X_i \not\perp X_t | \bd(X_i)$, therefore a different argument is needed. 
    
    By Corollary \ref{corollary:intervention_ancestor}, we know that the interventional target is an ancestor of all other variables exhibiting a marginal distribution shift between $P(\variables)$ and $P^I(\variables)$. The case of an indirect ancestor is covered by point (1), and here we discuss the case in which the interventional target $X_t$ is a parent of $X_i$ contained in $\bd(X_i)$. We recall the structural equation for $X_i$ as:
    \begin{equation}
    X_i := f_i(\pa_i, u_i) \quad \text{where }X_t \in \pa_i \subseteq \bd(X_i),
    \end{equation}
    where $u_i$ is the independent noise term. We now exploit again the structural invariance, i.e.\, the ICM assumption: An intervention inside $\pa_i \subseteq \bd(X_i)$ affects the marginal distribution $P(\bd(X_i))$, but it does not affect $P(X_i | \pa_i)$ which stays invariant since $f_i$ is untouched. And therefore $P(X_i | \bd(X_i))$ stays invariant.
\end{enumerate}
\end{proof}
\ourmethod\ is based on exploiting the conditional distribution shift induced by the intervention $I$. Therefore, we continue by characterizing the behavior of the intervention target $X_t$.
\begin{theorem}[Distribution Change of $P(X_t | \bd(X_t))$ for the Intervened node]\label{th:intervention_shift}
        Let $X_t \in \variables$ be a target variable and $\bd(X_t)$ an associated Markov Blanket. If a nontrivial intervention $I$ is applied on $X_t$, then its conditional distribution $P^{I}(X_t | \bd(X_t)) = P(X_t | \bd(X_t), I)$ with respect to the interventional distribution is different from the one on the observational distribution $P(X_t | \bd(X_t))$. Mathematically,
    \begin{equation}
        P^I(X_i | \bd(X_i)) \neq  P(X_i | \bd(X_i)).
    \end{equation}
\end{theorem}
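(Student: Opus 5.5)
Since no structural equation other than that of $X_t$ changes, the natural route is proof by contradiction through the factorization of the joint distribution. I will assume $P^I(X_t \mid \bd(X_t)) = P(X_t \mid \bd(X_t))$ and derive that the intervention is trivial. Write $\bd(X_t) = \pa_t \cup \ch_t \cup \mathrm{Sp}_t$. The conditional of $X_t$ given its Markov boundary is determined, up to normalization, by the factors of the Markov factorization that involve $X_t$:
\begin{equation}
  P(x_t \mid \bd(x_t)) \;\propto\; P(x_t \mid \pa_t) \prod_{X_c \in \ch_t} P(x_c \mid \pa_c),
\end{equation}
and the same identity holds under $P^I$. By Assumption~\ref{assm:icm}, every child factor $P^I(x_c \mid \pa_c) = P(x_c \mid \pa_c)$ is unchanged, and only $P(x_t\mid\pa_t)$ is replaced by $\tilde P(x_t \mid \pa_t)$, the conditional induced by $\tilde f_t$. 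Strict positivity (Assumption~\ref{assm:strict_positivity}) guarantees that these conditionals are well defined everywhere and that the child product is strictly positive. This is the first step.

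The second step takes the ratio of the two expressions. If the boundary conditionals agree, then for every fixed value of $\bd(X_t)$ the child products cancel, which gives $\tilde P(x_t \mid \pa_t) = c(\bd(x_t))\, P(x_t \mid \pa_t)$ for a normalizing function $c$. I will argue that $c$ cannot depend on the children or spouses: the left-hand side depends only on $(x_t,\pa_t)$, so after fixing $\pa_t$ the factor is a function of $\pa_t$ alone. Integrating over $x_t$ then gives $c \equiv 1$. Hence $\tilde P(X_t \mid \pa_t) = P(X_t \mid \pa_t)$ almost everywhere on the support of $\pa_t$. Together with the unchanged remaining factors, this means $P^I(\variables) = P(\variables)$.

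That equality contradicts the non-triviality of $I$, which I will take to mean, as in Assumption~\ref{assm:single_intervention}, that $I$ changes the distribution, for example through a marginal shift on $X_t$. Under the additive noise model (Assumption~\ref{assm:additive_noise_model}) the conclusion can be made concrete: the change is $\tilde f_t \neq f_t$ on a set of positive $P(\pa_t)$-measure, and this already shows that the conditional means given $\bd(X_t)$ differ.

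The step I expect to be the main obstacle is the non-triviality notion together with support issues. If $\pa_t$ has a different support under $P^I$, or if the mechanism change lives only on a region of $\pa_t$ that the interventional distribution never visits, the boundary conditionals could coincide where both are defined. I would handle this by interpreting non-triviality as $\tilde P(X_t\mid \pa_t) \neq P(X_t \mid \pa_t)$ on a set of positive measure under $P^I(\pa_t)$, a condition positivity makes harmless. For a hard intervention $X_t = a$, the interventional conditional is a point mass while the observational one is a positive density, so the two differ immediately.
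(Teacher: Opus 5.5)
Your proof is correct, and it takes a genuinely different, more careful route than the paper. The paper's proof is a one-line appeal to independent causal mechanisms. It writes the new mechanism as $\tilde f_t(\bd(X_t),u_t)$ and concludes from non-triviality that $P^I(X_t\mid\bd(X_t))\neq P(X_t\mid\bd(X_t))$, so it implicitly identifies the mechanism $P(X_t\mid\pa_t)$ with the boundary conditional. That identification is exactly the step that needs an argument. The set $\bd(X_t)$ also contains $\ch_t$ and $\mathrm{Sp}_t$, so the boundary conditional is a posterior that combines the changed factor with the unchanged child factors, and one must rule out that conditioning on the children absorbs the change.

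Your ratio argument does precisely this. Positivity lets the child product cancel. The ratio $\tilde P(x_t\mid\pa_t)/P(x_t\mid\pa_t)$ is then free of $x_t$ (by the right-hand side) and of $\ch_t,\mathrm{Sp}_t$ (by the left-hand side), and normalization forces it to equal $1$. What this buys is a real proof of the contrapositive $P^I(X_t\mid\bd(X_t))=P(X_t\mid\bd(X_t))\Rightarrow P^I=P$. It also makes explicit the hypotheses the paper leaves implicit:
\begin{itemize}
\item $P^I$ must factorize over the same DAG.
\item Non-triviality must be distributional. A purely functional change such as $\tilde f_t(\pa_t,u)=f_t(\pa_t,-u)$ with symmetric noise is invisible.
\item Positivity is genuinely needed. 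With a noiseless child $X_c=X_t$, both boundary conditionals equal the point mass at $x_c$ whatever $\tilde f_t$ is, so the statement fails.
\end{itemize}
The paper's version is shorter but leaves this step as an assertion.

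Two minor remarks. First, the support obstacle you anticipate does not arise. The variables in $\pa_t$ are non-descendants of $X_t$, so $P^I(\pa_t)=P(\pa_t)$. Since the child factors are unchanged and strictly positive, the $\bd(X_t)$-marginals under $P$ and under $P^I$ (for a soft intervention) are both strictly positive, so ``almost everywhere'' means the same under both.

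Second, your aside that under the additive noise model $\tilde f_t\neq f_t$ ``already shows'' that the conditional \emph{means} given $\bd(X_t)$ differ is not justified by what precedes it. When $X_t$ has children, the boundary conditional is the noise density tilted by the child likelihood. Shifting the location $f_t(\pa_t)\mapsto\tilde f_t(\pa_t)$ of the noise factor does not obviously move the mean of that tilted density. It does for Gaussian noise, because the mean of an exponential tilt is strictly monotone in the tilt parameter. The theorem as stated concerns distributions, so you do not need this, but it is the property the MSE-based predictability-drop step downstream would require.
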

\begin{proof}
This is an immediate implication of the ICM assumption (\ref{assm:icm}): Applying intervention $I$ results on a new causalmechanism,m $X_t = \tilde{f}_t (\bd(X_t), u_t)$ which entails a conditional distribution $P(X_t | \bd(X_t), I) = P^{I}(X_t | \bd(X_t))$. Given that the intervention $I$ is not the identity (by Assm.\ref{assm:single_intervention}), then it holds that
    \begin{equation}
        P(X_t | \bd(X_t), I) \neq  P(X_t | \bd(X_t)).
    \end{equation}
\end{proof}
A practical example for corollary \ref{th:intervention_shift} can be made in the case of an hard-intervention: such intervention entails a Dirac delta as a conditional and marginal distribution, that is $P(X_i|\bd(X_i), do(X_i = \delta(X_i - a))) = \tilde{P}(X_i)$. That would be clearly a distribution change unless $P(X_i|\bd(X_i))$ was also a point-mass distribution centered at $a \in \mathbb{R}$.
\begin{theorem}[Predictability drop under Interventional Data]\label{th:predictability_drop}
    Let $h^{\star}_{\text{obs}}: \mathcal{S} \to \mathcal{X}$ be a predictive model for the intervention target $X_t \in \variables$ learned using $\bd(X_t)$ as a predictor set and the observational distribution $P$. Let $\mathbb{M}(h,P)$ be a strictly proper metric (i.e.\, a function that is minimized only by a unique correct model under a distribution $P$), and assume $h^{\star}_{\text{obs}}$ is be the optimal minimizer for $\mathbb{M}(h,P_{obs})$, and that $h^{\star}_{\text{int}}$ is the one for $\mathbb{M}(h,P^I)$. Further, let $I$ be a nontrivial intervention on the target variable $X_j$ inducing the interventional distribution $P^I$. Then, it holds that
    \begin{equation}
        \mathbb{M}(h^{\star}_{\text{obs}}, P^I) > \mathbb{M}(h^{\star}_{\text{int}}, P^I).
    \end{equation}
    In other words, $h^{\star}_{obs}$ is optimal when evaluated on the \textit{observational} distribution, but not the optimal model anymore when evaluated on the \textit{interventional} distribution.
\end{theorem}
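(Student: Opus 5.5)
The plan is to argue by contradiction using uniqueness of the minimizer of a strictly proper metric, together with Theorem~\ref{th:intervention_shift}. The metric $\mathbb{M}(h,P)$ is understood as the expected loss $\mathbb{E}_{(\bd(X_t),X_t)\sim P}[l(h(\bd(X_t)),X_t)]$. Strict properness means that, for each $P$, the minimizer over measurable $h:\mathcal{S}\to\mathcal{X}$ is unique ($P$-almost surely on the support of $\bd(X_t)$). Moreover, this minimizer is a fixed functional of the conditional law $P(X_t\mid \bd(X_t))$: the conditional mean for squared error, or the conditional class probabilities for cross-entropy.

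The argument proceeds in three steps.

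\textbf{Step 1.} Since $h^{\star}_{\text{int}}$ minimizes $\mathbb{M}(\cdot,P^I)$, we have $\mathbb{M}(h^{\star}_{\text{obs}},P^I)\ge \mathbb{M}(h^{\star}_{\text{int}},P^I)$ trivially. It therefore suffices to exclude equality.

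\textbf{Step 2.} Suppose equality holds. Then $h^{\star}_{\text{obs}}$ is also a minimizer under $P^I$. By uniqueness, $h^{\star}_{\text{obs}}=h^{\star}_{\text{int}}$ $P^I$-a.s. By the pointwise characterization of the minimizer, the relevant functional of $P(X_t\mid \bd(X_t)=s)$ and of $P^I(X_t\mid \bd(X_t)=s)$ then agree for $P^I$-almost every $s$. Strict positivity (Assumption~\ref{assm:strict_positivity}) is needed here: it ensures that the supports of $\bd(X_t)$ under $P$ and $P^I$ overlap, so both conditionals and both optimal predictors are defined on a common set and the comparison is meaningful.

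\textbf{Step 3.} Derive a contradiction with Theorem~\ref{th:intervention_shift}, which gives $P^I(X_t\mid\bd(X_t))\neq P(X_t\mid\bd(X_t))$.

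Step 3 is the main obstacle. Agreement of a functional such as the conditional mean does not by itself force agreement of the full conditional distributions. I would handle this in one of two ways.

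\emph{Option (a): read strict properness literally.} The statement defines strictly proper as ``minimized only by the unique correct model''. I would interpret the predictor $h$ as outputting the conditional law itself, as with cross-entropy or a proper scoring rule over distributions. The minimizer is then exactly $P(X_t\mid\bd(X_t))$, and equality of minimizers gives equality of conditionals on a set of positive $P^I$-mass, contradicting Theorem~\ref{th:intervention_shift}.

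\emph{Option (b): squared error under Assumption~\ref{assm:additive_noise_model}.} Here the shift must be non-degenerate at the level of the minimizer's functional. Under additive noise with a mechanism change $f_t\to\tilde f_t$, I would check that the conditional mean of $X_t$ given $\bd(X_t)$ changes. This holds because, on the parent coordinates, the mean moves from $f_t$ to $\tilde f_t$, and the child and spouse mechanisms are untouched by the ICM assumption (Assumption~\ref{assm:icm}).

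I would also make explicit that ``nontrivial'' means the shift occurs on a set of positive $P^I$-measure. This is precisely what turns inequality of the minimizers into a strict gap in expected loss.
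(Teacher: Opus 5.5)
Your argument is correct and is essentially the paper's: both rest on uniqueness of the minimizer of the strictly proper metric combined with Theorem~\ref{th:intervention_shift}. Your Option (a) is precisely the paper's implicit reading that $h^{\star}_{\text{obs}}$ and $h^{\star}_{\text{int}}$ \emph{are} the conditionals $P(X_t\mid\bd(X_t))$ and $P^I(X_t\mid\bd(X_t))$. The paper does not address the functional-versus-distribution gap you flag, and your explicit positive-measure caveat is a genuine improvement.

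Rely on Option (a), though. Your Option (b) sketch is loose: $\mathbb{E}[X_t\mid\bd(X_t)]=f_t(\pa_t)+\mathbb{E}[U_t\mid\bd(X_t)]$ also depends on the children and spouses through the posterior of $U_t$. So a change $f_t\to\tilde f_t$ does not by itself show that the conditional mean changes; you would need a further argument, e.g.\ monotonicity of the posterior mean in the location parameter under log-concave noise.
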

\begin{proof}
Given that $\mathbb{M}$ is a strictly proper metric, $h^{\star}_{\text{obs}}$ and $h^{\star}_{\text{int}}$ are the unique minimizers for $\mathbb{M}(h, P)$ and $\mathbb{M}(h, P^I)$ respectively, therefore they correspond to the conditional distribution $P(X_t | \bd(X_t))$ in their respective environments (observational and interventional distribution).

By Assm.\ref{assm:single_intervention}, the intervention is nontrivial and cannot be the identity, and we proved in Theorem \ref{th:intervention_shift} that the observational and interventional conditional distributions differ. Therefore, $P(X_t \mid \bd(X_t)) \neq P^I(X_t \mid \bd(X_t))$ implies that $P^I \neq P$, and consequently $h^{\star}_{\text{obs}} \neq h^{\star}_{\text{int}}$. Since $h^{\star}_{\text{int}}$ is the \textit{unique} minimizer under the interventional distribution $P^I$, any different function $h^{\star}_{\text{obs}}$ must have a strictly higher error:
\begin{equation}
    \mathbb{M}(h^{\star}_{\text{obs}}, P^I) > \mathbb{M}(h^{\star}_{\text{int}}, P^I),
\end{equation}
which concludes our proof.
\end{proof}

\subsection{Non-Ideal Case}\label{appendix:theory_non_ideal}
After establishing the theoretical properties of our framework in the large-sample limit, we turn to the finite-data regime. 
In detail, we analyze the implementation of \ourmethod\ using the \textit{Sample Reweighted Decorrelation Operator} (SRDO) \citep{shen2019stablelearningsamplereweighting}. While SRDO is designed to learn the minimal stable set, we discussed in Appendix \ref{appendix:stable_learning_introduction} that the minimal stable set coincides with the Markov Boundary in our specific setting.
\paragraph{Phase 1: Detection of Marginal Distribution Shifts.} During this phase, we search for nodes with a marginal distribution shift, as the intervention target will be among them. We tackle this via Hypothesis testing. For each $X_i \in \variables$, we test the null hypothesis $H_0: P_{obs}(X_i) = P^I(X_i)$ against the alternative $H_1: P_{obs}(X_i) \neq P^I(X_i)$. We employ appropriate non-parametric tests based on the variable type:
\begin{itemize}
    \item \textbf{Continuous Variables:} We use the two-sample Kolmogorov-Smirnov (KS) test \citep{conover1999practical}.
    \item \textbf{Discrete/Categorical Variables:} We use the Pearson $\chi^2$ test \citep{pearsonchisquared}.
\end{itemize}
Let $\alpha$ be the significance level for these tests. The result is a candidate set $\hat{\mathcal{K}}$ containing variables where the p-value $< \alpha$, i.e.\, where a marginal distribution shift is detected. Errors in this phase result in discarding candidates that might be potentially the intervention target. Finite-data guarantees for this phase are just a direct application of the Dvoretzky-Kiefer-Wolfowitz inequality \citep{dkw_inequality}. 

With finite data, the sensibility of \ourmethod\ depends on the amount of data and on how much the anomaly stands above noise. To analyze this, we proceed by defining the Minimum Detectable Shift.
\begin{assumption}[Non-Zero Distribution Shift for Intervened Nodes]\label{assm:minimum_detectable_shift}
Let $X_j$ be a variable with observational distribution $P_{obs}$ and interventional distribution $P_{int}$. We quantify the magnitude of the marginal distribution shift $\delta_j$ as follows:
\begin{equation}
    \delta_j = 
    \begin{cases} 
        \sup_x |F_{obs}(x) - F_{int}(x)| & \text{if } X_j \text{ is continuous (Kolmogorov Dist.),} \\
        \frac{1}{2} \sum_{k} |P_{obs}(k) - P_{int}(k)| & \text{if } X_j \text{ is discrete (Total Variation Dist.).}
    \end{cases}
\end{equation}
where $F(\cdot)$ denotes the cumulative distribution function (CDF) and $P(\cdot)$ denotes the probability mass function (PMF). We assume that for any node affected by the intervention, there exists a non-trivial shift $\delta_j > 0$.
\end{assumption}

\begin{lemma}[Finite-Sample Detectability of Continuous-Distribution Shifts]\label{lemma:shift_detection_continuous}
Consider a r.v. $X$ with observational CDF $F_{obs}$ and interventional CDF $F_{int}$. By Assm.\ref{assm:minimum_detectable_shift}, $\delta = \sup_x |F_{obs}(x) - F_{int}(x)|$ is the Kolmogorov distance quantifying the distribution-shift.
We perform a two-sample Kolmogorov-Smirnov test at significance level $\alpha$ using $N$ samples from each distribution. 
By the Dvoretzky-Kiefer-Wolfowitz (DKW) inequality \citep{dkw_inequality, massart1990tight}, the probability of correctly detecting the shift (statistical power) is lower-bounded by:
\begin{equation}
    P(\text{Reject } H_0 \mid \delta) \ge 1 - 2 \exp\left( - \frac{N}{2} \left( \delta - \sqrt{\frac{2}{N}\ln\frac{2}{\alpha}} \right)^2 \right),
\end{equation}
provided that the sample size $N$ is large enough that $\delta > \sqrt{\frac{2}{N}\ln\frac{2}{\alpha}}$.
\end{lemma}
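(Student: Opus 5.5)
We need the power of the two-sample KS test. The test statistic is $D_N = \sup_x |\hat F_{obs,N}(x) - \hat F_{int,N}(x)|$, and the test rejects $H_0$ when $D_N > c_\alpha$. We take the DKW-based critical value $c_\alpha = \sqrt{\frac{2}{N}\ln\frac{2}{\alpha}}$. This is a valid level-$\alpha$ threshold: under $H_0$, a union bound of the one-sample DKW inequality for each empirical CDF with deviation $c_\alpha/2$ gives type I error at most $4e^{-Nc_\alpha^2/2}$. We only need this threshold as a fixed deterministic number, which depends on $N$ and $\alpha$ only.

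The key step is a triangle-inequality reduction of the rejection event to a uniform deviation event. For every $x$,
\[
|\hat F_{obs,N}(x)-\hat F_{int,N}(x)| \ge |F_{obs}(x)-F_{int}(x)| - |\hat F_{obs,N}(x)-F_{obs}(x)| - |\hat F_{int,N}(x)-F_{int}(x)|.
\]
Taking the supremum gives $D_N \ge \delta - E_{obs} - E_{int}$, where $E_\cdot = \sup_x|\hat F_{\cdot,N}-F_\cdot|$. Hence failure to reject, $\{D_N \le c_\alpha\}$, implies $E_{obs}+E_{int} \ge \delta - c_\alpha =: \gamma$, which is positive by the stated condition $\delta > c_\alpha$.

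Next we control $E_{obs}+E_{int}$. The cleanest route to the exact constant $N/2$ in the exponent is to treat the combined deviation through a two-sample DKW form: $\sup_x|(\hat F_{obs,N}-\hat F_{int,N}) - (F_{obs}-F_{int})|$ is bounded by the two-sample analogue of Massart's inequality, $P(\cdot \ge \gamma) \le 2\exp(-N\gamma^2/2)$. This is the effective sample size $N\cdot N/(2N) = N/2$ plugged into $2e^{-2m\gamma^2}$ with $m=N/2$. Applying it to the deviation $\delta - D_N \le \sup_x|\ldots|$ directly yields
\[
P(\text{not reject}) \le 2\exp\!\left(-\tfrac{N}{2}(\delta-c_\alpha)^2\right),
\]
which is the claim.

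The main obstacle is justifying the two-sample DKW bound with constant $2$ for arbitrary, possibly different, $F_{obs}\neq F_{int}$, since Massart's tight form is stated for one sample. If that citation is deemed insufficient, I would fall back on the union-bound route. Split $\gamma$ evenly and apply one-sample Massart to each sample: $P(E_{obs}\ge\gamma/2)+P(E_{int}\ge\gamma/2) \le 4e^{-N\gamma^2/2}$. This reproduces the exponent exactly but with prefactor $4$. Recovering the prefactor $2$ would then require either the sharper two-sample inequality or absorbing the constant into $\alpha$, and I would flag this explicitly.
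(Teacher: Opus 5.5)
Your triangle-inequality reduction is correct, and your fallback is exactly the paper's argument. As you suspect, it only gives $P(\text{Accept } H_0)\le 4\exp(-\tfrac N2(\delta-c_\alpha)^2)$. The paper's proof has the same hole: it derives the factor $4$ and then simply writes $1-2\exp(\cdot)$ in the final implication.

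Your primary route does not repair this.
\begin{itemize}
\item The two-sample DKW--Massart inequality with constant $2$ is a null-hypothesis result: both samples come from one continuous $F$, and even then it holds only for $n$ large enough. It is not a standard result you can cite for the centred process $(\hat F_{obs}-\hat F_{int})-(F_{obs}-F_{int})$ when $F_{obs}\neq F_{int}$, which is exactly the case you need.
\item Your heuristic is off by a factor of two in the exponent: $m=N/2$ in $2e^{-2m\gamma^2}$ gives $2e^{-N\gamma^2}$, not $2e^{-N\gamma^2/2}$.
\item Sharpening the tail bound on $E_{obs}+E_{int}$ cannot help either. Asymptotically $\sqrt N(E_{obs}+E_{int})$ is a sum of two independent Kolmogorov variables, and $P(K_1+K_2\ge 1.5)\approx 0.72>2e^{-9/8}\approx 0.65$. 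So for large $N$ and $\gamma=1.5/\sqrt N$, that event's probability genuinely exceeds $2e^{-N\gamma^2/2}$.
\item Absorbing the constant into $\alpha$ does not work, because $\alpha$ enters only through $c_\alpha$ in the exponent, not the prefactor.
\item Minor: your type-I computation gives $4e^{-Nc_\alpha^2/2}=2\alpha$, not $\alpha$. This is harmless, since the level is not used in the power bound.
\end{itemize}

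The missing idea is that no uniform deviation bound is needed. $\hat D_N$ is a supremum, so it dominates its value at a single point of near-maximal discrepancy.
\begin{itemize}
\item For $\epsilon\in(0,\delta-c_\alpha)$, take $x_\epsilon$ with $F_{obs}(x_\epsilon)-F_{int}(x_\epsilon)\ge\delta-\epsilon$. The opposite sign is symmetric, and using $\epsilon$ handles a supremum that is not attained.
\item On $\{\hat D_N\le c_\alpha\}$ we have $\hat F_{obs}(x_\epsilon)-\hat F_{int}(x_\epsilon)\le c_\alpha$.
\item This difference is a sum of $2N$ independent terms $\mathbf{1}\{X_i\le x_\epsilon\}/N$ and $-\mathbf{1}\{Y_j\le x_\epsilon\}/N$. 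Each lies in an interval of length $1/N$, and the mean is at least $\delta-\epsilon$.
\item One-sided Hoeffding therefore gives $P(\text{Accept } H_0)\le\exp(-N(\delta-\epsilon-c_\alpha)^2)$.
\item Letting $\epsilon\to0$ yields $P(\text{Accept } H_0)\le\exp(-N(\delta-c_\alpha)^2)\le 2\exp(-\tfrac N2(\delta-c_\alpha)^2)$.
\end{itemize}
This proves the lemma with room to spare and without any DKW inequality.
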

\begin{proof} We aim to bound the probability of a Type-II error (failing to detect a shift when one exists), i.e.\, accepting $H_0$ when it is false. 

First, we recall that the two-sample KS test statistic is the maximum distance between the empirical cumulative distribution functions (ECDFs), denoted as $\hat{F}_{obs}$ and $\hat{F}_{int}$:
\begin{equation}
    \hat{D}_{N} = \sup_x | \hat{F}_{obs}(x) - \hat{F}_{int}(x) |.
\end{equation}
The test rejects the null hypothesis ($H_0: F_{obs} = F_{int}$) if $\hat{D}_{N} > c(\alpha)$, where the critical value $c(\alpha) \approx \sqrt{\frac{2}{N}\ln\frac{2}{\alpha}}$.

We know the true distributions differ by $\delta$. By triangle inequality, the empirical distance is related to the true distance as
\begin{align}
    | F_{obs}(x) - F_{int}(x) | &= | (F_{obs}(x) - \hat{F}_{obs}(x)) + (\hat{F}_{obs}(x) - \hat{F}_{int}(x)) + (\hat{F}_{int}(x) - F_{int}(x)) |, \nonumber \\
    &\le | F_{obs}(x) - \hat{F}_{obs}(x) | + | \hat{F}_{obs}(x) - \hat{F}_{int}(x) | + | \hat{F}_{int}(x) - F_{int}(x) |.
\end{align}
Taking the supremum over all $x$ on both sides:
\begin{align}
    \underbrace{\sup_x | F_{obs}(x) - F_{int}(x) |}_{\text{True Shift } \delta} &\leq \sup_x \left( | F_{obs}(x) - \hat{F}_{obs}(x) | + | \hat{F}_{obs}(x) - \hat{F}_{int}(x) | + | \hat{F}_{int}(x) - F_{int}(x) | \right), \nonumber \\
    &\leq \underbrace{\sup_x | \hat{F}_{obs}(x) - F_{obs}(x) |}_{\text{Sampling Error } E_1} + \underbrace{\sup_x | \hat{F}_{obs}(x) - \hat{F}_{int}(x) |}_{\text{Empirical Distance } \hat{D}_{N}} + \underbrace{\sup_x | \hat{F}_{int}(x) - F_{int}(x) |}_{\text{Sampling Error } E_2}.
\end{align}
Rearranging the inequality to isolate the test statistic $\hat{D}_{N}$, we obtain the necessary lower bound:
\begin{equation}
    \hat{D}_{N} \ge \delta - (E_1 + E_2).
\end{equation}
We fail to reject $H_0$ if the empirical distance is too small, i.e.\, if $\hat{D}_{N} \leq c(\alpha)$. Using the inequality above, failure implies:
\begin{equation}
    \delta - (E_1 + E_2) \le c(\alpha) \implies (E_1 + E_2) \ge \delta - c(\alpha).
\end{equation}
Intuitively, the test fails if the sampling noise ($E_1 + E_2$) hides the true signal ($\delta - c(\alpha)$).

\citep{massart1990tight} provides a version of the DKW inequality stating that $\forall \epsilon > 0$ it holds that
\begin{equation}
    P(\sup_x |\hat{F}_N(x) - F(x)| > \epsilon) \le 2 \exp(-2N\epsilon^2).
\end{equation}
We apply this to the sum of errors $E_1 + E_2$. To bound $P(E_1 + E_2 \ge \lambda)$, where $\lambda = \delta - c(\alpha)$, we assume the worst-case split of error $\lambda/2$ for each term:
\begin{equation}
    P(\text{Accept } H_0 \mid \delta) \le P(E_1 \ge \lambda/2) + P(E_2 \ge \lambda/2) \le 2 \cdot \left[ 2 \exp\left(-2N (\frac{\lambda}{2})^2\right) \right].
\end{equation}
Simplifying the exponent, we have that $-2N(\lambda/2)^2 = -2N(\lambda^2/4) = -\frac{N}{2}\lambda^2$.

And finally, we reach our desired bound:
\begin{equation}
    P(\text{Accept } H_0 \mid \delta) \le 4 \exp\left( - \frac{N}{2} (\delta - c(\alpha))^2 \right) \implies P(\text{Reject } H_0 \mid \delta) \ge 1 - 2 \exp\left( - \frac{N}{2} \left( \delta - \sqrt{\frac{2}{N}\ln\frac{2}{\alpha}} \right)^2 \right).
\end{equation}
\end{proof}

We proceed by deriving an analogous bound for discrete or categorical variables. 
\begin{lemma}[Finite-Sample Detectability of Discrete Distribution Shifts]\label{lemma:discrete_shift_detection}
Consider a discrete/categorical r.v. $X$ taking values in a finite set of size $K$. Let $P_{obs}$ and $P_{int}$ denote the observational and interventional probability mass functions (PMFs).
By Assm.\ref{assm:minimum_detectable_shift}, the distribution shift $\delta$ for categorical variables is quantified using the Total Variation Distance (TVD).

We perform a two-sample test based on the $L_1$ distance (or the asymptotically equivalent Pearson $\chi^2$ test) at significance level $\alpha$ using $N$ samples per distribution.
By Weissman's Inequality \citep{weissman2003Inequalities}, the probability of correctly detecting the shift (statistical power) is lower-bounded by:
\begin{equation}
    P(\text{Reject } H_0 \mid \delta) \ge 1 - 2(2^K - 2) \exp\left( - \frac{N}{2} \left( \delta - \sqrt{\frac{2}{N}\ln\frac{2(2^K - 2)}{\alpha}} \right)^2 \right),
\end{equation}
provided that the sample size $N$ is sufficient such that $\delta > \sqrt{\frac{2}{N}\ln\frac{2(2^K - 2)}{\alpha}}$.
\end{lemma}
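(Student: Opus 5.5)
We follow the proof of Lemma~\ref{lemma:shift_detection_continuous}, replacing the DKW inequality with Weissman's $L_1$ concentration bound for empirical distributions on a finite alphabet. Let $\hat P_{obs}$ and $\hat P_{int}$ be the empirical PMFs from $N$ samples each. We use the test statistic
\[
\hat D_N = \tfrac12\sum_k |\hat P_{obs}(k)-\hat P_{int}(k)|,
\]
which rejects $H_0$ when $\hat D_N > c(\alpha)$. We take the critical value to be $c(\alpha)=\sqrt{\frac{2}{N}\ln\frac{2(2^K-2)}{\alpha}}$. This threshold comes from Weissman's inequality applied under $H_0$, with a union over the two samples, so that the test has level at most $\alpha$. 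For the Pearson $\chi^2$ test we only claim asymptotic equivalence, as the statement does. The rigorous argument is for the $L_1$/TV test.

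The first step is the triangle inequality in TV, taken coordinatewise and then summed over $k$:
\[
\delta \le \tfrac12\|P_{obs}-\hat P_{obs}\|_1 + \hat D_N + \tfrac12\|\hat P_{int}-P_{int}\|_1 .
\]
Write $E_1,E_2$ for the two outer terms. A Type~II error requires $\hat D_N\le c(\alpha)$, and hence $E_1+E_2\ge \lambda := \delta-c(\alpha)>0$. This is where the sample-size proviso is used. A union bound then gives
\[
P(\text{Accept }H_0\mid\delta)\le P(E_1\ge\lambda/2)+P(E_2\ge\lambda/2).
\]

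The second step bounds each term with Weissman et al.: $P(\|\hat P - P\|_1\ge\epsilon)\le(2^K-2)e^{-N\epsilon^2/2}$. The event $E_j\ge\lambda/2$ is the event $\|\hat P-P\|_1\ge\lambda$. Each term is therefore at most $(2^K-2)\exp(-N\lambda^2/2)$, and summing the two gives $2(2^K-2)\exp\bigl(-\frac N2(\delta-c(\alpha))^2\bigr)$, which is the claimed bound.

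The main obstacle is keeping the factor of $\tfrac12$ in the TV normalization and the constants in the exponent consistent. An $\epsilon/2$ split applied to $L_1$ norms instead of TV would give the exponent $N\lambda^2/8$ instead of $N\lambda^2/2$. I will express everything in $L_1$ norms, with $\|P_{obs}-P_{int}\|_1=2\delta$ and $\hat D_N$ as the TV statistic, and check that the exponent comes out as $N\lambda^2/2$. The same normalization makes $c(\alpha)$ valid as the level-$\alpha$ threshold under $H_0$. If the constants do not match exactly, I will keep the stated form with the worst-case split, as the paper does in the continuous case.
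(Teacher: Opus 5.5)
Your proposal is correct and follows the paper's proof essentially step for step. It uses the same TV test statistic, the same triangle-inequality reduction of a Type~II error to $E_1+E_2\ge\lambda$, the same $\lambda/2$ split, and Weissman's bound on $\|\hat P-P\|_1\ge\lambda$, giving $2(2^K-2)\exp(-N\lambda^2/2)$. Your derivation of $c(\alpha)$ from Weissman's inequality under $H_0$ is a small, correct addition that the paper leaves implicit; it shows the stated threshold gives level at most $\alpha$.
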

\begin{proof}
We follow the same logic as Lemma \ref{lemma:shift_detection_continuous}, adapting the metric from the Kolmogorov distance (used for continuous variables) to the $L_1$-norm of PMFs (used for categorical/discrete ones).
Let $\hat{P}_{obs}$ and $\hat{P}_{int}$ be the empirical PMFs estimated from $N$ samples. By Assm.\ref{assm:minimum_detectable_shift}, the test statistic is the empirical Total Variation Distance (or $L_1$-norm):
\begin{equation}
    \hat{D}_{N} = \frac{1}{2} \sum_{k=1}^K | \hat{P}_{obs}(k) - \hat{P}_{int}(k) | = \frac{1}{2} \| \hat{P}_{obs} - \hat{P}_{int} \|_1.
\end{equation}
The null hypothesis ($P_{obs} = P_{int}$) is rejected if $\hat{D}_{N} > c(\alpha)$.
By the triangle inequality on the $L_1$ norm (procedure completely identical to the proof of Theorem \ref{lemma:shift_detection_continuous}):
\begin{align}
    \underbrace{\frac{1}{2} \| P_{obs} - P_{int} \|_1}_{\text{True Shift } \delta} &\le \frac{1}{2} \| P_{obs} - \hat{P}_{obs} \|_1 + \frac{1}{2} \| \hat{P}_{obs} - \hat{P}_{int} \|_1 + \frac{1}{2} \| \hat{P}_{int} - P_{int} \|_1, \nonumber \\
    &= E_1 + \hat{D}_{N} + E_2,
\end{align}
where $E_1$ and $E_2$ represent the sampling errors of the empirical PMFs in terms of TVD.
Rearranging for the test statistic, we have $\hat{D}_{N} \ge \delta - (E_1 + E_2)$.
A Type-II error occurs if we fail to reject $H_0$, i.e., $\hat{D}_{N} \le c(\alpha)$, which implies:
\begin{equation}
    E_1 + E_2 \ge \delta - c(\alpha).
\end{equation}
\citep{weissman2003Inequalities} provides the concentration bound for the $L_1$ deviation of a multinomial estimate on $K$ categories, where $\forall\epsilon > 0$ it holds that
\begin{equation}
    P( \| \hat{P} - P \|_1 \ge \epsilon ) \le (2^K - 2) \exp\left( - \frac{N \epsilon^2}{2} \right).
\end{equation}
Note that our error terms $E$ are defined as $\frac{1}{2} \| \dots \|_1$. We have that $P(E \ge \gamma) = P(\frac{1}{2} \| \hat{P} - P \|_1 \ge \gamma) = P(\| \hat{P} - P \|_1  \ge 2 \gamma)$. Therefore, by Applying Weissman's bound to $P(\| \hat{P} - P \|_1  \ge 2 \gamma)$, we have that $\forall\gamma > 0$:
\begin{equation}
    P(\| \hat{P} - P \|_1  \ge 2 \gamma) \le (2^K - 2) \exp\left( - \frac{N (2\gamma)^2}{2} \right) = (2^K - 2) \exp( - 2N \gamma^2 ).
\end{equation}
Setting $\lambda = \delta - c(\alpha)$ and splitting the error budget $\lambda/2$ between $E_1$ and $E_2$:
\begin{align}
    P(\text{Accept } H_0 \mid \delta) &\le P(E_1 \ge \lambda/2) + P(E_2 \ge \lambda/2), \nonumber \\
    &\le 2 \cdot \left[ (2^K - 2) \exp\left( - 2N \left(\frac{\lambda}{2}\right)^2 \right) \right], \nonumber \\
    &= 2(2^K - 2) \exp\left( - \frac{N}{2} \lambda^2 \right).
\end{align}
Substituting $\lambda = \delta - c(\alpha)$ yields the required lower bound on the power.
\end{proof}

\paragraph{Phase 2: Stable Set Learning via SRDO.} In this phase, we learn the stable set for every variable in $\hat{\mathcal{K}}$. To do so, we employ the SRDO algorithm \citep{shen2019stablelearningsamplereweighting}. SRDO works by learning a reweighting of the samples that transforms $P(\variables)$ into $\tilde{P}(\variables) = w(\variables) \cdot P(\variables)$. These weights are subsequently used as a feature selection criterion. Further, \citep{shen2019stablelearningsamplereweighting, pmlr-v162-xu22o} prove that features with non-negligible regression coefficients belong to the stable set (in practice, a threshold is applied). Therefore, we can use SRDO to learn such stable set.
To do so, SRDO randomly resamples column-wise the entries in $\covariatematrix$, resulting in a new data distribution $\tilde{P}(\variables)$ where each variable is decorrelated. Subsequently, the weight matrix can be expressed as
\begin{equation}
    w(\variables) \sim \frac{\tilde{P}(\variables)}{P(\variables)} = \frac{\tilde{P}(X_1)\cdot \tilde{P}(X_2) \dots \cdot \tilde{P}(X_D)}{P(\variables)} 
\end{equation}
which can be estimated by density-ratio estimation \citep{sugiyama_suzuki_kanamori_2012_density_ratio}.

In the finite-data regime, this decorrelation procedure may be imperfect. Therefore, we define the \textit{decorrelation error} as follows.
\begin{definition}[Decorrelation Error]\label{def:decorrelation_error}
    Let $\covariatematrix \in \mathbb{R}^{n \times d}$ be the covariate matrix and let $\weightmatrix = diag(w_1, \dots, w_d) \in \mathbb{R}^{d \times d}$ be the weight matrix used to decorrelate features with SRDO. Further, let $\Sigma_W$ be the covariance matrix of $\tilde{P}(\variables)$. We define the decorrelation error as
    \begin{equation} 
    \epsilon_{dec} = \vert\vert \Sigma_W \circ (\mathbf{J} - \mathbf{I})\vert \vert_F ,
    \end{equation}
    where $\vert\vert \cdot \vert \vert_F$ is the Frobenius norm,  $\mathbf{I}\in \mathbb{R}^{D \times D}$ is the identity matrix, and $\mathbf{J}\in \mathbb{R}^{D \times D}$ is an all-ones matrix ($[\mathbf{J}]_{ij} = 1$ $\forall i,j$).
\end{definition}
Errors in the de-correlation procedure can result in erroneous variables contained in the stable set. Before continuing with the last theorem, we have to define quantitatively the magnitude of the intervention by using the Structural Intervention Strength.
\begin{definition}[Structural Intervention Strength]\label{def:structural_intervention_strength}
The structural intervention strength $\nu$ is the magnitude of the mechanism shift for the true target variable $X_t$. By denoting with $h^*_t$ the optimal model trained on the observational data, we define it as
\begin{equation}
    \nu \coloneqq \Delta_t = \mathcal{R}_{int}(h^*_t) - \mathcal{R}_{obs}(h^*_t),
\end{equation}
where $\mathcal{R}_{obs}(\cdot)$ and $\mathcal{R}_{int}(\cdot)$ denote the expected risk in the observational and interventional environments, respectively. Here, $h^*_t$ represents the optimal predictive model for $X_t$ inferred from the observational distribution using the true stable set (Markov Boundary). 

Intuitively, $\nu$ measures the error increase encountered by the stable model when the underlying mechanism $P(X_t | \pa_t)$ changes due to an intervention.
\end{definition}
\begin{theorem}[End-to-End Finite-Sample Identification]
Let $X_t$ be the true intervention target with marginal shift $\delta_{shift} > 0$ and structural intervention strength $\nu > 0$. Let $N$ be the sample size. If the \textit{Identifiability Margin Condition} is satisfied, where the true intervention signal dominates the stable set approximation error:
\begin{equation}
    \nu > C \cdot \epsilon_{dec} + \eta,
\end{equation}
where $\eta > 0$ is a separation margin. Then, \ourmethod\ correctly identifies $X_t$ as the unique target with probability at least:
\begin{equation}
    P(\text{Success}) \ge 1 - \underbrace{\Lambda(N, \delta_{shift}, K)}_{\text{Phase 1 Failure}} - \underbrace{\exp\left( - N \eta^2 / 8 \right)}_{\text{Phase 2 Failure}} - \underbrace{8 (|\hat{\mathcal{K}}| - 1) e^{- \frac{N \eta^2}{8 M^2}}}_{\text{Phase 3 Failure}},
\end{equation}
where $\Lambda(\cdot)$ is the bound on failing to detect the marginal shift (Type II error), defined based on the variable type of $X_t$:
\begin{equation}
    \Lambda = \begin{cases} 
    2 \exp\left( -2N (\delta_{shift} - \tau_{cont})^2 \right) & \text{if } X_t \text{ is continuous}, \\
    2(2^K - 2) \exp\left( - \frac{N}{2} (\delta_{shift} - \tau_{disc})^2 \right) & \text{if } X_t \text{ is discrete},
    \end{cases}
\end{equation}
where $K$ is the number of categories, and $\tau$ represents the respective test thresholds.
\end{theorem}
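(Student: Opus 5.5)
The plan is to decompose the success event into three good events, one per phase, and bound the failure probability with a union bound:
\[
P(\text{Failure}) \le P(E_1^c) + P(E_2^c) + P(E_3^c \cap E_1 \cap E_2),
\]
with the events defined as follows.
\begin{itemize}
\item $E_1$: the target is retained, $X_t \in \hat{\mathcal{K}}$.
\item $E_2$: the SRDO-estimated boundary of every candidate is close enough to the true $\bd(\cdot)$ that the population risk gap of $X_t$ stays above $\nu - C\epsilon_{dec}$, and that of non-targets below $C\epsilon_{dec}$.
\item $E_3$: the empirical RRV/risk-difference score of $X_t$ strictly exceeds that of every other candidate in $\hat{\mathcal{K}}$.
\end{itemize}
On $E_1\cap E_2\cap E_3$, \ourmethod\ ranks $X_t$ first, which gives the claim.

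\textbf{Phase 1.} Bounding $P(E_1^c)$ applies Lemma~\ref{lemma:shift_detection_continuous} or Lemma~\ref{lemma:discrete_shift_detection} to $X_t$, with $\delta=\delta_{shift}$ by Assumption~\ref{assm:minimum_detectable_shift}. The critical value $c(\alpha)$ is absorbed into the threshold $\tau_{cont}$ or $\tau_{disc}$. The constants in the statement (rate $2N$ with prefactor $2$ in the continuous case) correspond to not splitting the error budget between the two samples. I would either reuse a pooled-sample DKW argument or simply accept the lemma's form and rename constants into $c_{cont}$ and $c_{disc}$.

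\textbf{Phase 2.} For the non-targets, Theorem~\ref{th:non_intervention_stability} gives invariance of $P(X_i\mid\bd(X_i))$, so the population gap with the true boundary is zero. For the target, Definition~\ref{def:structural_intervention_strength} gives gap $\nu$. I would then argue a perturbation bound: replacing $\bd$ by the SRDO estimate changes each population risk gap by at most $C\epsilon_{dec}$. The route is that the residual covariance $\Sigma_W\circ(\mathbf J-\mathbf I)$ controls the spurious regression coefficients, via a linear-algebra bound on the weighted least-squares solution, with Lipschitz constants absorbed in $C$. The probability that the empirical decorrelation estimate deviates enough to violate the margin by more than $\eta/2$ is then bounded by a Hoeffding-type inequality on the bounded weighted empirical moments, giving $\exp(-N\eta^2/8)$.

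\textbf{Phase 3.} I condition on $E_1\cap E_2$. The margin condition $\nu > C\epsilon_{dec}+\eta$ then yields a population separation of at least $\eta$ between the target's score and every other candidate's score. Each empirical score is an average of losses bounded by $M$, computed on the observational held-out set and on the reweighted interventional set. Hoeffding's inequality bounds the probability that either of the two risks of a given candidate deviates by more than $\eta/4$ by roughly $2e^{-N\eta^2/(8M^2)}$ each. Counting both risks for both the target and the competitor, the mistaken pairwise ordering against one competitor has probability at most $8e^{-N\eta^2/(8M^2)}$. A union bound over the $|\hat{\mathcal K}|-1$ competitors gives the Phase 3 term.

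\textbf{Main obstacle.} I expect the Phase 2 step to be the hardest. The difficulty is making rigorous that $\epsilon_{dec}$ controls the change in population risk gaps by a universal constant $C$, together with the accompanying concentration at rate $\eta^2/8$. This requires a linear/Lipschitz structure of the predictor and boundedness of the weights. I would first try the linear additive-noise case (Assumption~\ref{assm:additive_noise_model}) with a Davis–Kahan/least-squares perturbation argument, and state the general case as holding under that assumption. The importance weighting in Phase 3 also needs bounded density ratios, which I would fold into $M$.
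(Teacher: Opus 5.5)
Your decomposition matches the paper's: a union bound over detection, stable-set and ranking events, the two lemmas for Phase 1, a concentration bound for Phase 2, and Hoeffding plus a union bound over $\hat{\mathcal K}\setminus\{X_t\}$ for Phase 3. The gap is that the margin bookkeeping does not close. On $E_2$ you tolerate an erosion of the margin by up to $\eta/2$. In Phase 3 you then use a population separation of a full $\eta$, but on $E_1\cap E_2$ only $\eta/2$ is left.

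As written it is even less. Your $E_2$ lowers the target's gap by $C\epsilon_{dec}$ and raises each non-target's gap by $C\epsilon_{dec}$, so the separation is $\nu-2C\epsilon_{dec}$. The condition $\nu>C\epsilon_{dec}+\eta$ only bounds this below by $\eta-C\epsilon_{dec}$, so the perturbation constant has to be $C/2$.

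With separation $\eta/2$, a ranking flip forces one of the four risk estimates to deviate by $\eta/8$. Hoeffding then gives $2e^{-N\eta^2/(32M^2)}$ per risk, and the Phase 3 term becomes $8(|\hat{\mathcal K}|-1)e^{-N\eta^2/(32M^2)}$. This is exactly what the paper's own proof arrives at, since it splits $\eta$ as $\eta/2+\eta/2$ between Phases 2 and 3. It is weaker than the $e^{-N\eta^2/(8M^2)}$ in the statement, so the paper does not establish that rate either. Your $8M^2$ comes only from spending the margin twice. To get the printed term you need a stronger hypothesis, e.g.\ $\nu>C\epsilon_{dec}+\tfrac{3}{2}\eta$, or you must settle for the $32M^2$ exponent.

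Phase 2 also cannot be a plain Hoeffding bound. The SRDO weights $\hat{\mathbf W}$ are fitted on the same sample, so the decorrelation error of the learned weights is not an average of i.i.d.\ terms of a fixed function. The paper instead uses uniform convergence over the weight class, $\sup_{\mathbf W}|\epsilon_{dec}(\mathbf W)-\hat\epsilon_{dec}(\mathbf W)|\le 2\mathfrak R_N(\mathcal F_{\mathcal W})+\sqrt{\log(1/\delta')/(2N)}$. The Rademacher term consumes $\eta/4$ of the Phase 2 budget once $N$ is large, and the remaining $\eta/4$ on the deviation term is what produces $\exp(-N\eta^2/8)$.

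Your planned perturbation lemma tying the risk gaps to $C\epsilon_{dec}$ goes beyond the paper, which simply takes the margin condition as giving the population separation.

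Finally, your Phase 1 diagnosis is right. The lemma's proof yields $4\exp(-\tfrac{N}{2}(\delta-c(\alpha))^2)$, and renaming constants does not turn this into $2\exp(-2N(\delta_{shift}-\tau_{cont})^2)$. A pooled-sample argument will not recover it either, because two independent samples of size $N$ give an effective size of only $N/2$. The paper merely asserts that form of $\Lambda$, so this term is also only justified with modified constants.
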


\begin{proof}
We define the event of "Success" as the intersection of the success events of the three phases:
\begin{itemize}
    \item $E_{detect}$ (Detection): The true target $X_t$ is successfully included in the candidate set $\hat{\mathcal{K}}$ (i.e., the hypothesis test correctly rejects the null).
    \item $E_{stable}$ (Stable Set Generalization): An accurate stable set is learned, meaning the learned weights $\hat{\mathbf{W}}$ generalize to unseen environments (i.e., no overfitting).
    \item $E_{ident}$ (Identification): Among the candidates in $\hat{\mathcal{K}}$, the empirical predictability gap of the target $\hat{\Delta}_t$ is strictly larger than that of any non-target $X_j$.
\end{itemize}
Success occurs if $E_{detect} \cap E_{stable} \cap E_{ident}$ holds. By De Morgan's laws and the union bound, the probability of failure is:
\begin{equation}
    P(\text{Failure}) = P(E_{detect}^c \cup E_{stable}^c \cup E_{ident}^c) \le P(E_{detect}^c) + P(E_{stable}^c) + P(E_{ident}^c).
\end{equation}

The Identifiability Margin Condition ensures that the true target's signal $\nu$ is strictly separated from the non-target leakage by a safety buffer $\eta$. In simple words, the target signal $\nu$ emerges from noise by at least a margin $\eta$. In the following, we will use $\eta$ as the available error budget against finite-sample noise. We will allocate $\eta$ between phase 2 and 3 as a budget, and the proof will show that the probability of the actual sampling noise exceeding this budget decays exponentially with $N$.

We proceed by bounding each failure term separately. 

\paragraph{1. Bounding Phase 1 Failure ($P(E_{detect}^c)$).}
Event $E_{detect}^c$ corresponds to a Type II error in the hypothesis test for the intervention target $X_t$. The convergence guarantees depend on the data type of $X_t$:

\textit{Case A: Continuous Target.} We apply Lemma \ref{lemma:shift_detection_continuous}. Using the DKW inequality, the probability that the empirical Kolmogorov distance falls below the threshold $\tau_{cont}$ is bounded by:
\begin{equation}
    P(E_{detect}^c) \le 2 \exp\left( -2N (\delta_{shift} - \tau_{cont})^2 \right).
\end{equation}

\textit{Case B: Discrete Target.} We apply Lemma \ref{lemma:discrete_shift_detection}. Using Weissman's inequality for the Total Variation distance, given $K$ categories, the failure probability is bounded by:
\begin{equation}
    P(E_{detect}^c) \le 2(2^K - 2) \exp\left( - \frac{N}{2} (\delta_{shift} - \tau_{disc})^2 \right).
\end{equation}
In both cases, the error decays exponentially with $N$, ensuring that the true target is included in $\hat{\mathcal{K}}$ with high probability.

\paragraph{2. Bounding Phase 2 Failure ($P(E_{stable}^c)$).}
Here, we bound the probability that the learned stable set weights generalize. Mathematically, we require the true population decorrelation $\epsilon_{dec}(\hat{\mathbf{W}})$ to be close to the empirical estimate $\hat{\epsilon}_{dec}(\hat{\mathbf{W}})$. 
In SRDO, the optimization is formulated as a bounded Empirical Risk Minimization task over the weight hypothesis space $\mathcal{W}$; therefore, classic uniform convergence bounds apply. From \citep{10.5555/944919.944944}, we know that with probability $1 - \delta'$, it holds that:
\begin{equation}
    \sup_{\mathbf{W} \in \mathcal{W}} | \epsilon_{dec}(\mathbf{W}) - \hat{\epsilon}_{dec}(\mathbf{W}) | \le 2 \mathfrak{R}_N(\mathcal{F}_{\mathcal{W}}) + B \sqrt{\frac{\log(1/\delta')}{2N}},
\end{equation}
where $B$ is the range of the error (assumed $B=1$ via normalization), and $\mathfrak{R}_N(\mathcal{F}_{\mathcal{W}})$ is the Rademacher complexity of $\mathcal{F}_{\mathcal{W}}$. A failure in Phase 2 occurs if the generalization error (LHS) exceeds the allocated safety margin. We allocate $\eta/2$ of the total margin $\eta$ to this phase.
First, \citep{10.5555/944919.944944} shows that the Rademacher complexity term $\mathfrak{R}_N(\mathcal{F}_{\mathcal{W}}) \in \mathcal{O}(N^{-1/2})$, therefore $\exists N \in \mathbb{N}$ such that this term is bounded by $\eta/4$. The remaining budget $\eta/4$ covers the sampling noise.
We set the failure condition as the event where the noise term exceeds this budget:
\begin{equation}
    \sqrt{\frac{\log(1/\delta')}{2N}} > \frac{\eta}{4}.
\end{equation}
We now solve this inequality for $\delta'$ to bound the probability of Phase 2 failure:
\begin{equation}
    \sqrt{\frac{\log(1/\delta')}{2N}} > \frac{\eta}{4} \implies P(E_{stable}^c) = \delta'  < \exp\left( - \frac{N \eta^2}{8} \right).
\end{equation}

\paragraph{3. Bounding Phase 3 Failure ($P(E_{ident}^c)$).}
Given an accurate stable set, we aim to bound the probability that the sampling noise causes a non-target variable $X_j$ to have a larger predictability gap than the true target $X_t$ (a ranking error). We allocate the remaining noise budget of $\eta/2$ to Phase 3.

Let $\Delta_k$ denote the population (asymptotic) predictability gap for variable $X_k$, and $\hat{\Delta}_k$ denote its empirical estimate. By the Identifiability Margin Condition and our allocated budget $\eta / 2$, the true gaps satisfy $\Delta_t - \Delta_j \geq \eta/2$ for all $j \neq t$. A failure occurs if $\hat{\Delta}_j \ge \hat{\Delta}_t$. Rearranging terms:
\begin{align}
    \hat{\Delta}_j &\ge \hat{\Delta}_t, \nonumber \\
    (\hat{\Delta}_j - \Delta_j) + \Delta_j &\ge (\hat{\Delta}_t - \Delta_t) + \Delta_t, \nonumber \\
    (\hat{\Delta}_j - \Delta_j) - (\hat{\Delta}_t - \Delta_t) &\ge \Delta_t - \Delta_j, \nonumber \\
    (\hat{\Delta}_j - \Delta_j) - (\hat{\Delta}_t - \Delta_t) &\ge \eta/2.
\end{align}
Let $\mathcal{E}_k = \hat{\Delta}_k - \Delta_k$ be the estimation error for variable $k$. The failure condition $\mathcal{E}_j - \mathcal{E}_t \ge \eta / 2$ implies that at least one error must be large:
\begin{equation}\label{error_decomp}
    \{\mathcal{E}_j - \mathcal{E}_t \ge \eta/2 \} \implies \{|\mathcal{E}_j| \ge \eta/4\} \cup \{|\mathcal{E}_t| \ge \eta/4\}.
\end{equation}
Note that $\mathcal{E}_k$ decomposes into errors on interventional and observational risks:
\begin{equation}
    \mathcal{E}_k = (\hat{\mathcal{R}}_{int} - \mathcal{R}_{int}) - (\hat{\mathcal{R}}_{obs} - \mathcal{R}_{obs}).
\end{equation}
Assuming the loss is bounded by $M \in \mathbb{R}_{>0}$, we can apply Hoeffding's inequality, as the loss is assumed to be the sum of $N$ independent samples. For the total error $\mathcal{E}_k$ to stay within $\eta/4$, each risk term must stay within $\eta/8$. The probability of exceeding this is:
\begin{align}
    P(|\mathcal{E}_k| \ge \eta/4) &\le P(|\hat{\mathcal{R}}_{int} - \mathcal{R}_{int}| \ge \eta/8) + P(|\hat{\mathcal{R}}_{obs} - \mathcal{R}_{obs}| \ge \eta/8), \nonumber \\
    &\le 2 \exp\left( -\frac{2N (\eta/8)^2}{M^2} \right) + 2 \exp\left( -\frac{2N (\eta/8)^2}{M^2} \right), \nonumber \\
    &= 4 \exp\left( -\frac{N \eta^2}{32 M^2} \right).
\end{align}
Using the union bound on Eq. \ref{error_decomp}, the pairwise ranking failure probability is:
\begin{equation}
    P(\hat{\Delta}_j \ge \hat{\Delta}_t) \leq P(|\mathcal{E}_j| \ge \eta/4) + P(|\mathcal{E}_t| \ge \eta/4) \leq 8 \exp\left( -\frac{N \eta^2}{32 M^2} \right).
\end{equation}
Finally, we apply the union bound over all candidate variables. A global failure occurs if \textit{any} non-target $j \in \hat{\mathcal{K}} \setminus \{t\}$ flips ranking with the target:
\begin{align}
    P(E_{ident}^c) &= P\Big(\bigcup_{j \in \hat{\mathcal{K}} \setminus \{t\}} \left\{ \hat{\Delta}_j \ge \hat{\Delta}_t \right\}\Big), \nonumber \\
    &\le \sum_{j \neq t} P\left( \hat{\Delta}_j \ge \hat{\Delta}_t \right), \nonumber \\
    &\le (|\hat{\mathcal{K}}| - 1) \cdot 8 \exp\left( -\frac{N \eta^2}{32 M^2} \right).
\end{align}
This confirms the exponential decay of the ranking error.

\textbf{Conclusion.}
Summing the failure probabilities for all three phases yields the theorem statement. As $N \to \infty$, all terms vanish, and the probability of success tends to 1 with exponential speed.
\end{proof}

\subsection{Discussion on scenarios with Multiple Interventional Targets}
In case of multiple interventions, most currently available approaches for RCA display limitations. For \ourmethod, the theory still holds either by adding an additional assumption, or by introducing partial causal information as an inductive bias:
\begin{enumerate}
    \item Assuming that interventions cannot happen at consecutive nodes, as formalized below:
    \begin{assumption}[Non-Consecutive Interventional Targets] \label{assm:nonconsecutive_targets}
    No interventional target is directly connected to another.
\end{assumption}

The reason why this assumption is needed is that intervening on the children of a target encounters two limitations of our current design: i) Being the children part of the Markov Boundary, it is not conditionally independent from its parents, and 2) The intervention removes a causal link, therefore impeding us to leverage on the ICM assumption (the mechanism was indeed changed) as done in Theorem \ref{th:non_intervention_stability}. Indeed, in the event of a simultaneous intervention applied both children and parent, it would not be possible to distinguish whether the targets are only the children alone, or both children and parent together.
\item Alternatively, Assm.\ref{assm:nonconsecutive_targets} can be relaxed in the case that prior knowledge is available on the causal relationship between the two targets: If we know that $X\to Y$, and force $X$ to not use $Y$ for making predictions, then the identifiability issue is resolved. In this case, observing a conditional shift on both variables can only be explained by a multiple intervention on parent and children simultaneously.
\end{enumerate}

\section{StableRCA Implementation Details}\label{appendix:stablerca_implementation}
Here we present the implementation details of each module in \ourmethod.

The overall algorithm takes as input two data frames: one corresponding to the observational dataset $\mathcal{D}_{\textrm{obs}}$ and the other to the interventional dataset $\mathcal{D}_{\textrm{int}}$. Each data frame contains $D$ columns, corresponding to the set of variables, and $N_{obs}$ (respectively $N_{int}$) rows, corresponding to the number of samples. The output is a dictionary with keys representing the variable names and values representing their corresponding scores for being root cause.

\textbf{In the first module}, we perform marginal distribution shift detection independently for each variable $d \in (1, ... D)$. Using the observational dataset, we first determine whether each variable should be treated as continuous or discrete based on its data type and cardinality. Specifically, a variable is considered discrete if it is categorical or if the number of unique values is below a predefined threshold (set to 10 in our experiments); otherwise, it is treated as continuous. For continuous variables, we apply the Kolmogorov–Smirnov test, while for discrete variables we use the chi-squared test, on both  $\mathcal{D}_{\textrm{obs}}$ and $\mathcal{D}_{\textrm{int}}$ to assess if a marginal distribution shift has occurred between these two datasets. Throughout our experiments, the significance level of the statistical tests is set to $\alpha = 0.001$. The output of this module is the set of variables $\mathbf{X}_{\textrm{shift}}$ that exhibit statistically significant marginal distribution shifts.

\textbf{In the second module}, we apply the SRDO algorithm \citep{shen2019stablelearningsamplereweighting}, a representative instantiation of the stable learning framework, to identify the Markov Boundary variables for each variable $X_i \in \mathbf{X}_{\textrm{shift}}$ detected in the first module. Specifically, each variable $X_i$ is treated as the target variable, while the remaining variables $\mathbf{X}_{-i} = \mathbf{X} \backslash X_i$ are used as predictors. SRDO takes the observational dataset as input and estimates sample-wise importance weights via an independence-driven weighting procedure, following the principles described in Appendix~\ref{appendix:theory_non_ideal}. Using the estimated weights, we then perform a weighted regression (for continuous targets) or a weighted classification (for discrete targets) to select the Markov Boundary features. To accommodate potentially nonlinear data-generating mechanisms and handle categorical variables effectively, we employ CatBoost-based regressors and classifiers \citep{10.5555/3327757.3327770}. Since tree-based models do not yield explicit regression coefficients, we use the built-in Gini importance scores as a proxy for feature relevance and perform feature selection accordingly. Features whose importance values exceed a predefined threshold are selected. In our experiments, this threshold is set to 0.15 times the maximum importance value across all features.

\textbf{In the third module}, we conduct conditional distribution shift detection by measuring the degradation in predictive performance of a learned model when evaluated on observational versus interventional data. The observational data set $\mathcal{D}_{\textrm{obs}}$ is first split into a training set and a held-out test set. For a target variable $X_i \in \mathbf{X}_{\textrm{shift}}$, the MB features identified in the second module are used as predictors to fit a predictive model -- specifically, a regressor if $X_i$ is continuous and a classifier if $X_i$ is discrete. In our experiments, we employ CatBoost-based regressors and classifiers. The trained model is then evaluated on both the held-out test set from $\mathcal{D}_{\textrm{obs}}$ and the interventional dataset $\mathcal{D}_{\textrm{int}}$. For continuous targets, we compute the mean squared error, while for discrete targets, we compute the cross-entropy loss. To isolate conditional distribution shifts from covariate shifts in the predictors, we apply an importance-weighting procedure when computing the loss on the interventional dataset, as described in Section~\ref{sec:method}. In practice, the importance weights are estimated by training an auxiliary nuisance classifier following standard procedures \citep{10.5555/1314498.1390324}. Finally, the root cause score for each variable $X_i$ is computed as the relative risk variation after importance weighting, as formalized in Equation~\eqref{eq:rca_score}. The resulting scores across all variables are used for the overall top-k evaluation of the complete RCA algorithm.

\section{Dataset Description}\label{appendix:dataset_description}

\subsection{Synthetic Data Generation}\label{appendix:synthethic_data}
We generate synthetic tabular datasets using a causal graph-based data-generating mechanism, and create anomalies by intervening on a small set of causal variables (“root causes”). In each trial, we first sample a causal graph that specifies the high-level data-generating structure among variables, and instantiate a structural causal model by assigning node-wise structural equations and exogenous noise distributions. We then draw a set of observational (normal) samples for training, and generate abnormal samples under interventions on the selected root-cause variables. Along with the resulting observational and interventional datasets, we also record the ground-truth root-cause set, as well as a designated target node for evaluation.

\paragraph{Causal Graph Construction}We first sample a causal graph $G = (V, E)$ with $|V| = n$ variables using 
ER algorithm \citep{erdos59a}. 
Concretely, we draw an undirected ER graph with $n$ nodes and $|E| = 2n$ edges, and then assign directions to the edges to obtain a directed acyclic graph (DAG) that serves as the causal structure for the data-generating mechanism.

\paragraph{Structural Assignments and Normal Data Generation}
Given the causal graph, we instantiate the data-generating mechanism by assigning each node a structural equation and an exogenous noise distribution, which together define how normal (observational) data are generated.
Specifically, root nodes are sampled independently from a uniform distribution over a predefined range, while each non-root node is generated as a function of its parent variables plus additive noise.
The the functional form is independently sampled as either a linear function or a nonlinear MLP-based function. For linear mappings, weights are independently sampled from a uniform distribution $\mathcal{U}(0.25, 1)$, and no bias term is used. For nonlinear mappings, we employ a two-layer MLP with a hidden dimension of 10 and randomly initialized weights.
For each node, we randomly select a noise distribution type from a predefined set that includes Gaussian, Gumbel, Uniform, and Exponential, and generate zero-mean noise with a fixed standard deviation.
The sampled noise is then added to the output of the structural function to obtain the final value of the node.
Sampling from this instantiated model yields the observational dataset, which serves as the reference distribution for root-cause analysis.

\paragraph{Intervention Design and Abnormal Data Generation}
To simulate anomalies, we intervene on a small set of variables referred to as \emph{root causes}.
The root-cause variables are selected from non-root nodes of the causal graph, and in multi-root-cause settings, we enforce that no selected variable is an ancestor or descendant of another, so as to avoid trivial causal chains within the intervention set.
By modifying the data-generating mechanism at these variables and sampling from the resulting model, we obtain the interventional (abnormal) dataset.
We consider the soft-function interventions, where the structural function of the variable is modified while keeping its noise distribution unchanged.

\paragraph{Target Node Selection}
To ensure that the target variable indeed corresponds to an anomalous observation caused by the selected root causes, the target node is determined \emph{after} the intervention mechanism is specified.
Specifically, given the selected root-cause variables, all their descendant nodes are identified, and one of them is randomly chosen as the target node.
This guarantees that the anomaly observed at the target node is causally influenced by the intervened variables.

\paragraph{Evaluation Protocol}
In addition to observational and abnormal datasets, the ground-truth root-cause set is recorded and used to evaluate the rankings produced by the RCA methods.
In each trial, 2{,}000 observational samples and 200 abnormal samples are provided, together with the underlying SCM when required by certain RCA methods.
By default, each experimental setting is repeated for 20 independent trials with different SCMs, and all reported results are averaged over three repetitive runs.
In multi-root-cause settings, top-$k$ metrics are reported with $k$ equal to the number of intervened variables.
For sample-level RCA methods, including Score Ordering, Smooth Traversal, Traversal, Counterfactual Attribution, RCG-0, and Cholesky, when multiple abnormal samples are available, a batch evaluation protocol is optionally applied, in which each sample is scored independently and the resulting node-wise scores are aggregated (e.g., by mean or max) to produce a final ranking.

\subsection{Real-World Datasets}
We provide detailed descriptions of the datasets used in our experimental evaluation. These datasets are derived from multiple real-world domains, including retail services, microservice architectures, manufacturing systems, and physical simulation environments, thereby reflecting a broad spectrum of practical deployment scenarios.

In our experiments, we utilize both normal and abnormal samples. For sample-level RCA baseline methods, including Score Ordering, Smooth Traversal, Traversal, RCG-0, and Cholesky, RCA scores are computed by aggregating results across all abnormal samples and evaluating on their average. For the counterfactual attribution method, due to computational constraints, evaluation is performed on a randomly sampled abnormal instance. For graph-based methods, we use the available causal graph whenever possible. Specifically, the ground-truth causal graphs are used for \textit{ProRCA}, \textit{CausalMan}, and \textit{CausalChambers}. For \textit{Sock-Shop}, we use the edge-reversed service call graph as a proxy causal graph. For \textit{RCAEval}, where ground-truth causal graphs are unavailable, we estimate a graph from normal data using XGES.

\textbf{ProRCA.} ProRCA is a semi-synthetic data provided by the ProRCA package, constructed via simulation to approximate real-world retail transaction processes in a complex, nonlinear manner \citep{dawoud2025prorcacausalpythonpackage}. The dataset is designed to model realistic business systems into which different types of anomalies can be systematically injected. It comprises 13 variables, with "PROFIT" designated as the outcome variable. The ground-truth causal graph underlying the data-generating process is illustrated in Figure~\ref{fig:prorca}. A total of five anomaly types can be injected to generate abnormal samples: \textit{ExcessiveDiscount}, \textit{COGs}, \textit{FulfillmentSpike}, \textit{ReturnSurge}, and \textit{ShippingDisruption}. We apply four of these anomaly types with predefined anomaly strengths. Additional dataset statistics and configuration details are reported in Table~\ref{tab:prorca_info}. 

\begin{figure}
    \centering
    \includegraphics[width=0.5\linewidth]{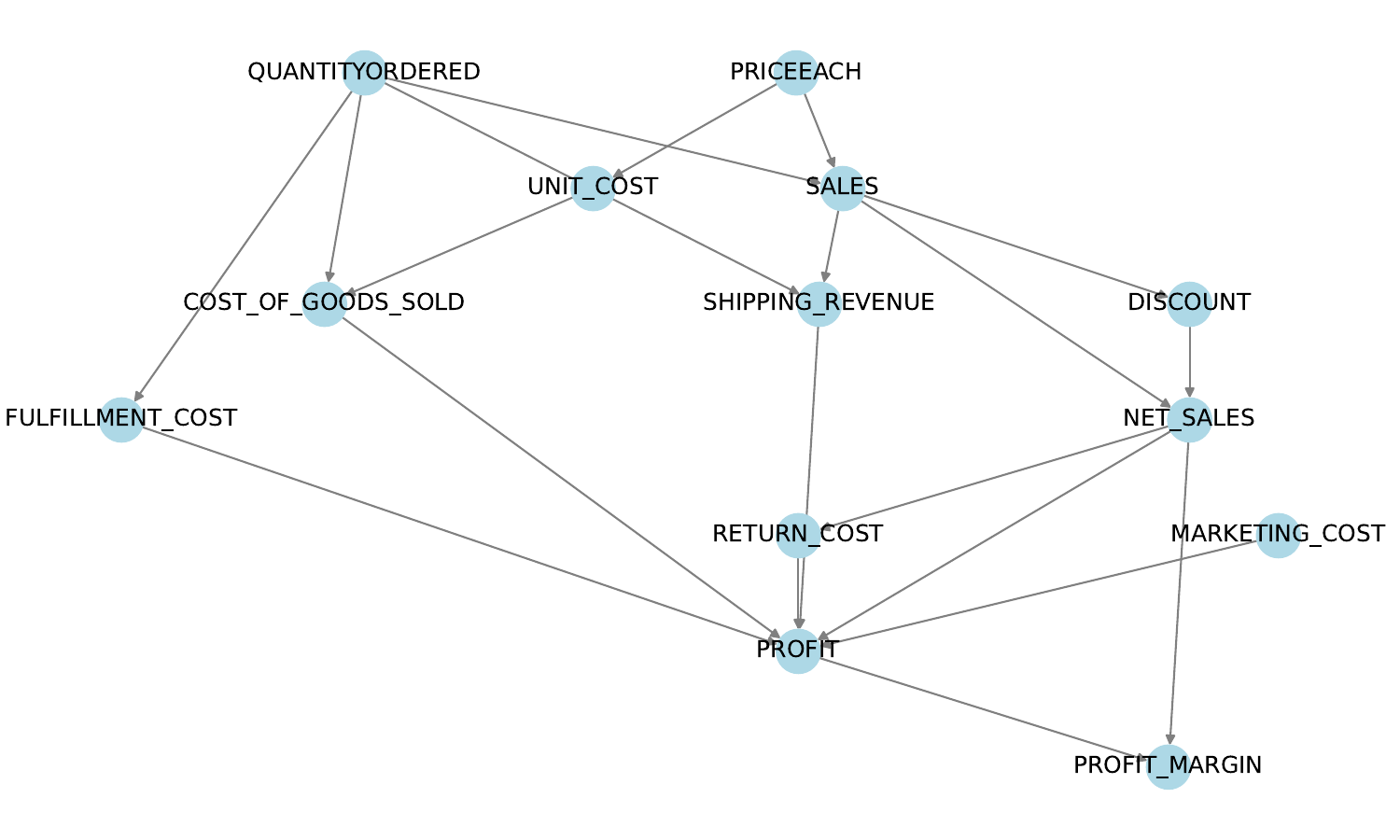}
    \caption{Causal graph of ProRCA dataset.}
    \label{fig:prorca}
\end{figure}

\begin{table}[ht!]
    \centering
    \caption{ProRCA data information.}
    \label{tab:prorca_info}
    \begin{tabular}{lccc}
        \toprule
        \textbf{Anomaly Type} & \textbf{Anomaly Strength} & \textbf{Normal Data Size} & \textbf{Abnormal Data Size} \\
        \midrule
        ExcessiveDiscount   & 0.2 & 300 & 60 \\
        FulfillmentSpike    & 3   & 200 & 40 \\
        ReturnSurge         & 10  & 300 & 60 \\
        ShippingDisruption  & 5   & 100 & 20 \\
        \bottomrule
    \end{tabular}
\end{table}

\begin{figure}
    \centering
    \includegraphics[width=0.5\linewidth]{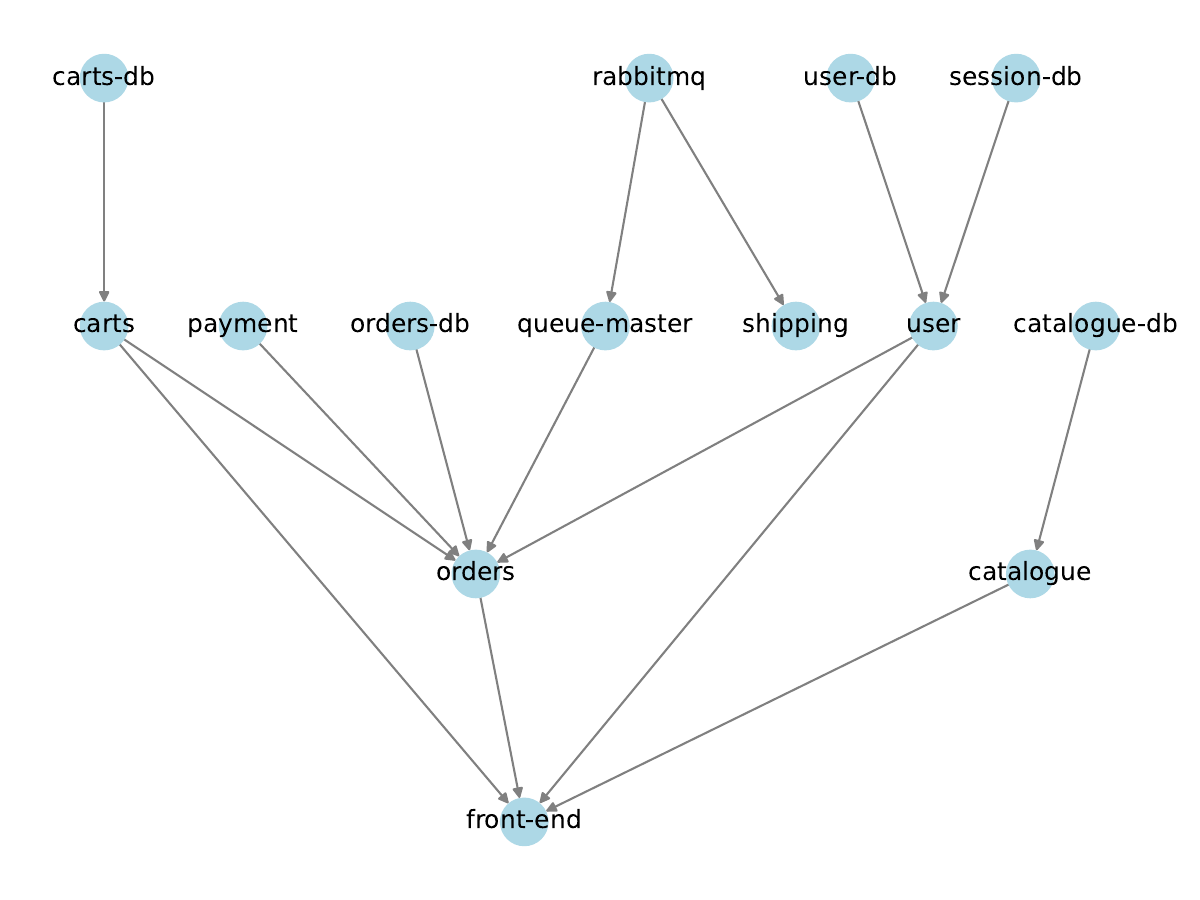}
    \caption{Causal graph of Sockshop dataset.}
    \label{fig:sockshop}
\end{figure}

\textbf{Sock-shop.} Sock-shop is a microservice monitoring data collected from an application of an online shop that sells socks \citep{sockshop2022}. It is a real-world, microservice-based testbed designed to facilitate the evaluation of microservice and cloud-native technologies. The dataset comprises 14 variables, and the underlying ground-truth causal graph is shown in Figure~\ref{fig:sockshop}. The variable "FRONT-END" is treated as the outcome variable. Five types of anomalies are injected into the system: CPU hog (cpu), memory leak (mem), disk I/O stress (disk), network delay (delay), and packet loss (loss). Each anomaly type is independently injected into one of five services — CARTS, CATALOGUE, ORDERS, PAYMENT, and USER. For each anomaly–service pair, five independent replicates are provided. In total, the dataset contains 125 anomaly-injection instances. Each instance consists of 360 normal samples and 361 anomalous samples. 

\textbf{CausalMan.} CausalMan is a manufacturing data simulator modeled after a real-world production line that assembles magnetic valves and hydraulic blocks in Hydraulic Units (HU) \citep{tagliapietra2025causalmanphysicsbasedsimulatorlargescale}. The simulator captures a wide range of linear and non-linear causal mechanisms, as well as challenging real-world behaviors such as discrete mode switches. The resulting CausalMan dataset is provided in two scales: \textbf{CausalMan Small}, which contains 157 variables, and \textbf{CausalMan Medium}, which contains 605 variables. In both settings, the data are generated under a causally sufficient assumption, where all relevant variables are observed. CausalMan represents a highly challenging real-world benchmark due to its complex non-linear dependencies and heterogeneous variable types, including continuous, categorical, and binary variables. The dataset includes 14 interventional settings targeting three variables, "PF\_M1\_T1\_Force", "PF\_M1\_T1\_Fmax" and "PF\_M1\_T1\_sgrad", that induce abnormal system behavior. Specifically, CausalMan Small contains 5 hard and 3 soft interventions, while CausalMan Medium includes 3 hard and 3 soft interventions. The outcome variable, "Sec\_C2\_Machine1\_ProcessResult" is binary and indicates whether the manufacturing process result is acceptable. Detailed information about all interventional configurations is provided in Table~\ref{tab:causalman_interventions}. For each intervention, both the normal (observational) dataset and the corresponding interventional (abnormal) dataset contain 47,400 samples. Due to computational constraints, we subsample 2,000 normal samples and 200 abnormal samples for each root cause analysis experiment.

\begin{table}[ht!]
    \centering
    \caption{CausalMan data information.}
    \label{tab:causalman_interventions}
    \resizebox{\linewidth}{!}{
    \begin{tabular}{llll}
        \toprule
        \textbf{Data Scale} & \textbf{Intervention Target Node} & \textbf{Intervention Type} & \textbf{Data Size} \\
        \midrule
        CausalMan Small  & PF\_M1\_T1\_Force & Hard intervention to value 16000 & 47,400 \\
        CausalMan Small  & PF\_M1\_T1\_Force & Hard intervention to value 17000 & 47,400 \\
        CausalMan Small  & PF\_M1\_T1\_Force & Hard intervention to value 30000 & 47,400 \\
        CausalMan Small  & PF\_M1\_T1\_Force & Soft intervention to $\mathcal{N}(17000, 3000)$ & 47,400 \\
        CausalMan Small  & PF\_M1\_T1\_Fmax  & Hard intervention to value 18500 & 47,400 \\
        CausalMan Small  & PF\_M1\_T1\_Fmax  & Soft intervention to $\mathcal{N}(18500, 3000)$ & 47,400 \\
        CausalMan Small  & PF\_M1\_T1\_sgrad & Hard intervention to value 20 & 47,400 \\
        CausalMan Small  & PF\_M1\_T1\_sgrad & Soft intervention to $\mathcal{N}(20, 4)$ & 47,400 \\
        CausalMan Medium & PF\_M1\_T1\_Force & Hard intervention to value 17000 & 47,400 \\
        CausalMan Medium & PF\_M1\_T1\_Force & Soft intervention to $\mathcal{N}(17000, 3000)$ & 47,400 \\
        CausalMan Medium & PF\_M1\_T1\_Fmax  & Hard intervention to value 18500 & 47,400 \\
        CausalMan Medium & PF\_M1\_T1\_Fmax  & Soft intervention to $\mathcal{N}(18500, 3000)$ & 47,400 \\
        CausalMan Medium & PF\_M1\_T1\_sgrad & Hard intervention to value 20 & 47,400 \\
        CausalMan Medium & PF\_M1\_T1\_sgrad & Soft intervention to $\mathcal{N}(20, 4)$ & 47,400 \\
        \bottomrule
    \end{tabular}
    }
\end{table}

\begin{figure}
    \centering
    \includegraphics[width=0.5\linewidth]{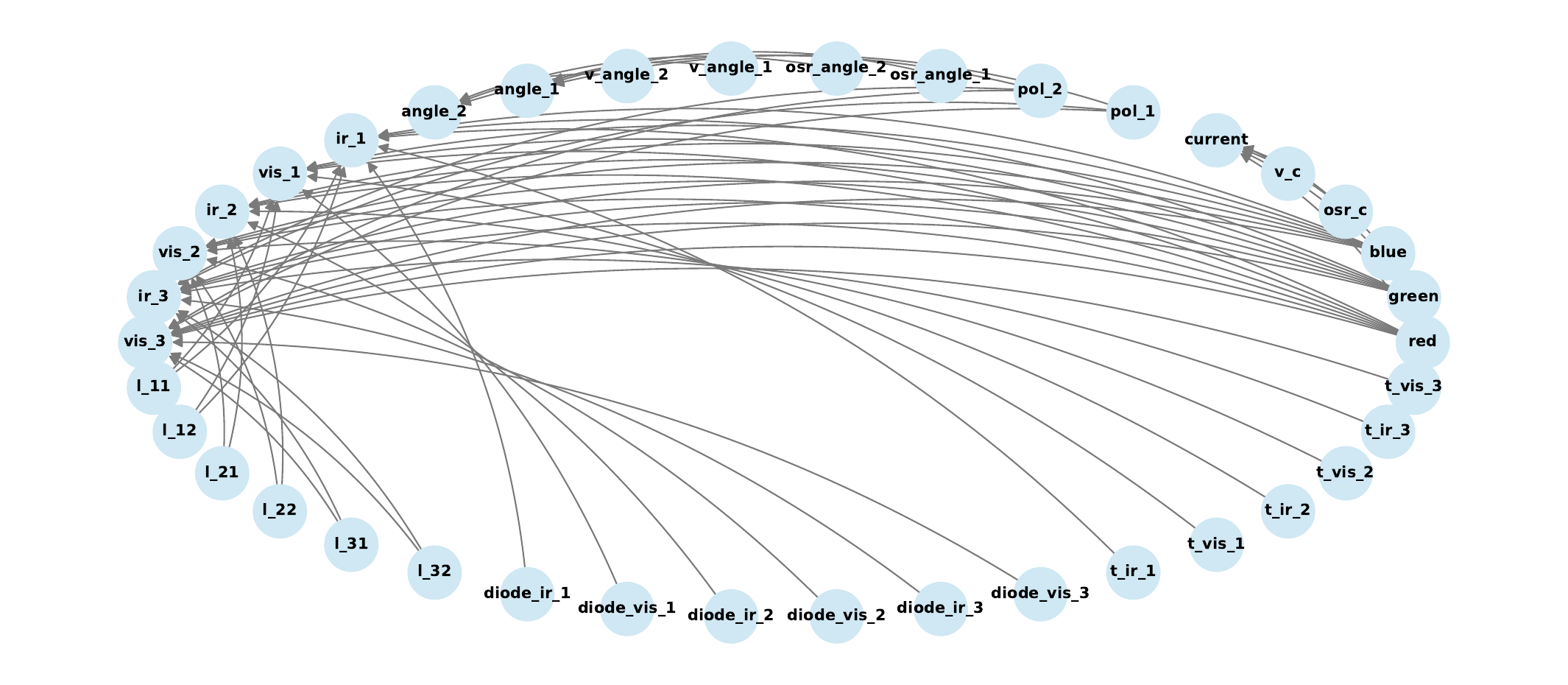}
    \caption{Causal graph of Causal-Chamber dataset.}
    \label{fig:causalchamber}
\end{figure}

\textbf{Causal-Chamber.} The Causal Chambers are real-world physical system simulators designed to generate data governed by known physical laws with fully specified ground-truth data-generating processes \citep{gamella2025chamber}. Two physical systems are provided: a wind tunnel and a light tunnel. The wind tunnel consists of a chamber equipped with two controllable fans that drive airflow, along with barometers that measure air pressure at multiple locations. The light tunnel comprises a controllable light source at one end and two linear polarizers mounted on rotating frames. In this work, we use interventional data generated from the light tunnel system for root cause analysis experiments. The light tunnel system contains 38 variables and supports 58 distinct interventions with varying intervention targets and anomaly strengths. The ground-truth causal graph of the system is shown in Figure~\ref{fig:causalchamber}. We conduct experiments on 16 interventions applied to continuous variables. For each intervention, the outcome variable is randomly selected from the leaf nodes among the descendants of the intervention target. The normal data is generated as follows. The actuators are sampled as follows: 
$R, G, B \overset{\text{i.i.d.}}{\sim} \text{Unif}(\{0, \dots, 85\})$; $L_{11}, L_{12}, L_{21}, L_{22}, L_{31}, L_{32} \overset{\text{i.i.d.}}{\sim} \text{Unif}(\{0, \dots, 170\})$; and $\theta_1, \theta_2 \overset{\text{i.i.d.}}{\sim} \text{Unif}(\{-15, -14.9, \dots, 25\})$, where $\text{Unif}(S)$ denotes sampling uniformly from $S$. The sensor parameters are set as follows: $O_1, O_2, O_C = 1$; $R_1, R_2, R_C = 5$; $D_1^I, D_2^I, D_3^I = 2$; $D_1^V, D_2^V, D_3^V = 1$; and $T_1^V, T_2^V, T_3^V, T_1^I, T_2^I, T_3^I = 3$. This procedure yields 10,000 normal samples. Detailed information about the interventions is summarized in Table~\ref{tab:causalchamber}. Each interventional (abnormal) dataset contains 1,000 samples. Due to computational constraints, we subsample 2,000 normal samples and 200 abnormal samples for each root cause analysis experiment.

\begin{table}[ht!]
    \centering
    \caption{Causal-Chamber data information.}
    \label{tab:causalchamber}
    \resizebox{\linewidth}{!}{
    \begin{tabular}{llp{0.62\linewidth}}
        \toprule
        \textbf{Intervention Type} & \textbf{Data Size} & \textbf{Data Description} \\
        \midrule
        Uniform colors mid    & 1000 & Sample $R, G, B \overset{\text{i.i.d.}}{\sim} \mathrm{Unif}(\{86, \dots, 170\})$ \\
        Uniform colors strong & 1000 & Sample $R, G, B \overset{\text{i.i.d.}}{\sim} \mathrm{Unif}(\{171, \dots, 255\})$ \\
        Uniform pol mid       & 1000 & Sample $\theta_1, \theta_2 \overset{\text{i.i.d.}}{\sim} \mathrm{Unif}(\{26, 26.1, \dots, 65\})$ \\
        Uniform pol strong    & 1000 & Sample $\theta_1, \theta_2 \overset{\text{i.i.d.}}{\sim} \mathrm{Unif}(\{66, 66.1, \dots, 105\})$ \\
        Uniform led mid       & 1000 & Sample $L_{11}, L_{12}, L_{21}, L_{22}, L_{31}, L_{32} \overset{\text{i.i.d.}}{\sim} \mathrm{Unif}(\{171, \dots, 255\})$ \\
        \bottomrule
    \end{tabular}
    }
\end{table}

\begin{table}[ht!]
    \centering
    \caption{RCAEval data information. RCAEval contains benchmark failure cases from three microservice systems: Online Boutique (OB), Sock Shop (SS), and Train Ticket (TT).}
    \label{tab:rcaeval_info}
    \small
    \setlength{\tabcolsep}{4pt}
    \renewcommand{\arraystretch}{1.12}
    \resizebox{\linewidth}{!}{
    \begin{tabular}{llllcc}
        \toprule
        \textbf{Dataset} & \textbf{System} & \textbf{Fault Types} & \textbf{Telemetry Data} & \textbf{Cases} & \textbf{\# Metrics} \\
        \midrule
        RE1-OB & Online Boutique & CPU, MEM, DISK, DELAY, LOSS & Metrics only & 125 & 49--59 \\
        RE1-SS & Sock Shop       & CPU, MEM, DISK, DELAY, LOSS & Metrics only & 125 & 57--63 \\
        RE1-TT & Train Ticket    & CPU, MEM, DISK, DELAY, LOSS & Metrics only & 125 & 198--238 \\
        \midrule
        RE2-OB & Online Boutique & CPU, MEM, DISK, DELAY, LOSS, SOCKET & Metrics, logs, traces & 90 & 69--77 \\
        RE2-SS & Sock Shop       & CPU, MEM, DISK, DELAY, LOSS, SOCKET & Metrics, logs & 90 & 74--82 \\
        RE2-TT & Train Ticket    & CPU, MEM, DISK, DELAY, LOSS, SOCKET & Metrics, logs, traces & 90 & 340--376 \\
        \bottomrule
    \end{tabular}
    }
\end{table}

\textbf{RCAEval.} RCAEval is an open-source benchmark for RCA in microservice systems \cite{pham2025rcaeval}. It contains 735 real failure cases collected from three representative systems: Online Boutique, Sock Shop, and Train Ticket. The benchmark covers 11 fault types, including resource, network, and code-level failures, and provides annotated root-cause services and root-cause indicators for each case. RCAEval is organized into three suites: RE1 contains metric-only data, while RE2 and RE3 provide multi-source telemetry, including metrics, logs, and traces, to support metric-based, trace-based, and multi-source RCA evaluation.

\begin{table}[ht!]
\centering
\small
\setlength{\tabcolsep}{4pt}
\renewcommand{\arraystretch}{1.08}
\caption{Top-1 accuracy on five real-world RCA benchmarks. We report mean $\pm$ standard deviation across benchmark cases.}
\label{tab:real_world_precision_std}
\resizebox{\linewidth}{!}{
\begin{tabular}{lccccc}
\toprule
\textbf{Method} & \textbf{ProRCA} & \textbf{SockShop} & \textbf{CausalChamber} & \textbf{CausalMan} & \textbf{RCAEval} \\
\midrule
Score Ordering              & $0.55{\pm}0.50$ & $0.47{\pm}0.50$ & $0.82{\pm}0.38$ & $0.29{\pm}0.45$ & $0.53{\pm}0.50$ \\
Smooth Traversal            & $0.75{\pm}0.43$ & $0.46{\pm}0.50$ & $\mathbf{1.00{\pm}0.00}$ & $\mathbf{0.43{\pm}0.50}$ & $0.00{\pm}0.00$ \\
Traversal                   & $0.75{\pm}0.43$ & $0.48{\pm}0.50$ & $\mathbf{1.00{\pm}0.00}$ & $0.34{\pm}0.48$ & $0.00{\pm}0.00$ \\
Cholesky                    & $0.80{\pm}0.40$ & $0.71{\pm}0.45$ & $0.12{\pm}0.33$ & $0.00{\pm}0.00$ & $0.67{\pm}0.47$ \\
CIRCA                       & $0.25{\pm}0.43$ & $0.63{\pm}0.48$ & $0.16{\pm}0.37$ & $0.00{\pm}0.00$ & $\mathbf{0.69{\pm}0.46}$ \\
Counterfactual Contribution & $0.50{\pm}0.50$ & $0.08{\pm}0.27$ & $0.26{\pm}0.44$ & $0.11{\pm}0.32$ & $0.00{\pm}0.00$ \\
RCD                         & $0.70{\pm}0.46$ & $0.53{\pm}0.50$ & $0.78{\pm}0.42$ & $0.41{\pm}0.49$ & $0.50{\pm}0.50$ \\
$\epsilon$-Diagnosis        & $0.05{\pm}0.22$ & $0.14{\pm}0.34$ & $0.00{\pm}0.00$ & $0.00{\pm}0.00$ & $0.12{\pm}0.33$ \\
BARO                        & $0.55{\pm}0.50$ & $0.74{\pm}0.44$ & $0.08{\pm}0.26$ & $0.00{\pm}0.00$ & $0.61{\pm}0.49$ \\
RCG-0                       & $0.50{\pm}0.50$ & $0.62{\pm}0.49$ & $\mathbf{1.00{\pm}0.00}$ & $0.00{\pm}0.00$ & $0.40{\pm}0.49$ \\
\ourmethod                  & $\mathbf{1.00{\pm}0.00}$ & $\mathbf{0.77{\pm}0.42}$ & $0.76{\pm}0.43$ & $0.40{\pm}0.49$ & $0.66{\pm}0.47$ \\
\bottomrule
\end{tabular}
}
\end{table}

\begin{figure}[ht!]
    \centering
    \includegraphics[width=\linewidth]{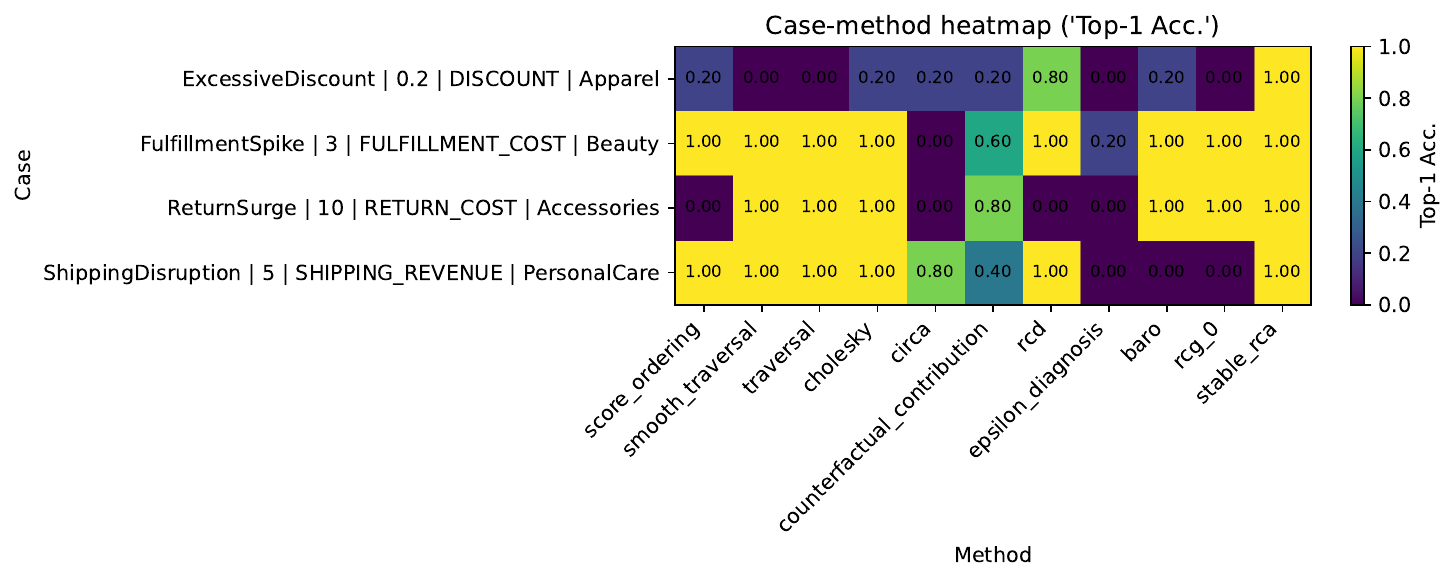}
    \caption{Case-level performance heatmap of different methods on ProRCA.}
    \label{fig:prorca_heatmap}
\end{figure}

\begin{figure}[ht!]
    \centering
    \includegraphics[width=\linewidth]{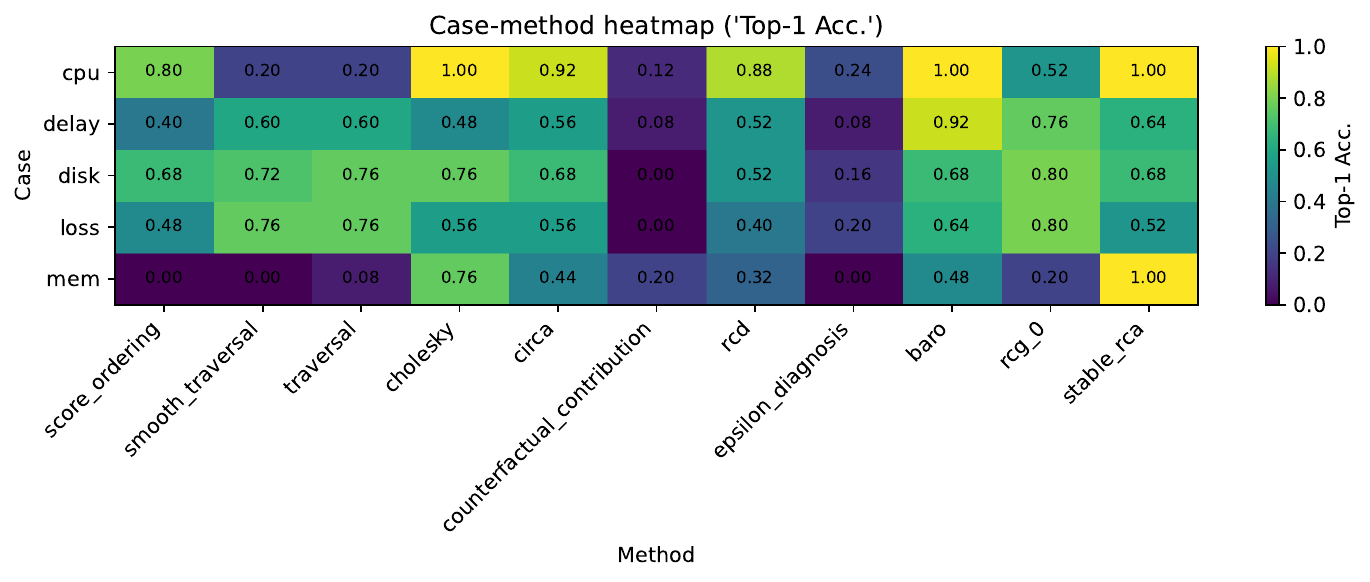}
    \caption{Case-level performance heatmap of different methods on Sock-shop.}
    \label{fig:sockshop_heatmap}
\end{figure}

\begin{figure}[ht!]
    \centering
    \includegraphics[width=\linewidth]{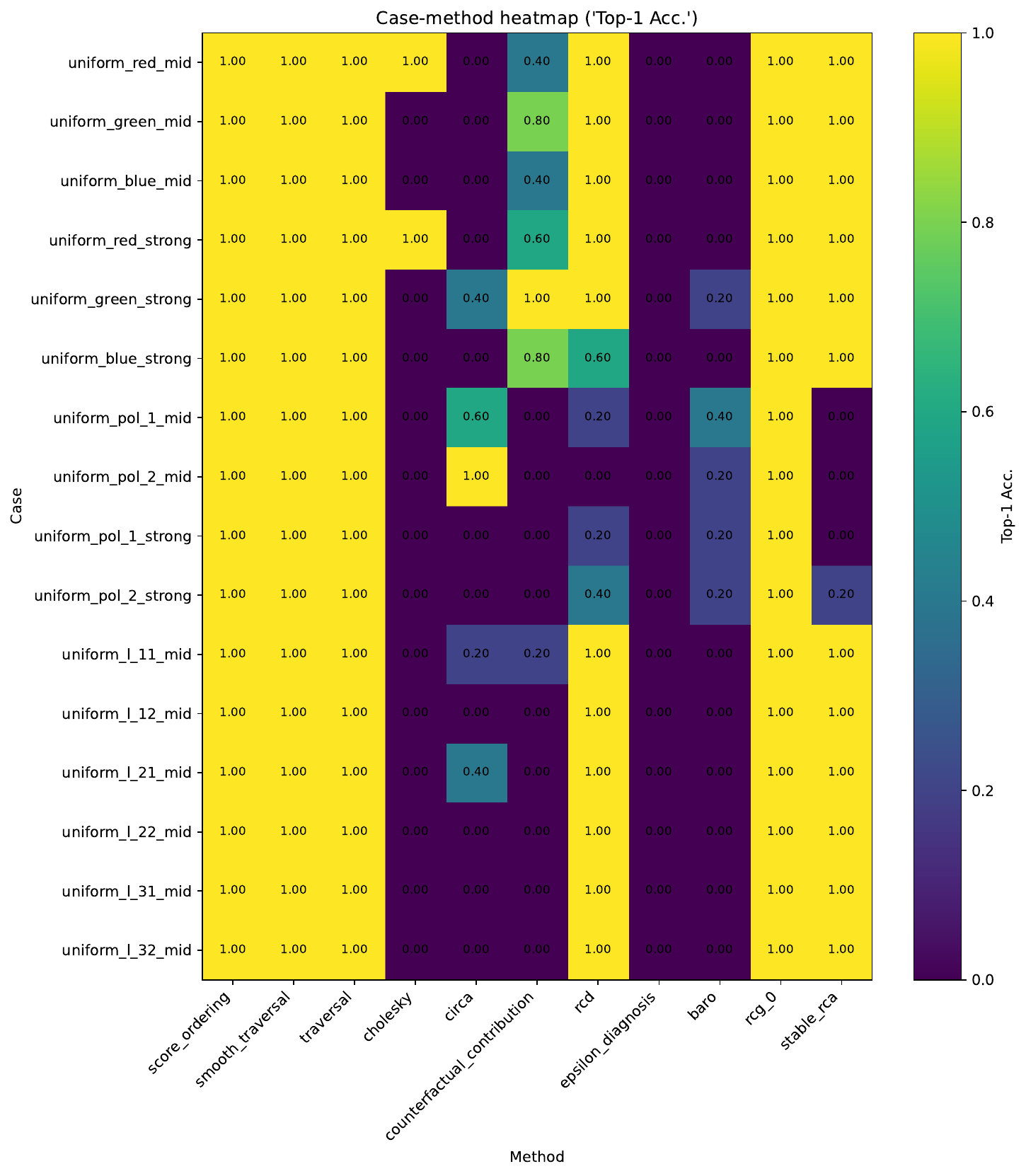}
    \caption{Case-level performance heatmap of different methods on Causal-Chamber.}
    \label{fig:causalchamber_heatmap}
\end{figure}

\begin{figure}[ht!]
    \centering
    \includegraphics[width=\linewidth]{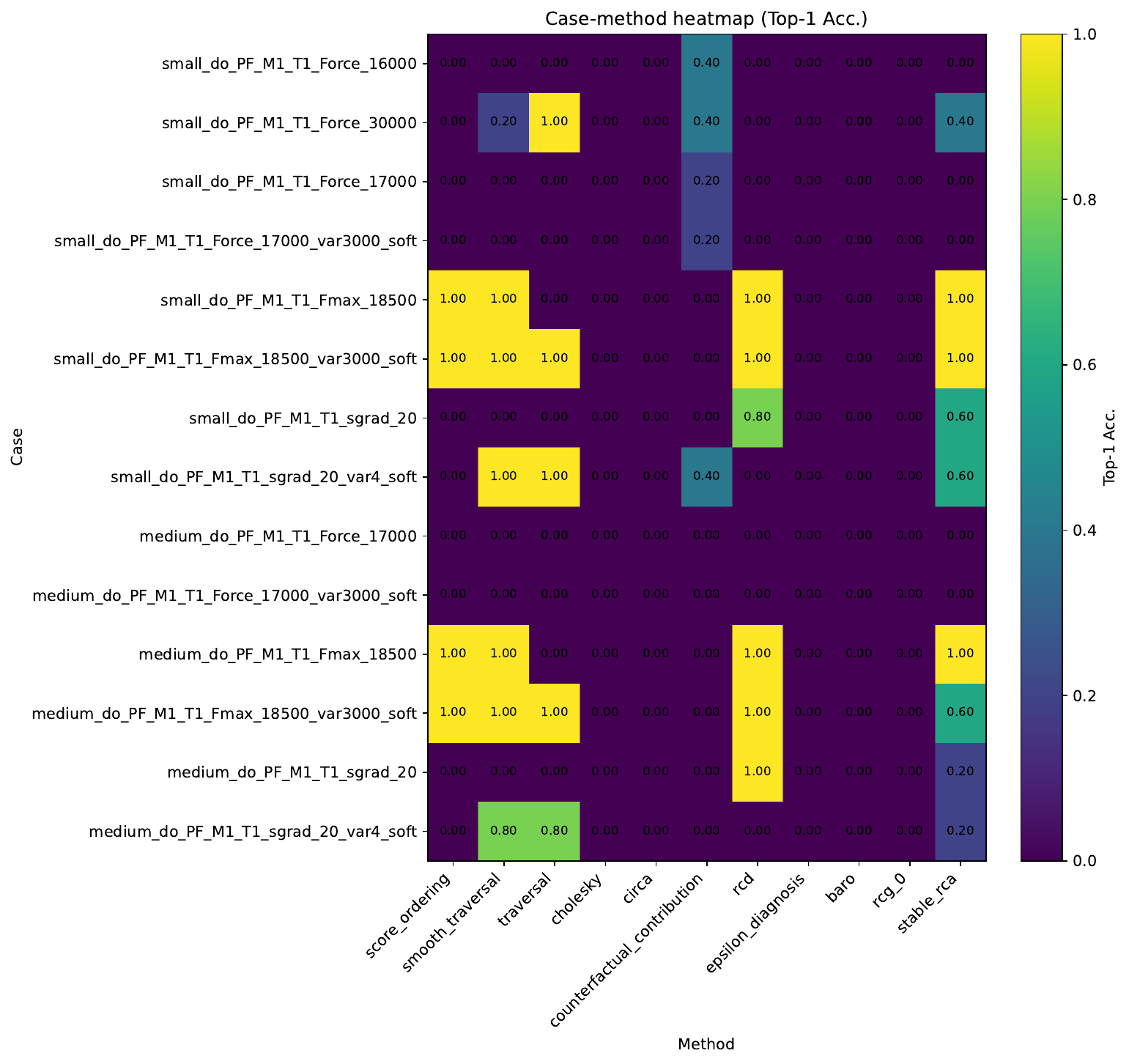}
    \caption{Performance heatmap of different methods on CausalMan.}
    \label{fig:causalman_heatmap}
\end{figure}

\begin{figure}[ht!]
    \centering
    \includegraphics[width=\linewidth]{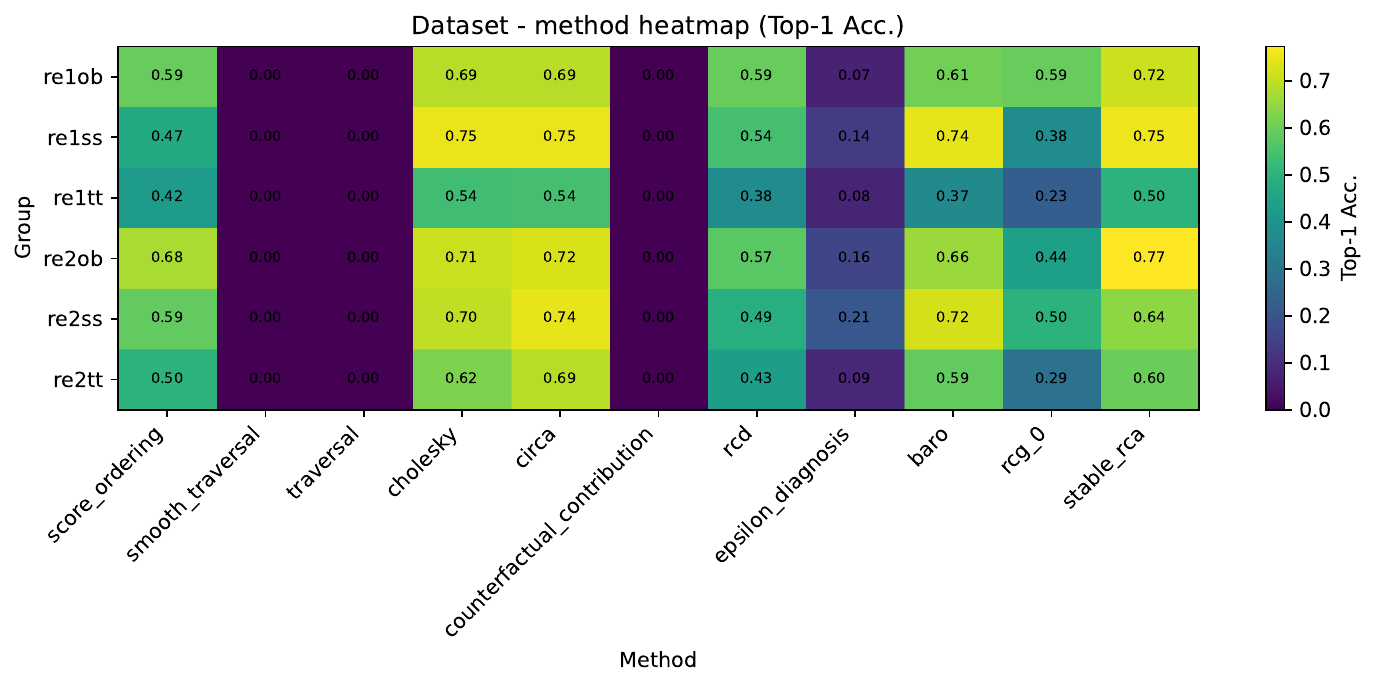}
    \caption{Performance heatmap of different methods on RCAEval, grouped by dataset.}
    \label{fig:rcaeval_dataset}
\end{figure}

\begin{figure}[ht!]
    \centering
    \includegraphics[width=\linewidth]{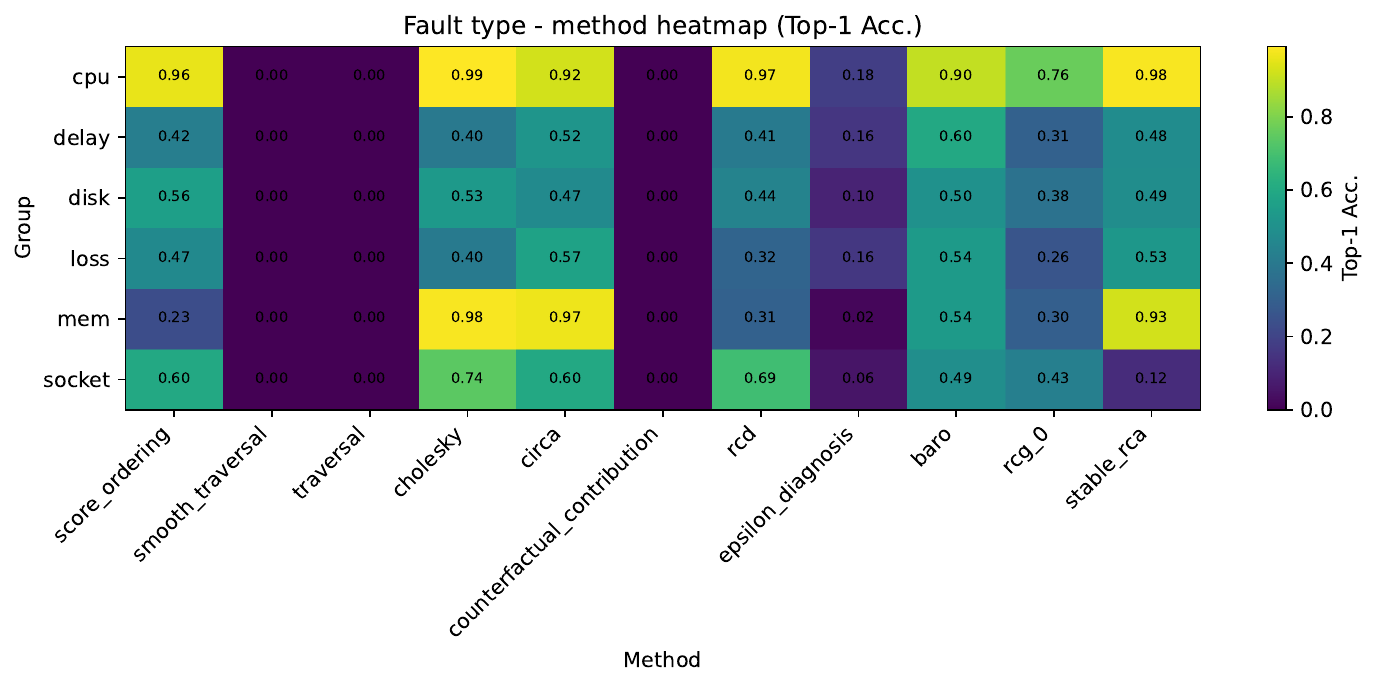}
    \caption{Performance heatmap of different methods on RCAEval, grouped by fault type.}
    \label{fig:rcaeval_fault}
\end{figure}

\section{Baseline Models}\label{appendix:baseline_models}

\begin{table}[t]
\centering
\caption{Comparison of representative root cause analysis (RCA) methods.}
\label{tab:rca_comparison}
\small
\setlength{\tabcolsep}{5pt}
\renewcommand{\arraystretch}{1.18}
\resizebox{\linewidth}{!}{
\begin{tabularx}{\linewidth}{
    l
    c
    c
    c
    >{\raggedright\arraybackslash}X
}
\toprule
\textbf{Method} 
& \textbf{Paradigm} 
& \textbf{Need Causal Graph} 
& \textbf{Analysis Level} 
& \textbf{Key Assumptions} \\
\midrule
$\varepsilon$-Diagnosis 
& Statistical 
& $\times$ 
& Population 
& Root causes exhibit distinguishable statistical properties between normal and abnormal periods \\

BARO
& Statistical
& $\times$
& Population
& Root-cause metrics exhibit detectable distributional changes between normal and abnormal periods \\

Traversal 
& Causal 
& \checkmark 
& Sample / Population 
& Anomalies propagate along causal paths from the outcome variable \\

CI-RCA 
& Causal 
& \checkmark 
& Sample / Population 
& DAG, causal sufficiency, and faithfulness \\

RCD 
& Causal 
& $\times$ 
& Population 
& DAG, causal sufficiency, and faithfulness \\

Counterfactual 
& Causal 
& \checkmark 
& Sample 
& Known and invertible structural equations \\

Cholesky 
& Causal 
& $\times$ 
& Sample 
& Linear model, single intervention, and single root cause \\

Smooth Traversal 
& Causal 
& \checkmark 
& Sample 
& Single root cause and causal sufficiency \\

Smooth Ordering 
& Causal 
& $\times$ 
& Sample 
& Single root cause, unknown polytree structure, and causal sufficiency \\

RCG-0
& Causal
& Partial / learned
& Population
& Causal sufficiency, faithfulness, failure as intervention, and reliable marginal-CI-based partial graph estimation \\
\bottomrule
\end{tabularx}
}
\end{table}

We compare our method against a diverse set of representative root cause analysis (RCA) baselines, covering graph-based, statistics-based, and model-based approaches.

\textbf{$\epsilon$-Diagnosis\citep{10.1145/3308558.3313653}.} $\epsilon$-Diagnosis is a root cause localization method originally developed for microservice-based systems. It localizes root causes by performing energy-distance-based two-sample tests between normal and anomalous windows for each container-metric pair.
Metrics whose marginal distributions exhibit statistically significant deviations are reported as root-cause candidates.

\textbf{Traversal\citep{6848128, 10.1007/978-3-030-03596-9_1,liu2021microhecl}.} Traversal operates on a given causal graph and localizes root causes by selecting anomalous nodes that are upstream of the target, have no anomalous parents, and lie on paths composed solely of anomalous nodes.

\textbf{CI-RCA\citep{10.1145/3534678.3539041}.} CIRCA leverages a known causal graph to fit a linear structural model on normal data and detects root causes as nodes whose conditional behavior, given their parent variables, deviates most strongly during anomalies.

\textbf{RCD\citep{10.5555/3600270.3602529}.}
RCD formulates root cause analysis as a causal discovery problem by introducing a dummy fault node and identifying variables directly connected to it via conditional-independence-based structure learning on normal and anomalous data.

\textbf{Counterfactual\citep{budhathoki2022causal}.} Counterfactual uses a given SCM to quantify each node’s counterfactual contribution to the target anomaly via Shapley-based attribution analysis.

\textbf{Cholesky\citep{10.1093/jrsssb/qkaf066}.} Cholesky assumes a linear SEM and localizes root causes by analyzing the Cholesky decomposition of the covariance matrix under different variable permutations.

\textbf{Smooth Traversal\citep{orchard2025root}.} Smooth Traversal localizes the root cause by selecting the node with the largest positive difference between its anomaly score and that of its highest-scoring parent in the causal graph.

\textbf{Score Ordering\citep{orchard2025root}.} Score orderring identifies root causes by directly ranking variables according to their information-theoretic anomaly scores without using any structural information.

\textbf{RCG-0 \citep{ikram2025partialrca}}. RCG-0 is the $k=0$ variant of RCG that learns a partial causal structure using only marginal CI tests, then ranks root-cause candidates by CMI conditioned on possible parents, but its dense graph can force large conditioning sets and obscure the true root cause.

\textbf{BARO \citep{pham2024baro}.} BARO is an end-to-end microservice troubleshooting method that combines Multivariate Bayesian Online Change Point Detection for anomaly detection with a nonparametric statistical hypothesis-testing scorer to rank root-cause metrics and services.

\section{Detailed Experimental Results on Real-World Datasets}

Here we present the detailed experimental results on the five real-world datasets. Table~\ref{tab:real_world_precision_std} shows that \ourmethod\ achieves strong and stable Top-1 accuracy across five real-world RCA benchmarks. It obtains the best performance on ProRCA and SockShop, and remains competitive on CausalChamber, CausalMan, and RCAEval. Several baselines are competitive on particular datasets but degrade on others. For example, Cholesky performs well on ProRCA, SockShop, and RCAEval, but drops sharply on CausalChamber and CausalMan. BARO is competitive on SockShop and RCAEval, but performs poorly on CausalChamber and CausalMan. Traversal-based methods achieve perfect accuracy on CausalChamber, but fail on RCAEval. In contrast, \ourmethod\ maintains consistently strong performance across all five datasets, suggesting better cross-domain robustness under heterogeneous real-world RCA scenarios. Figure~\ref{fig:prorca_heatmap}-Figure~\ref{fig:rcaeval_fault} report the detailed results on each case across datasets.




\end{document}